\global\long\def\real{\mathbb{R}}
\global\long\def\P{\mathbb{P}}
\global\long\def\E{\mathbb{E}}
\global\long\def\mA{\mathcal{A}}
\global\long\def\mS{\mathcal{S}}
\global\long\def\pone{\ensuremath{\mathsf{P1}}}
\global\long\def\ptwo{\ensuremath{\mathsf{P2}}}
\newcommand{\vnn}{V_{\textup{NN}}}
\newcommand{\pinn}{\pi_{\textup{NN}}}
\newcommand{\mF}{\mathcal{F}}
\theoremstyle{plain}
\newtheorem{thm}{Theorem}
\theoremstyle{plain}
\newtheorem{defn}[thm]{Definition}
\theoremstyle{plain}
\newtheorem{lem}[thm]{Lemma}
\theoremstyle{plain}
\newtheorem{property}{Property}
\theoremstyle{plain}
\theoremstyle{definition}
\theoremstyle{plain}
\newtheorem{prop}[thm]{Proposition}
\newtheorem{assumption}{Assumption}
 \title{On Reinforcement Learning for Turn-based Zero-sum\\ Markov Games}
\author{
  Devavrat Shah \\
  MIT \\
  \texttt{devavrat@mit.edu} \\
  \and
  Varun Somani \\
  Cornell University \\
  \texttt{vs472@cornell.edu} \\
  \and
  Qiaomin Xie\\
 Cornell University \\
  \texttt{qiaomin.xie@cornell.edu} \\
  \and
  Zhi Xu\\
  MIT \\
  \texttt{zhixu@mit.edu} 
}
\date{}
\begin{document}
\maketitle
\makeatother

\begin{abstract}
  We consider the problem of finding Nash equilibrium for two-player turn-based zero-sum
games. Inspired by the AlphaGo Zero (AGZ) algorithm~\citep{silver2017mastering}, we 
develop a Reinforcement Learning based approach. 
Specifically, we propose  Explore-Improve-Supervise (EIS) method that 
combines ``exploration'', ``policy improvement'' and ``supervised learning'' to find the
value function and policy associated with Nash equilibrium. We identify sufficient conditions for convergence and correctness
for such an approach. For a concrete instance of EIS where random policy is used for 
``exploration'', Monte-Carlo Tree Search is used for ``policy improvement'' and Nearest Neighbors is used for ``supervised learning'', we establish that this method finds an $\varepsilon$-approximate value function of Nash equilibrium 
in {$\widetilde{O}(\varepsilon^{-(d+4)})$} steps when the underlying state-space of the game is continuous and $d$-dimensional. This is nearly optimal as we establish 
a lower bound of {$\widetilde{\Omega}(\varepsilon^{-(d+2)})$} for any policy.

\end{abstract}

\section{Introduction} \label{sec:intro}

In 2016, {AlphaGo}~\citep{silver2016go} became the first program to defeat the world champion in 
the game of Go. Soon after, another program, {AlphaGo Zero}~(AGZ)~\citep{silver2017mastering}, achieved even stronger performance despite learning the game from scratch given only the rules. Starting {\em tabula rasa}, AGZ mastered the game of Go entirely through self-play using a new reinforcement learning algorithm. The same algorithm was shown to achieve superhuman performance in Chess and Shogi~\citep{silver2017chess}.

One key innovation of AGZ is to learn a policy and value function using supervised learning from samples generated via Monte-Carlo Tree Search. 
Motivated by the remarkable success of this method, in this work we study the problem of finding Nash Equilibrium for two-player turn-based zero-sum games and in particular consider a  reinforcement learning based approach. 

\medskip
\noindent{\bf Our Contributions.} The central contribution of this work is the Explore-Improve-Supervise (EIS) method for finding Nash Equilibrium for two-player turn-based  zero-sum games with \emph{continuous} state space, modeled through the framework of Markov game. It is an iterative method where in each iteration three components are intertwined carefully: ``explore'' that allows for measured exploration of the state space, ``improve'' which allows for improving the current value and policy for the state being explored, and ``supervise'' which learns the improved value and policy over the explored states so as to generalize over the entire state space.

Importantly, we identify sufficient conditions, in terms of each of the ``explore'', ``improve'' and ``supervise'' modules, under which convergence to the {value function of the} Nash equilibrium is guaranteed. In particular, we establish finite sample complexity bounds for such a generic method to find the $\varepsilon$-approximate value function of 
Nash equilibrium. See Theorem \ref{thm:eis_convergence} and Proposition \ref{prop:eis_sample_complexity} for the precise statements. 

We establish that when random sampling is used for ``explore'', Monte-Carlo-Tree-Search (MCTS) is used
for ``policy improvement'' and Nearest Neighbor is used for ``supervised learning'', the theoretical conditions
identified for convergence of EIS policy are satisfied. Using our finite sample bound for EIS
policy, and quantification of conditions as stated above, we conclude that such an instance of
EIS method find $\varepsilon$ approximate value function of  Nash equilibrium in {$\tilde{O}\big(\varepsilon^{-(d+4)})$} steps, where $d$ is the dimension of the state space
of the game (cf.\ Theorem~\ref{thm:informal_example_converge}). 
We also establish a mini-max lower bound on the number of steps required for learning $\varepsilon$-approximate value function of Nash equilibrium as {$\tilde{\Omega}\big(\varepsilon^{-(d+2)})$} for any method (cf.\ Theorem \ref{thm:lower_bound}). This
establishes near-optimality of an instance of EIS.

\medskip
\noindent{\bf Related Work.} The Markov Decision Processes (MDP) provide a canonical framework to study the single-agent setting. Its natural
extension, the Markov Games, provide a canonical framework to study multi-agent settings~\citep{littman1994markov}. In this work, we
consider an instance of it---turn-based two players or agents with zero-sum rewards. 
Analogous to learning the optimal policy in MDP setting, here we consider finding the Nash Equilibrium in the setting of Markov Games. There has
been a rich literature on existence, uniqueness as well as algorithms for finding the Nash Equilibrium. In
what follows, we describe the most relevant literature in that regard. 

To start with, in \cite{shapley1953stochastic}, for finite state and action spaces where game would terminate in a finite number of stages with positive probability, the existence of optimal stationary strategies and uniqueness of the optimal value function are established. For generic state space, the existence of Nash Equilibrium has been established for Markov Games with discounted rewards. Particularly, when the state space is a compact metric space, \cite{maitra1970stochastic,maitra1971stochastic2} and \cite{parthasarathy1973stochastic} show the uniqueness of value function and existence of optimal stationary policy. The same result has been established by \cite{kumar1981stochastic}
when the state space is complete, separable metric space.
For two-player zero-sum discounted Markov games, the Bellman operator corresponding to the Nash equilibrium is a contraction and hence, the value function is unique and  there exists a deterministic stationary optimal policy~\citep{szepesvari1996game,Hansen2013Turn_Game}. We also note that existence of Nash equilibrium for a general class of games (stochastic shortest path) is established by \cite{Patek1997SSPG}. It argues that the optimal value function is unique and can be achieved by mixed stationary strategies. 

For computing or finding optimal value function and policy associated with the Nash equilibrium, there are two settings
considered in the literature: (i) when system model is entirely known, and (ii) when model is not known but one can sample
from the underlying model. In the first setting, classical approaches from the setting of MDPs  such as value/policy iteration are adapted to find the
optimal value function or policy associated with the Nash equilibrium~\citep{Patek1997SSPG, Hansen2013Turn_Game}. 
In the second setting which is considered here, various approximate dynamic programming algorithms have been proposed~\citep{szepesvari1996game,bowling2001rational,littman2001qlearning,littman2001value,hu1998multiagent,hu2003nash,lagoudakis2002value,Perolat15Game,perolat2016API}. More recently, with the advance of deep reinforcement learning~\citep{mnih2015human,lillicrap2015continuous,schulman2015trust,schulman2017proximal,yang2019harnessing}, recent work approximates the value function/policy by deep neural networks~\citep{silver2016go,silver2017chess,silver2017mastering}.


In terms of theoretical results, there has been work establishing asymptotic convergence to  the optimal value function 
when the state space is finite. For example, Q-learning for MDP adapted to the setting of two-player zero-sum games asymptotically converges~\citep{szepesvari1996game}.
Non-asymptotic results are  available for model-based algorithms developed for Markov games with finite states, including R-max algorithm~\citep{brafman2002Rmax} and an algorithm that extends upper confidence reinforcement learning algorithm~\citep{wei2017game}. Recent work by~\cite{sidford2019solving} provides an algorithm that computes an $\varepsilon$-optimal strategy with near-optimal sample complexity for Markov games with finite states. For Markov games where the transition function can be embedded in a given feature space, the work by~\cite{jia2019featurebased} analyzes the sample complexity of a Q-learning algorithm. 
However, non-asymptotic or finite sample analysis for continuous state space without a special structure, such as that considered in this work, receives less attention in the literature.

\medskip
\noindent{\bf Comparison with Prior Work.} In this work, we develop Explore-Improve-Supervise (EIS) policy 
when the model is unknown, but one is able to sample from the underlying model. We study the convergence
and sample complexity of our approach. 
{Our goal is to provide a \emph{formal} study on the general framework of EIS. The overall framework is inspired by AlphaGo Zero and inherits similar components. However, we take an important step towards bridging the gap between sound intuitions and theoretical guarantees, which is valuable for a better understanding on applying or extending this framework with different instantiations.
{We note that EIS bears certain similarities with another AlphaGo-inspired study~\citep{shah2019mcts}. Both works follow the main idea of coupling improvements with supervised learning. However, there are major differences. \citet{shah2019mcts} focus on approximating value function in deterministic MDPs and only studies a particular instance of the modules. In contrast, we focus on a broader class of algorithms, formulating general principles and studying the guarantees. This poses different challenges and requires generic formulations on properties of the modules that are technically precise and practically implementable. }

Finally, as mentioned previously, non-asymptotic analysis for continuous state space, considered in this work, is scarce for Markov games. 
While there are some results for finite states, the bounds are not directly comparable. For example, the complexity in \cite{Perolat15Game} depends on some oracle complexities for linear programming and regression. 

For the setting with continuous state space, the sample complexity results in \cite{jia2019featurebased} {for Q-learning} rely on the assumption of linear structure of the transition kernel. The recent work by \cite{yang2019deepDQN} studies the finite-sample performance of minimax deep Q-learning for two-player zero-sum games, where the convergence rate depends on the family of neural networks. We remark that these belong to a different class of algorithms.
We also derive a fundamental mini-max lower bound on sample-complexity for any method (cf.\ Theorem \ref{thm:lower_bound}). The lower bound is interesting on its own. Moreover, it shows near optimal dependence on dimension for an instance of our EIS framework.}

\medskip
\noindent{\bf Organization.}
{The remainder of the paper is organized as follows. We formally introduce the framework of Markov Games and Nash equilibrium in Section~\ref{sec:game}. Section~\ref{sec:generic_eis} describes a generic Explore-Improve-Supervise (EIS) algorithm. The precise technical properties for the modules of EIS are then stated in Section~\ref{subsec:properties}, under which we establish our main results, convergence and sample complexity of EIS, in Section~\ref{subsec:convergence}.
Finally, a concrete instantiation is provided in Section~\ref{sec:example}, demonstrating the applicability of the generic EIS algorithm.
All the proofs are presented in Appendices.}

\section{Two-Player Markov Games and Nash Equilibrium}\label{sec:game}

We introduce the framework of Markov Games (MGs) (also called Stochastic Games~\citep{shapley1953stochastic}) with two players and zero-sum rewards. The goal in this setting is to learn the Nash equilibrium.

\subsection{Two-player Zero-sum Markov Game}
We consider two-player \emph{turn-based} Markov games like Go and Chess, where players take turns to make decisions. 
We denote the two players as $\pone$ and $\ptwo$. Formally, a Markov game can be expressed as a 
tuple $ (\mS_1, \mS_2,\mA_1, \mA_2, r, P, \gamma), $ where $\mS_1$ and $\mS_2$ are the set of states controlled by $\pone$ and $\ptwo$ respectively, $\mA_{1} $ and $\mA_2$ are the set of actions players can take,  $r$ represents reward function, $P$ represents transition kernel and $\gamma \in [0,1)$
is the discount factor. Specifically, for $i=1,2,$ let $\mA_{i}(s)$ be the set of feasible actions for player $i$ in a given state $s\in\mS_{i}$.
We assume that $\mS_{1}\cap\mS_{2}=\emptyset$\footnote{This 
assumption is typical for turn-based games. In particular, one can incorporate the ``turn'' of 
the player as part of the state, and thus \pone's state space (i.e. when it is 
player 1's turn) is disjoint from that of \ptwo's turn. }. 

Let $\mS=\mS_{1}\cup\mS_{2}$.
For each state $s\in\mS,$ let $I(s)\in\{1,2\}$ indicate the current player to play. At state $s$, upon taking action $a$ by the corresponding player $I(s)$, player $i \in \{1,2\}$ receives a reward $r^{i}(s,a)$. 
In zero-sum games, $r^{1}(s,a)=-r^{2}(s,a)$. Without loss of generality, we let $\pone$ be our reference and use the notation $r(s,a)\triangleq r^1(s,a)$ for the definitions of value functions. 
 
Let $P(s,a)$ be the distribution of the new state after playing action $a$, in state $s$, by 
player $I(s)$. In this paper, we focus on the setting where the state transitions are deterministic. This means that $P(s,a)$ is supported on a single state, $s\circ a$, where $s\circ a$ denotes the state after taking action $a$ at state $s$.

For each $i\in\{1,2\}$, let $\pi_{i}$ be the policy for player $i$, where $\pi_{i}(\cdot|s)$ is 
a probability distribution over $\mA_{i}(s)$. Denote by $\Pi_{i}$ the set of all stationary policies of 
player $i$, and let $\Pi=\Pi_{1}\times\Pi_{2}$ be the set of all polices for the game. 
A two-player zero-sum game can be seen as player $\pone$ aiming to maximize the accumulated discounted reward while $\ptwo$ attempting to minimize it. 
The value function and Q function for a zero-sum Markov game can be defined in a manner analogous to the MDP setting:
\begin{align*}
V_{\pi_{1},\pi_{2}}(s) & =\E_{a_{t},s_{t+1},a_{t+1},\ldots} \big[ \sum_{k=0}^{\infty}\gamma^{k}r(s_{t+k},a_{t+k})|s_{t}=s \big],\\
Q_{\pi_{1},\pi_{2}}(s,a)  &=\E_{s_{t+1},a_{t+1},\ldots} \big[ \sum_{k=0}^{\infty}\gamma^{k}r(s_{t+k},a_{t+k})|s_{t}=s,a_{t}=a \big],
\end{align*}
where $a_{l}\sim\pi_{I(s_{l})}(\cdot|s_{l})$ and $ s_{l+1}\sim P(s_{l},a_{l})$. That is, $V_{\pi_{1},\pi_{2}}(s)$
is the expected total discounted reward for $\pone$ if the
game starts from state $s$, players $\pone$ and $\ptwo$ use the policies $\pi_{1}$ and $\pi_{2}$ respectively. The interpretation for Q-value is similar.  

To simplify the notation, we assume that $\mathcal{A}_1=\mathcal{A}_2\triangleq \mathcal{A}$, where $\mathcal{A}$ is a finite set. We consider $\mathcal{S}$ to be a compact subset of $\mathbb{R}^d$ ($d \geq 1$). The rewards $r(s,a)$ are independent random variables taking value in $[-R_{\max}, R_{\max}]$ for some $R_{\max} > 0$. Define $V_{\max}\triangleq{R_{\max}/{(1-\gamma)}}.$ It follows that absolute value of value function and Q function for any policy is bounded by $V_{\max}.$

\medskip
{
\noindent {\em Regarding Deterministic Transitions.} Let us clarify this assumption. In fact, our approach and main results of EIS framework (i.e., Sections \ref{subsec:properties} and \ref{subsec:convergence}) apply to general non-deterministic cases as well. However, the example in Section \ref{sec:example} considers deterministic cases. In particular, the improvement module is instantiated by a variant of Monte Carlo Tree Search, where a clean non-asymptotical analysis has been only established for the deterministic case \citep{shah2019mcts}. To facilitate a coherent exposition, we focus on deterministic cases here. Indeed, many games, such as Go and Chess, are deterministic. Additionally, note that one could instantiate our EIS framework with other methods for the non-deterministic cases---for instance, by adapting the sparse sampling oracle \citep{kearns2002sparse} as the improvement module---to obtain a similar analysis. {As a proof of concept, we provide empirical results in Appendix \ref{app:empirical} on a  non-deterministic game with the sparse sampling oracle.} }

\subsection{Nash Equilibrium}

\begin{algorithm*}[h]
  \caption{The Generic EIS Algorithm}
  \label{alg:eis}
\begin{algorithmic}[1]
  \STATE {\bfseries Initialization:} a supervised learning model $f_0(s)=(V_0(s),\pi_0(\cdot|s))$ for every $s\in\mathcal{S}$.
  \FOR {$l = 1,2 ,\dots, L$}
  \STATE initialize $s_1$ to be a state (e.g., the starting state of the game).
  \STATE \texttt{/*}~~\texttt{data generation via improvement and exploration}~~\texttt{*/}
  \FOR {$i=1,2,\dots, n_l$}
  \STATE query the \emph{improvement module}, which takes as inputs the current model $f_{l-1}$, and outputs estimates $\hat{V}(s_i)$ for the optimal value $V^*(s_i)$ and $\hat{\pi}(\cdot|s_i)$ for the optimal policy ${\pi}^*(\cdot|s_i)$:
  \vspace{-0.05in}
  \begin{equation}
      \big(\hat{V}(s_i),\:\: \hat{\pi}(\cdot|s_i)\big) = \textrm{Improvement Module}(f_{{l-1}},s_i)
      \vspace{-0.05in}
  \end{equation}
  \STATE query the \emph{exploration module} for the next state $s_{i+1}$:
  \vspace{-0.05in}
  \begin{equation}
      s_{i+1} = \textrm{Exploration Module}(s_i)
      \vspace{-0.1in}
  \end{equation}
  \ENDFOR
 
  \STATE \texttt{/*}~~\texttt{model update via supervised learning}~~\texttt{*/}
  \STATE query the \emph{supervised learning module} with the collected data $\mathcal{D}^{(l)}=\{(s_i,\hat{V}(s_i),\hat{\pi}(\cdot|s_i)\}_{i=1}^{n_l}$ and obtained an updated model $f_{l}(s)$ for every $s\in\mathcal{S}$.
  \vspace{-0.05in}
  \begin{equation}
      f_{l} = \textrm{Supervised Learning Module}\big(\{(s_i,\hat{V}(s_i),\hat{\pi}(\cdot|s_i)\}_{i=1}^{n_l}\big)
      \vspace{-0.1in}
  \end{equation}

  \ENDFOR
  \STATE {\bfseries Output:} final model $f_{L}$.
\end{algorithmic}
\end{algorithm*}

\begin{defn}[Optimal Counter Policy]
    Given a policy $\pi_{2}\in\Pi_{2}$, policy $\pi_{1} \in \Pi_1$
	for $\pone$ is said to be an optimal counter-policy against $\pi_{2}$,
	if and only if for every $s \in \mS$, we have
	$
	V_{\pi_{1},\pi_{2}}(s) \geq V_{\pi_{1}',\pi_{2}}(s) ,\forall\pi_{1}'\in\Pi_{1}.
	$
	Similarly, a policy $\pi_{2}\in\Pi_2$ for $\ptwo$ is said to be an optimal
	counter-policy against a given policy $\pi_{1}\in\Pi_1$ for $\pone$, if and only if for every $s \in \mS$
	$
	V_{\pi_{1},\pi_{2}}\leq V_{\pi_{1},\pi_{2}'},\forall\pi_{2}'\in\Pi_{2}.
	$
\end{defn}

In a two-player zero-sum game, it has been shown that the pairs of optimal policies coincides with the Nash equilibrium of this game~\citep{maitra1970stochastic,parthasarathy1973stochastic,kumar1981stochastic}.
In particular, a pair of policies $(\pi_{1}^{*},\pi_{2}^{*})$
is called an equilibrium solution of the game, if $\pi_{1}^{*}$ is
an optimal counter policy against $\pi_{2}^{*}$ and $\pi_{2}^{*}$
is an optimal counter policy against $\pi_{1}^{*}$. The value function of the optimal policy, 
$V_{\pi_1^*,\pi_2^*},$ is the \emph{unique} fixed point of a $\gamma$-contraction operator.  
In the sequel, we will simply refer to the strategy $\pi^*=(\pi_1^*,\pi_2^*)$ as the optimal policy. Finally, we use the concise notation $V^*$ and $Q^*$ to denote the optimal value function and the optimal Q-value, respectively, i.e., $V^*(s)=V_{\pi^*_1,\pi^*_2}(s)$ and $Q^*(s,a)=Q_{\pi^*_1,\pi^*_2}(s,a)$.

\section{EIS: Explore-Improve-Supervise}
\label{sec:generic_eis}

We describe Explore-Improve-Supervise (EIS) algorithm for learning the optimal value function $V^*$ and optimal policy $\pi^*$. The algorithm consists of three separate, but intertwined modules: exploration, improvement and supervised learning. Below is a brief summary of these modules. The precise, formal description of properties desired from these modules is stated in Section~\ref{subsec:properties}, which will lead to convergence and correctness of the EIS algorithm as stated in Theorem~\ref{thm:eis_convergence}. Section \ref{sec:example} provides a concrete example of modules of EIS satisfying properties stated in Section \ref{subsec:properties}.

{\bf Exploration Module.} To extract meaningful information for the entire game, sufficient exploration is required so that enough \emph{representative} states will be visited. This is commonly achieved by an appropriate exploration policy, such as $\epsilon$-greedy policy and Boltzmann policy. We require the existence of an exploration module guaranteeing sufficient exploration.

{\bf Improvement Module.} For the overall learning to make any progress, the improvement module improves the existing estimates of the optimal solution. In particular, given the current estimates $\hat{V}$ for $V^*$ and $\hat{\pi}$ for $\pi^*$, for a state $s$, a query of the improvement module produces better estimates $\hat{V}'(s)$ and $\hat{\pi}'(\cdot|s)$ that are \emph{closer} to the optimal $V^*(s)$ and $\pi^*(\cdot|s)$.

{\bf Supervised Learning Module.} The previous two modules can be collectively viewed as a data generation process: the exploration module samples sufficient representative states, while a query of the improvement module provides improved estimates for the optimal value and policy. With these as training data, supervised learning module would learn and generalize the improvement of the training data to the entire state space. Subsequently, the trained supervised learning module produces better estimates for $V^*$ and $\pi^*$. 

Combining together, the three modules naturally lead to the following iterative algorithm whose pseudo-code is provided in Algorithm \ref{alg:eis}. Initially, the algorithm starts with an arbitrary model for value function and
policy. In each iteration $l \geq 1$, it performs two steps:

{\em Step 1. Data Generation.} Given current model $f_{l-1}=(V_{l-1},\pi_{l-1})$: for current state $s$, 
query the improvement module to obtain better estimates $\hat{V}(s)$  and $\hat{\pi}(\cdot|s)$ than 
the current estimates $f_{l-1}(s)$; and then query the exploration module to arrive at the next state $s'$; 
repeat the above process to obtain training data of $n$ samples, $\{(s_i,\hat{V}(s_i),\hat{\pi}(\cdot|s_i))\}_{i=1}^n$. 
    
{\em Step 2. Supervised Learning.} Given the improved estimates  $\{(s_i,\hat{V}(s_i),\hat{\pi}(\cdot|s_i))\}_{i=1}^n$, use the supervised learning module to build a new model $f_{l} = (V_l, \pi_l)$.



Intuitively, the iterative algorithm keeps improving our estimation after each iteration, and eventually converges to optimal solutions. The focus of this paper is to \emph{formally} understand under what conditions on each of 
the exploration, improvement and supervised learning module does the algorithm work.
Of course, proof is in the puddling---we provide examples of existence of such modules in Section \ref{sec:example}.

\section{Properties of Modules} 
\label{subsec:properties}

In this section we formally state the desired properties of each of the three modules of EIS. With these properties, we establish convergence and correctness of EIS algorithm in Section \ref{subsec:convergence} to follow. {We remark that the properties are not made for the ease of technical analysis. Examples satisfying them shall be provided in Section \ref{sec:example}.}

\subsection{Improvement Module}
This module improves both value function and policy. The value
function is real-valued, whereas policy for each given state can be viewed as a probability distribution over
all possible actions. This requires a careful choice of metric for quantifying improvement. Let $\hat{V}(s)$ and $\hat{\pi}(\cdot|s)$ be the estimates output by the improvement module in the $l$-th iteration of EIS. Improvement of
value function means $|\hat{V}(s)-V^*(s)| < |V_l(s)-V^*(s)|$. Improvement for
policy is measured by the KL divergence between $\hat{\pi}(\cdot|s)$ and $\pi^*(\cdot|s)$. Here some care is needed as KL divergence would become infinite if supports of the distributions mismatch.

Note that the optimal policy $\pi^*$ only assigns positive probability to the optimal actions. On the other hand,  there is no guarantee that $\hat{\pi}(\cdot|s)$ always has a full support on $\mathcal{A}.$ To overcome these challenges, we instead measure the KL divergence with an alternative "optimal policy" that guarantees a full support on $\mathcal{A}$. This naturally leads to the optimal Boltzmann policy: given a temperature $\tau > 0$, the optimal Boltzmann policy is given by 
\begin{equation}
    P^*_\tau(a|s) = \frac{e^{Q^*(s,a)/\tau}}{\sum_{a'\in\mathcal{A}}e^{Q^*(s,a')/\tau}}, \quad \text{for } a\in\mathcal{A}. \label{eq:def_opt_Boltzmann}
\end{equation}
If $I(s)$ is player $\ptwo$, use $-Q^*(s,a)$ instead in the above equation to construct the Boltzmann policy (Recall that player $\pone$ is set to be our reference in Section \ref{sec:game}). By definition, $D_{\textup{KL}}\big(\hat{\pi}(\cdot|s)|| P^*_\tau(\cdot|s)\big)$ is guaranteed to be finite for any estimate $\hat{\pi}(\cdot|s)$. Furthermore, $P^*_\tau$ converges to $\pi^*$ as $\tau\rightarrow 0$. Therefore, 
we could use the KL divergence $D_{\textup{KL}}\big(\hat{\pi}(\cdot|s)|| P^*_\tau(\cdot|s)\big)$ with a small enough $\tau$ to measure the improvement of the estimates. 

Finally, it makes sense to take into account the number of samples (i.e., observed state transitions) required by the module to improve the policy and value function. 
We now formally lay down the following property for the improvement module.
\begin{property}
\label{property:improvement}
{\bf (Improvement Property)}
Suppose the current model $f(s)=(V(s),\pi(\cdot|s))$ (potentially random) has estimation errors $\varepsilon_{0,v}>0$ and $\varepsilon_{0,p}>0$ for the value and policy estimates, respectively, i.e., 
\begin{align*}
    \E\Big[||V- V^*||_\infty\Big]&\leq \varepsilon_{0,v},  \\
     \E\Big[D_{\textup{KL}}\big(\pi(\cdot|s)||P^*_\tau(\cdot|s)\big)\Big] &\leq \varepsilon_{0,p},\:\forall\:s\in\mathcal{S},
\end{align*}
where the expectations are taken with respect to the randomness of the model $f=(V,\pi)$.

Fix any state $s\in\mathcal{S}$, and query the improvement module on $s$ via $ \big(\hat{V}(s), \hat{\pi}(\cdot|s)\big) = \textrm{Improvement Module}(f,s)$.
Let the temperature be $\tau>0$, and improvement factors be $0<\zeta_{v}<1$ and $0<\zeta_{p}<1$. Then, there exists a function $\kappa(\tau,\varepsilon_{0,v},\varepsilon_{0,p},\zeta_v,\zeta_p)$ such that
if $\kappa(\tau,\varepsilon_{0,v},\varepsilon_{0,p},\zeta_v,\zeta_p)$ number of samples are used, the new estimates satisfy that
\begin{align*}
    \mathbb{E}\Big[\big|\hat{V}(s)-V^*(s)\big|\Big]&\leq\zeta_v\cdot\varepsilon_{0,v},\\
    \mathbb{E}\Big[D_{\textup{KL}}\big(\hat{\pi}(\cdot|s)||P^*_\tau(\cdot|s)\big)\Big]&\leq \zeta_p\cdot\varepsilon_{0,p},
\end{align*}
where the expectations are with respect to the randomness in the model $f$ and the improvement module. 
\end{property}
Property \ref{property:improvement} allows for a randomized improvement module, but requires that on average, the errors for the value and policy estimates should strictly shrink.

\subsection{Supervised Learning Module}
To direct the model update in an improving manner, the supervised learning step (line 10 of Algorithm \ref{alg:eis}) should be able to learn from the training data, $\hat{V}$ and $\hat{\pi}$, and generalize to unseen states by preserving the same order of error as the training data. Generically speaking, generalization would require two conditions: (1) sufficiently many training data that are ``representative'' of the underlying state space; (2) the model itself is expressive enough to capture the characteristics of the function that is desired to be learned. 

Before specifying the generalization property, let us provide a few remarks on the above conditions. Condition (1) is typically ensured by using an effective exploration module. Recall that the state space is continuous. The exploration module should be capable of navigating the space until sufficiently many different states are visited. Intuitively, these \emph{finite} states should properly cover the entire space, i.e., they are representative of the entire space so that learning from these states provide enough information for other states. Formally, this means that given the current estimation errors $\varepsilon_{1,v}$ and $\varepsilon_{1,p}$ for the optimal value and policy, there exists a sufficiently large set of $N(\varepsilon_{1,v},\varepsilon_{1,p})$ training states, 
such that supervised learning applied to those training data would generalize  to the entire state space with the same order of accuracy. The precise definition of representative states 
may depend on the particular supervised learning algorithm.

Regarding condition (2), generalization performance of traditional models has been well studied in classical statistical learning theory. More recently, deep neural networks exhibit superior empirical generalization ability,  although a complete rigorous proof seems beyond the reach of existing techniques. Our goal is to seek general principle underlying the supervised learning step and as such, we do not limit ourselves to specific models---the learning model could be a parametric model that learns via minimizing empirical squared loss and cross-entropy loss, or it could be a non-parametric model such as nearest neighbors regression. 
With the above conditions in mind, we state the following general property for the supervised learning module:
\begin{property}
\label{property:sl}
{\bf (Generalization Property) }
Let temperature $\tau>0$, estimation errors $\varepsilon_{1,v}>0$ and $\varepsilon_{1,p}>0$ be given. There exists at least one set of finite states, denoted by $\mathcal{S}(\tau,\varepsilon_{1,v},\varepsilon_{1,p})$, with size $N_\mS(\tau,\varepsilon_{1,v},\varepsilon_{1,p})$, so that the following {\bf generalization bound} holds:

Suppose that a training dataset $\big\{\big(s_{i},\hat{V}(s_{i}),\hat{\pi}(\cdot|s_{i})\big)\big\}_{i=1}^{n}$  satisfies  $\mathcal{S}(\tau,\varepsilon_{1,v},\varepsilon_{1,p})\subset\{s_i\}_{i=1}^n$ and the following error guarantees: 
\begin{align*}
\E\Big[|\hat{V}(s_{i})-V^{*}(s_{i})|\Big] & \leq\varepsilon_{1,v},\\
\E\Big[D_{\textup{KL}}\big(\hat{\pi}(\cdot|s_{i})\Vert P^{*}_\tau(\cdot|s_{i})\big)\Big] 
&\leq\varepsilon_{1,p},\:\forall\:i\in[n],
\end{align*}
where the expectation is taken with respect to the randomness of the value $\hat{V}(s_{i})$ and $\hat{\pi}(\cdot|s_{i})$. Then, 
there exist non-negative universal constants $c_{p}$ and
$c_{v}$ such that after querying the supervised learning module, i.e., $(V,\pi)=${ Supervised Learning Module}$(\{\big(s_{i},\hat{V}(s_{i}),\hat{\pi}(\cdot|s_{i})\big)\}_{i=1}^{n})$, $(V,\pi)$ satisfy
\begin{align*}
\E\Big[\big\Vert V-V^{*}\big\Vert_{\infty}\Big] 
&\leq  c_v \cdot \varepsilon_{1,v};\\
\quad \E\Big[D_{\textup{KL}}\big(\pi(\cdot|s)\Vert P^{*}_\tau(\cdot|s)\big)\Big] 
&\leq c_p \cdot \varepsilon_{1,p}, \quad \forall s \in \mS.
\end{align*}
\end{property}

\subsection{Exploration Module}
With the above development, it is now straightforward to identify the desired property of the exploration module. In particular, as part of the data generation step, it should be capable of exploring the space so that a set of representative states $\mathcal{S}(\tau,\varepsilon_{1,v},\varepsilon_{1,p})$ are visited. Consequently, the supervised learning module can then leverage the training data to generalize. Formally, let $\mathcal{E}$ be the set of all possible representative sets that satisfy the Generalization Property:
\begin{align*}
\vspace{-0.15in}
\mathcal{E}(\tau,\varepsilon_{1,v},\varepsilon_{1,p})=\Big\{\mathcal{S}(\tau,\varepsilon_{1,v},\varepsilon_{1,p})\subset \mS\::\:
\textrm{
Property \ref{property:sl} is satisfied with }\mathcal{S}(\tau,\varepsilon_{1,v},\varepsilon_{1,p}).\Big\}.    
\vspace{-0.15in}
\end{align*}
Denote by $\mathcal{T}(t)\triangleq\{s_i\}_{i=1}^t$ the set of states explored by querying the exploration module up to time $t$, with $s_1$ being the initial state and $s_{i+1}=\textrm{Exploration Module}(s_i)$ (cf. line 7 of Algorithm \ref{alg:eis}). We now state the exploration property, which stipulates that starting at an arbitrary state $s$, the explored states should contain one of the representative sets in $\mathcal{E}$, within a \emph{finite} number of steps.

\begin{property}
\label{property:exploration}
{\bf (Exploration Property) }
Given the temperature $\tau>0$, and estimation errors $\varepsilon_{1,v}>0$ and $\varepsilon_{1,p}>0$ for the value and policy, define 
\begin{align*}
\vspace{-0.15in}
T &(\tau,\varepsilon_{1,v},\varepsilon_{1,p},s) \triangleq \min\Big\{\:t\geq 1\::\:s_1=s; \exists\: \hat{\mathcal{S}}\in \mathcal{E}(\tau,\varepsilon_{1,v},\varepsilon_{1,p})\textrm{ such that }\hat{\mathcal{S}}\subset\mathcal{T}(t)\Big\}.
\vspace{-0.15in}
\end{align*}
Then, the exploration module satisfies that $\forall s\in\mS$, $$\mathbb{E}\Big[T (\tau,\varepsilon_{1,v},\varepsilon_{1,p},s)\Big]<B(\tau,\varepsilon_{1,v},\varepsilon_{1,p}),$$ for some  $B(\tau,\varepsilon_{1,v},\varepsilon_{1,p})<\infty$ independent of $s$. The above expectation is taken with respect to the randomness in the exploration module and the environment (i.e., state transitions).
\end{property}
In the sequel, when the context is clear or the initial state does not matter, we usually drop the dependence in $s$ to simplify the notation, i.e., $T(\tau,\varepsilon_{1,v},\varepsilon_{1,p})$.

\section{Main Results: Convergence Guarantees and Sample Complexity}
\label{subsec:convergence}

\subsection{Convergence Guarantees}
As the main result of this paper, we establish convergence of the EIS algorithm under the three desired properties given in Section~\ref{subsec:properties}, and quantify the corresponding finite sample complexity. 
We also provide an algorithm-independent minimax lower bound; in Section~\ref{sec:example} we introduce  an instance of EIS that essentially matches this lower bound. 

\begin{thm} \label{thm:eis_convergence}
Given a small enough $\tau>0$, let Properties \ref{property:improvement}, \ref{property:sl} and \ref{property:exploration} hold. 
Let $C_{0,v}= \| V_{0} - V^* \|_\infty$ and $C_{0,p}=\sup_{s\in \mS}D_{\textup{KL}}\big(\pi_0(\cdot|s) \Vert P^*_\tau(\cdot|s)\big)$ be initialization errors. Then for a given $\rho\in (0,1),$ with appropriate parameters for Algorithm~\ref{alg:eis}, the output $f_{L}=(V_L,\pi_L)$ after $L$-th iteration satisfies
\begin{align}
    \E\Big[\big\Vert V_{L} -V^* \big\Vert_\infty \Big] &\leq C_{0,v}\rho^L,\label{eq:thm_eis_convergence_value}\\
    \E\Big[D_{\textup{KL}}\big(\pi_{L}(\cdot|s) \,\Vert\, P_\tau^{*}(\cdot|s)\big)\Big] &\leq  C_{0,p}\rho^L, \quad \forall s \in \mS.  \label{eq:thm_eis_convergence_policy}
\end{align}
\end{thm}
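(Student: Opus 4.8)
The plan is to run an induction on the iteration index $l$, showing that each pass through the ``data generation + supervised learning'' loop contracts the value and policy errors by a fixed factor $\rho$. Concretely, suppose at the start of iteration $l$ the current model $f_{l-1}=(V_{l-1},\pi_{l-1})$ satisfies $\E[\|V_{l-1}-V^*\|_\infty]\le C_{0,v}\rho^{l-1}$ and $\E[D_{\textup{KL}}(\pi_{l-1}(\cdot|s)\|P^*_\tau(\cdot|s))]\le C_{0,p}\rho^{l-1}$ for all $s$; the base case $l=0$ is exactly the definition of $C_{0,v}, C_{0,p}$. I would then chain the three module properties to propagate this bound to iteration $l$.

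First, I would set the target per-iteration errors to be $\varepsilon_{1,v}:=\zeta_v C_{0,v}\rho^{l-1}$ and $\varepsilon_{1,p}:=\zeta_p C_{0,p}\rho^{l-1}$ --- i.e., the errors we want the \emph{training data} to have --- and choose the improvement factors $\zeta_v,\zeta_p$ small enough that, after the supervised-learning blow-up, we still contract: precisely, one wants $c_v\zeta_v\le\rho$ and $c_p\zeta_p\le\rho$, so take $\zeta_v=\rho/c_v$ (and similarly $\zeta_p$), which is in $(0,1)$ provided $\rho<\min\{c_v,c_p\}$ (if $c_v$ or $c_p$ is less than $1$ one can just take $\zeta_v=\rho$, etc. --- a minor case split). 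Next I would invoke the \textbf{Exploration Property}: within $T(\tau,\varepsilon_{1,v},\varepsilon_{1,p})$ steps, with finite expectation bounded by $B(\tau,\varepsilon_{1,v},\varepsilon_{1,p})$, the explored states $\mathcal{T}(t)$ contain some representative set $\hat{\mathcal{S}}\in\mathcal{E}(\tau,\varepsilon_{1,v},\varepsilon_{1,p})$. So I set $n_l$ large enough (using Markov's inequality on $T$, or just taking $n_l$ a suitable multiple of $B$, possibly with repeated restarts to drive the failure probability down; alternatively one can absorb the failure event into the expectation using the uniform bound $V_{\max}$ and a finite bound on KL since we measure against the full-support $P^*_\tau$) that the collected data $\mathcal{D}^{(l)}$ contains a representative set with high enough probability. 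Then for each sampled state $s_i$ I apply the \textbf{Improvement Property} with input errors $(\varepsilon_{0,v},\varepsilon_{0,p})=(C_{0,v}\rho^{l-1},C_{0,p}\rho^{l-1})$ and the chosen $\zeta_v,\zeta_p$, using $\kappa(\tau,\varepsilon_{0,v},\varepsilon_{0,p},\zeta_v,\zeta_p)$ samples per query, to get $\E[|\hat V(s_i)-V^*(s_i)|]\le\zeta_v C_{0,v}\rho^{l-1}=\varepsilon_{1,v}$ and likewise for the policy. This is exactly the hypothesis needed to feed the \textbf{Generalization Property}, which then yields $\E[\|V_l-V^*\|_\infty]\le c_v\varepsilon_{1,v}\le\rho\cdot C_{0,v}\rho^{l-1}=C_{0,v}\rho^l$ and the analogous policy bound, closing the induction; iterating $L$ times gives \eqref{eq:thm_eis_convergence_value}--\eqref{eq:thm_eis_convergence_policy}. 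The total sample count is then $\sum_{l=1}^L n_l\cdot\kappa(\tau,C_{0,v}\rho^{l-1},C_{0,p}\rho^{l-1},\zeta_v,\zeta_p)$, which feeds into Proposition~\ref{prop:eis_sample_complexity}.

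\textbf{Main obstacle.} The delicate point is the interface between the \emph{almost-sure/high-probability} event ``$\mathcal{T}(n_l)$ contains a representative set'' supplied by the Exploration Property (which only controls $\E[T]$, not $T$ itself) and the \emph{in-expectation} error guarantees of the other two modules. One has to be careful that on the bad event where exploration hasn't yet covered a representative set, the supervised learning module's output can be arbitrary, so a naive application of the Generalization Property fails; I would handle this either by choosing $n_l=\Theta(B(\tau,\varepsilon_{1,v},\varepsilon_{1,p})/\delta_l)$ so that $\P(T>n_l)\le\delta_l$ is tiny and bounding the bad-event contribution by $\delta_l\cdot(V_{\max}+ \text{KL-bound})$ (finite because KL is against a full-support Boltzmann policy, hence bounded by something like $\log|\mathcal{A}| + 2V_{\max}/\tau$), absorbing this into a slightly worse constant, or by restarting the data-generation loop until success and paying the expected multiplicative overhead. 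A second, more bookkeeping-style subtlety is the compounding of randomness across iterations: $f_{l-1}$ is itself random, and the Improvement and Generalization Properties are stated for a (possibly random) input model, so I need the tower property to take expectations in the right order --- first over the improvement module and exploration randomness conditioned on $f_{l-1}$, then over $f_{l-1}$ --- which the property statements are explicitly designed to accommodate. Everything else is a matter of tracking constants and choosing $\tau$ once and for all small enough that the claimed bounds (and the relation of $D_{\textup{KL}}$ to $\pi^*$ versus $P^*_\tau$) are meaningful.
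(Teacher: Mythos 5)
Your proposal is correct and follows essentially the same route as the paper's proof: choose the improvement factors $\zeta_v=\rho/c_v$, $\zeta_p=\rho/c_p$, chain the Improvement and Generalization Properties to get a per-iteration contraction by $\rho$, and invoke the Exploration Property to guarantee (in finite expected time) that the training data contain a representative set so the generalization bound applies. The only difference is bookkeeping at the exploration interface — the paper simply lets $n_l$ be the stopping time at which a representative set is first covered (so the generalization hypothesis holds by construction), whereas you also sketch a fixed-$n_l$/failure-event variant; this is if anything a more careful treatment of the same idea, not a different argument.
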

Theorem~\ref{thm:eis_convergence} implies that the model $(V_L,\pi_L)$ learned by the generic EIS algorithm converges to the optimal value function $V^*$ and the optimal Boltzmann policy $P^*_\tau$ exponentially with respect to the number of iterations. In particular, after $$L=\bigg\lceil\frac{\log\frac{\varepsilon}{\max\{C_{0,v},C_{0,p}\}}}{\log\rho}\bigg\rceil=\Theta\Big(\log \frac{1}{\varepsilon}\Big)$$ iterations, we can obtain estimates for both $V^*$ and $P_\tau^*$ that are within $\varepsilon$ estimation errors. We note that with a sufficiently small temperature, $P^*_\tau$ is close to the optimal policy $\pi^*$. 
Therefore, the model $f_{L}=(V_{L},\pi_{L})$ can be close to $(V^*,\pi^*)$ for a large $L$.
 
 \subsection{Sample Complexity}
We can also characterize the sample complexity of the EIS algorithm. Recall that the sample complexity is defined as the total number of state transitions required for the algorithm to learn $\epsilon$-approximate value/policy function. The sample complexity of Algorithm \ref{alg:eis} comes from two parts: the improvement module and the exploration module. Recall that the improvement module requires $\kappa(\tau,\varepsilon_{0,v},\varepsilon_{0,p},\zeta_v,\zeta_p)$ samples for each call (cf.\ Property~\ref{property:improvement}). The sample complexity of exploration module is proportional to  $T(\tau,\varepsilon_{1,v},\varepsilon_{1,p})$, which satisfies 
$\E[T(\tau,\varepsilon_{1,v},\varepsilon_{1,p})]\leq B(\tau,\varepsilon_{1,v},\varepsilon_{1,p})$ (cf.\ Property~\ref{property:exploration}). The following proposition bounds the sample complexity in terms of the above relevant quantities.
\begin{prop}
\label{prop:eis_sample_complexity}
Consider the setting of Theorem \ref{thm:eis_convergence}. Then, with probability at least $1-\delta$, the convergence result (i.e., Eqs~(\ref{eq:thm_eis_convergence_value}) and (\ref{eq:thm_eis_convergence_policy})) is achieved with sample complexity $M$ such that 
\begin{align*}
    M\leq \sum_{l=1}^L\Bigg[\kappa\Big(\tau,C_{0,v}\rho^{l-1},C_{0,p}\rho^{l-1},\frac{\rho}{c_v},\frac{\rho}{c_p}\Big) \times  B\Big(\tau,\frac{C_{0,v}\rho^{l}}{c_v},\frac{C_{0,p}\rho^{l}}{c_p}\Big)\cdot e\cdot\log \frac{L}{\delta}\Bigg] .
\end{align*}
\end{prop}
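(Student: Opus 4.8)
The plan is to track, iteration by iteration, exactly how the parameters of Algorithm~\ref{alg:eis} must be set for the inductive argument behind Theorem~\ref{thm:eis_convergence} to go through, and then add up the number of state transitions those choices incur. In iteration $l$ the incoming model $f_{l-1}$ carries expected errors at most $C_{0,v}\rho^{l-1}$ and $C_{0,p}\rho^{l-1}$ (the inductive hypothesis), and we want the outgoing model $f_l$ to carry errors $C_{0,v}\rho^{l}$ and $C_{0,p}\rho^{l}$. Since the supervised-learning step of Property~\ref{property:sl} inflates the training error by the universal constants $c_v,c_p$, the improvement module must be queried with target shrinkage factors $\zeta_v=\rho/c_v$ and $\zeta_p=\rho/c_p$; by Property~\ref{property:improvement} each such query costs $\kappa_l:=\kappa\big(\tau,\,C_{0,v}\rho^{l-1},\,C_{0,p}\rho^{l-1},\,\rho/c_v,\,\rho/c_p\big)$ transitions and leaves the queried state with value/policy errors at most $\varepsilon_{1,v}^{(l)}:=C_{0,v}\rho^{l}/c_v$ and $\varepsilon_{1,p}^{(l)}:=C_{0,p}\rho^{l}/c_p$. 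These are precisely the error levels fed to the generalization bound, so the data-generation loop of iteration $l$ must visit some representative set in $\mathcal{E}\big(\tau,\varepsilon_{1,v}^{(l)},\varepsilon_{1,p}^{(l)}\big)$, and the quantity controlling how long that takes is $B_l:=B\big(\tau,\varepsilon_{1,v}^{(l)},\varepsilon_{1,p}^{(l)}\big)$.

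The first real step is to upgrade the exploration guarantee of Property~\ref{property:exploration}, stated in expectation, to a high-probability statement. Partition the $n_l$ exploration steps of iteration $l$ into $m:=\lceil\log(L/\delta)\rceil$ consecutive rounds, each of length $\lceil eB_l\rceil$, so $n_l=m\lceil eB_l\rceil$. Because $\E\big[T(\tau,\varepsilon_{1,v}^{(l)},\varepsilon_{1,p}^{(l)},s)\big]<B_l$ holds \emph{uniformly in the starting state $s$}, Markov's inequality applied conditionally on the history at the start of a round shows that round fails to collect a representative set with probability at most $B_l/(eB_l)=1/e$, regardless of earlier rounds. Hence the probability that no round of iteration $l$ succeeds is at most $e^{-m}\le\delta/L$, and a union bound over $l=1,\dots,L$ gives that, with probability at least $1-\delta$, every iteration's loop collects a representative set at the error level appropriate to that iteration.

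On this good event the hypotheses of Property~\ref{property:sl} are met in every iteration: the training set contains a representative set, and by Property~\ref{property:improvement} every training label $\big(\hat V(s_i),\hat\pi(\cdot|s_i)\big)$ obeys the required per-sample error bounds $\varepsilon_{1,v}^{(l)},\varepsilon_{1,p}^{(l)}$. The generalization bound therefore propagates the errors $(C_{0,v}\rho^{l-1},C_{0,p}\rho^{l-1})\mapsto(C_{0,v}\rho^{l},C_{0,p}\rho^{l})$ exactly as in the proof of Theorem~\ref{thm:eis_convergence}, closing the induction and yielding \eqref{eq:thm_eis_convergence_value}--\eqref{eq:thm_eis_convergence_policy} after $L$ iterations; conditioning on the exploration event does not disturb these expectation bounds, since the exploration module's randomness is independent of that of the other two modules and the failure probabilities above are unconditional. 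It then remains to count: iteration $l$ issues $n_l$ improvement queries at cost $\kappa_l$ each, plus the $n_l$ transitions of exploration itself, for a total of $n_l(\kappa_l+1)$. Summing over $l$ and absorbing the rounding and the lower-order $+1$,
\begin{align*}
M\;=\;\sum_{l=1}^{L}n_l(\kappa_l+1)\;\le\;\sum_{l=1}^{L}\kappa\Big(\tau,C_{0,v}\rho^{l-1},C_{0,p}\rho^{l-1},\tfrac{\rho}{c_v},\tfrac{\rho}{c_p}\Big)\cdot B\Big(\tau,\tfrac{C_{0,v}\rho^{l}}{c_v},\tfrac{C_{0,p}\rho^{l}}{c_p}\Big)\cdot e\cdot\log\tfrac{L}{\delta},
\end{align*}
which is the claimed bound.

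The step I expect to be the main obstacle is the boosting argument for the exploration module: one must argue carefully that the expectation bound of Property~\ref{property:exploration}, being uniform in the initial state, legitimately yields \emph{conditionally} independent per-round failure probabilities, and that restarting the ``representative-set-collected?'' clock at the start of each round is consistent with the definition of the stopping time $T$. Everything else is bookkeeping --- aligning the improvement and supervised-learning error budgets with the geometric schedule $\rho^l$ and summing the per-iteration transition counts.
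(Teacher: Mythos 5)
Your proposal is correct and follows essentially the same route as the paper: the segmentation-plus-Markov boosting of the exploration guarantee is exactly the paper's Lemma~\ref{lem:exploration_prob} (segments of length $eB$, conditional failure probability $1/e$ per segment via the uniform-in-$s$ expectation bound, multiplied across $\lceil\log(L/\delta)\rceil$ segments), followed by the same per-iteration error bookkeeping with $\zeta_v=\rho/c_v$, $\zeta_p=\rho/c_p$ and a union bound over the $L$ iterations. The only cosmetic difference is that you count the exploration transitions explicitly as $n_l(\kappa_l+1)$ and absorb the lower-order term, whereas the paper charges only $\kappa_l$ per explored state.
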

In Section \ref{sec:example}, we provide a concrete instance of EIS that finds $\varepsilon$-approximate value function and policy of Nash equilibrium with $\widetilde{O}(\varepsilon^{-(d+4)})$ transitions.

\subsection{A Generic Lower Bound}
To understand how good the above sample complexity upper bound is, we establish a \emph{lower bound }for any algorithm under any sampling policy. In particular, we leverage the the minimax lower bound for the problem of non-parametric regression \citep{tsybakov2009nonparm,stone1982optimal} to establish the lower bound, as stated in the following theorem. 
\begin{thm} \label{thm:lower_bound}
    Given an algorithm, let $V_T$ be the estimate of $V^*$ after $T$ samples of transitions for the given Markov game. Then, for each $\delta\in(0,1)$, there exists a two-player zero-sum Markov game such that in order to achieve 
	$
	\P\Big[\big\Vert \hat{V}_T-V^{*}\big\Vert_{\infty}<\varepsilon\Big]\ge 1-\delta,
	$
    it must be that
	\[
	T\ge C'd\cdot\varepsilon^{-(d+2)}\cdot\log(\varepsilon^{-1}),
	\]
	where $C'>0$ is an algorithm-independent constant.
\end{thm}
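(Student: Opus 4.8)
The plan is to reduce the problem to sup-norm nonparametric regression over Lipschitz functions, whose minimax sample complexity is classical. I would exhibit a family of turn-based zero-sum Markov games in which the only nontrivial structure sits in the reward, so that recovering $V^*$ in $\|\cdot\|_\infty$ is equivalent to recovering an unknown Lipschitz function on $[0,1]^d$ from $T$ bounded, noisy evaluations at adaptively chosen points; a single game on which the given algorithm does poorly then exists by averaging over the family.

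\emph{The reduction.} Take $\mathcal S = [0,1]^d \cup \{s_\dagger\}$ with $s_\dagger$ an absorbing sink, let $\mathcal A$ be a singleton so the game is trivially turn-based (assign every state to $\pone$), and let the transition be deterministic with $s\circ a = s_\dagger$ for all $s\neq s_\dagger$ and $s_\dagger\circ a = s_\dagger$. Fix a bounded $1$-Lipschitz $f:[0,1]^d\to[-1,1]$ and let the reward at $s\in[0,1]^d$ be a $[-R_{\max},R_{\max}]$-valued random variable (e.g.\ a rescaled Bernoulli) with mean $c\,R_{\max} f(s)$ for a small absolute constant $c$, while $r(s_\dagger,\cdot)\equiv 0$. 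Then $V^*(s_\dagger)=0$ and $V^*(s)=cR_{\max} f(s)$ on $[0,1]^d$, so $V^*$ is Lipschitz and the game lies in the class to which the upper bound applies. Each sampled transition from a state $s$ yields exactly one bounded observation with conditional mean $V^*(s)$ (the ensuing deterministic move to $s_\dagger$ is information-free), so any algorithm returning $\hat V_T$ with $\|\hat V_T - V^*\|_\infty<\varepsilon$ is, after rescaling, an estimator of $f$ in sup-norm with error $<\varepsilon/(cR_{\max})$ from $T$ noisy samples at (possibly adaptively) chosen design points.

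\emph{The lower bound.} I would now invoke and adapt the minimax lower bound for sup-norm estimation of Lipschitz functions \citep{stone1982optimal,tsybakov2009nonparm}. Partition $[0,1]^d$ into $m^d$ sub-cubes of side $1/m$, and for $\omega\in\{0,1\}^{m^d}$ add a $1$-Lipschitz bump of height $\Theta(1/m)$ on the cubes selected by $\omega$, producing $2^{m^d}$ candidate games that are pairwise $\Omega(1/m)$-separated in sup-norm. For $\varepsilon=\Theta(1/m)$, achieving $\|\hat V_T - V^*\|_\infty<\varepsilon$ with probability $\ge 1-\delta$ forces the learner to, in effect, recover $\omega$. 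A single sample at a state lying in cube $j$ carries only $O(1/m^2)$ nats about $\omega_j$ and nothing about the other coordinates, so by the chain rule the $T$ observations carry $O(T/m^2)$ nats about $\omega$ regardless of how adaptively the states are chosen; matching this against the $\Omega(m^d)$ independent bits to be resolved already yields $T=\Omega(m^{d+2})=\Omega(\varepsilon^{-(d+2)})$. The extra factor $d\log(\varepsilon^{-1})$ comes from the sup-norm: correctness must hold \emph{simultaneously} on all $\asymp m^d$ sub-cubes, so each sub-cube's mean-shift test must fail with probability $\lesssim \delta/m^d$, which costs $\gtrsim m^2\log(m^d/\delta)$ samples \emph{within that sub-cube}; since the sub-cubes are disjoint and, conditioned on the realized allocation, the per-cube observations are independent, an Assouad-type argument shows the learner cannot evade this by an adaptive allocation, giving $T=\Omega\big(m^{d+2}(d\log m+\log\tfrac1\delta)\big)=\Omega\big(d\,\varepsilon^{-(d+2)}\log(\varepsilon^{-1})\big)$. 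Averaging over $\omega$ then produces a specific game witnessing the claim.

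\emph{Main obstacle.} The delicate point is honestly handling the \emph{adaptive} design: the learner picks each query state from the past transcript, so no fixed- or random-design regression theorem can be quoted verbatim. The fix is to keep the argument purely information-theoretic — bound $I(\omega;\text{transcript})$ by the chain rule, using that each query probes exactly one coordinate with $O(1/m^2)$ per-query information — together with the product (Assouad) structure, which ensures adaptivity cannot beat the average budget of $T/m^d$ samples per cube. The remaining work is the standard but fiddly sup-norm bookkeeping (the $d\log(\varepsilon^{-1})$ factor and the constants for the bounded-reward noise model), for which I would follow the treatments in \citep{stone1982optimal,tsybakov2009nonparm}.
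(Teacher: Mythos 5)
Your proposal is correct in substance and shares the same reduction skeleton as the paper: both construct a degenerate, reward-only game in which $V^*$ coincides with the mean reward (the paper takes $\gamma=0$ with $\mS_1=[0,1/2]^d$, $\mS_2=(1/2,1]^d$ and action-independent Bernoulli rewards; you take an absorbing zero-reward sink, which achieves the same thing for any $\gamma$), so that estimating $V^*$ in sup-norm is exactly sup-norm regression of a bounded Lipschitz function from $T$ noisy evaluations. Where you diverge is in how the regression lower bound is obtained. The paper simply invokes Stone's (1982) asymptotic minimax theorem for sup-norm estimation over Lipschitz classes with i.i.d.\ (uniform) design, restates it non-asymptotically, and transfers it through the reduction; your route re-derives the bound from scratch via a hypercube of $\Theta(1/m)$-bumps, a per-query KL budget of $O(1/m^{2})$, and a per-cube testing/convexity (Assouad-type) argument that yields the extra $d\log(\varepsilon^{-1})$ factor and — crucially — is formulated to survive adaptively chosen query states. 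This buys something the paper's citation-based argument glosses over: Stone's theorem is for a fixed random design, whereas the theorem is advertised as holding "under any sampling policy", and your information-theoretic treatment of the transcript is the honest way to cover adaptive sampling; the price is that this hardest step (the per-cube failure-probability bound under adaptive allocation, which is where the $\log$ factor comes from) is only sketched in your write-up, while the paper's proof is complete modulo the cited result. Two small patches: your statement that each sub-cube's test must fail with probability $\lesssim \delta/m^{d}$ should be replaced by the convexity argument (the product of per-cube success probabilities must exceed $1-\delta$, and Jensen applied to the adaptive allocation $\{T_j\}$ forces $T\gtrsim m^{d+2}\log(m^{d}/\delta)$); and since you assign every state to $\pone$ with a singleton action set, you should split the cube between the two players (as the paper does) so the construction is literally a two-player turn-based game within the class defined in Section 2.
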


\section{Implementation: A Concrete Instantiation of the Key Modules}\label{sec:example}

In this section, we demonstrate the applicability of the generic EIS algorithm by giving a concrete instantiation. Specifically, we will use a variant of Monte Carlo Tree Search (MCTS) as the improvement module, nearest neighbor regression as the supervised learning module,  and random sampling as the exploration module. We prove that all properties in Section~\ref{subsec:properties} are satisfied. This shows that these properties are reasonable and hence gives a strong support for the generic recipe developed in this paper. Due to space limit, we provide high-level discussions here with \emph{informal} technical results, and defer precise statements to Appendix~\ref{appendix:example}.

\medskip
{\bf Improvement Module: MCTS.}
Recall that the improvement module should be able to provide improved estimates for the value and policy functions, at the queried state $s$. Since both the value and policy are related to the $Q$ function, one  approach for estimate improvement is to first obtain better estimates $\hat{Q}$ for  $Q^*$ and then construct the improved estimates of value and policy from $\hat{Q}$. We will take this approach in this example and use MCTS to obtain the estimates of $Q^*$  (see Algorithm \ref{alg:mcts} in Appendix \ref{appendix:example}). We assume the existence of a generative model (i.e., a simulator). The following theorem states the property of this specific improvement module, which directly implies the desired improvement property, i.e., Property~\ref{property:improvement}.

\begin{thm}[Informal Statement, Theorem \ref{thm:mcts_improvement}, Appendix \ref{appendix:example}]
Suppose that the state transitions are deterministic. Given the current model $f=(V,\pi)$ such that  
the value model $V$ satisfies $\E\big[||V- V^*||_\infty\big]\leq \varepsilon_{0,v}.$
Then, with appropriately chosen parameters for MCTS, for each query state $s_0\in\mathcal{S}$, the output $ \big(\hat{V}(s_0), \hat{\pi}(\cdot|s_0)\big) = \textrm{MCTS}(f,s_0)$ satisfies 
 \begin{align*}
        \mathbb{E}\Big[\big|\hat{V}(s_0)-V^*(s_0)\big|\Big]\leq\zeta_v\cdot\varepsilon_{0,v},\\
        \mathbb{E}\Big[D_{\textup{KL}}\big(\hat{\pi}(\cdot|s_0)||P^*_\tau(\cdot|s_0)\big)\Big]\leq \zeta_p\cdot\varepsilon_{0,v}.
    \end{align*}
The above is achieved with a sample complexity of 
\begin{equation*}
O\Big(\big({\tau\cdot\min\{\zeta_v,\zeta_p\}\cdot\varepsilon_{0,v}}\big)^{-2}\cdot {\log({\tau\cdot\min\{\zeta_v,\zeta_p\}})}\cdot({\log \gamma})^{-1}\Big).
\end{equation*}

\end{thm}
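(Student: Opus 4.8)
The plan is to split the MCTS output into two stages: first use MCTS to produce an intermediate estimate $\hat Q(s_0,\cdot)$ of $Q^*(s_0,\cdot)$ at the queried root $s_0$, and then read off $\hat V(s_0)$ and $\hat\pi(\cdot|s_0)$ from $\hat Q(s_0,\cdot)$. Concretely, run MCTS from $s_0$ using the current value model $V$ as the leaf evaluator, backing up values with a $\max$ at $\pone$'s nodes and a $\min$ at $\ptwo$'s nodes; this returns $\hat Q(s_0,a)$ for $a\in\mathcal{A}$. Set $\hat V(s_0)=\max_{a}\hat Q(s_0,a)$ if $I(s_0)=\pone$ (and $\min_a$ otherwise), and let $\hat\pi(\cdot|s_0)$ be the temperature-$\tau$ Boltzmann distribution over $\hat Q(s_0,\cdot)$, with the sign convention of \eqref{eq:def_opt_Boltzmann}. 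With this construction, both desired guarantees reduce to a single bound on $\E\big[\|\hat Q(s_0,\cdot)-Q^*(s_0,\cdot)\|_\infty\big]$.

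The reduction itself is via two elementary lemmas. First, since $\max$ (resp.\ $\min$) over $\mathcal{A}$ is $1$-Lipschitz in $\|\cdot\|_\infty$, we get $|\hat V(s_0)-V^*(s_0)|\le \|\hat Q(s_0,\cdot)-Q^*(s_0,\cdot)\|_\infty$. Second, $\hat\pi(\cdot|s_0)$ and $P^*_\tau(\cdot|s_0)$ are softmax distributions at the \emph{same} temperature over the energy vectors $\hat Q(s_0,\cdot)$ and $Q^*(s_0,\cdot)$; writing out $D_{\textup{KL}}$ and bounding the ratio of the two partition functions by $e^{\|\hat Q(s_0,\cdot)-Q^*(s_0,\cdot)\|_\infty/\tau}$ yields $D_{\textup{KL}}\big(\hat\pi(\cdot|s_0)\,\Vert\,P^*_\tau(\cdot|s_0)\big)\le 2\|\hat Q(s_0,\cdot)-Q^*(s_0,\cdot)\|_\infty/\tau$. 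Taking expectations, it therefore suffices that $\E\big[\|\hat Q(s_0,\cdot)-Q^*(s_0,\cdot)\|_\infty\big]\le \tfrac{\tau}{2}\min\{\zeta_v,\zeta_p\}\,\varepsilon_{0,v}$ (using $\tau\le 1$), which simultaneously delivers $\E[|\hat V(s_0)-V^*(s_0)|]\le\zeta_v\varepsilon_{0,v}$ and $\E[D_{\textup{KL}}]\le\zeta_p\varepsilon_{0,v}$.

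It remains to bound the MCTS error. Since the Bellman optimality operator for the turn-based zero-sum game is a $\gamma$-contraction (Section~\ref{sec:game}), the deterministic-MDP, non-asymptotic analysis of \citet{shah2019mcts} transfers to the minimax setting with the obvious sign changes at $\ptwo$ nodes. That analysis gives, for a tree of depth $H$ built with $n$ rollouts, a bound of the form $\E\big[\|\hat Q(s_0,\cdot)-Q^*(s_0,\cdot)\|_\infty\big]\le \gamma^{H}\varepsilon_{0,v}+O\big(\sqrt{\log n/n}\big)$, up to constants depending on $R_{\max}$ and $|\mathcal{A}|$: the first term is the leaf-evaluator bias discounted down the tree, the second the sampling/exploration error. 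Choosing $H=\Theta\big(\log(1/(\tau\min\{\zeta_v,\zeta_p\}\varepsilon_{0,v}))/\log(1/\gamma)\big)$ drives the bias term below half the target, and $n=\widetilde{O}\big((\tau\min\{\zeta_v,\zeta_p\}\varepsilon_{0,v})^{-2}\big)$ drives the sampling term below the other half. Each rollout costs $O(H)$ transitions, so the total transition count is $n\cdot O(H)=O\big((\tau\min\{\zeta_v,\zeta_p\}\varepsilon_{0,v})^{-2}\cdot\log(\tau\min\{\zeta_v,\zeta_p\})\cdot(\log\gamma)^{-1}\big)$ up to polylogarithmic factors, matching the claimed complexity.

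The main obstacle is the last step: importing the \citet{shah2019mcts} guarantee so that it genuinely applies here. This requires (i) verifying that the concentration and recursive self-similarity arguments underlying the $n^{-1/2}$-type rate survive the presence of $\min$ (adversarial) nodes, i.e., that the minimax backup does not break the recursion or the exploration-bonus bookkeeping; and (ii) carefully tracking how the leaf bias $\varepsilon_{0,v}$ propagates upward, so that in the final sample complexity it appears multiplicatively with $\tau$ and $\min\{\zeta_v,\zeta_p\}$ rather than additively or with a worse exponent. Everything else, namely the two softmax/max lemmas, the tuning of $H$ and $n$, and the fact that a per-root guarantee is exactly what Property~\ref{property:improvement} demands, is routine.
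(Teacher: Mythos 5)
Your proposal is correct and follows essentially the same route as the paper's proof: build $\hat Q(s_0,\cdot)$ at the root via fixed-depth MCTS with the current value model at the leaves, reduce the value error through the $1$-Lipschitzness of $\max/\min$ and the KL error through a same-temperature softmax comparison (the paper's lemmas give exactly your bounds $|\hat V(s_0)-V^*(s_0)|\le\|\hat Q(s_0,\cdot)-Q^*(s_0,\cdot)\|_\infty$ and $D_{\textup{KL}}\le 2\|\hat Q(s_0,\cdot)-Q^*(s_0,\cdot)\|_\infty/\tau$), then import the \citet{shah2019mcts} guarantee with bias $\gamma^H\varepsilon_{0,v}$ plus an $m^{-1/2}$-type sampling term and tune $H$ and $m$; the min-node extension you flag as the main obstacle is handled in the paper only at remark level (two-sided concentration, rewards in $[-R_{\max},R_{\max}]$, and the game Bellman equation). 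One small slip: because the bias $\gamma^H\varepsilon_{0,v}$ is compared against a target that is itself proportional to $\varepsilon_{0,v}$, the factor $\varepsilon_{0,v}$ cancels and $H$ need only be $\Theta\big(\log(1/(\tau\min\{\zeta_v,\zeta_p\}))/\log(1/\gamma)\big)$; your choice needlessly places $\varepsilon_{0,v}$ inside the logarithm, which is why you must absorb an extra $\log(1/\varepsilon_{0,v})$ into ``polylogarithmic factors,'' whereas the paper's choice yields the stated sample complexity directly.
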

 
 \medskip
{\bf Supervised Learning Module: Nearest Neighbor Regression.}
We employ a nearest neighbor  algorithm to learn the optimal value function and policy.  Intuitively, {suppose that the optimal value function and the Boltzmann policy is Lipschitz in the state space, then} this algorithm will generalize if there are sufficiently many (say $K$)
training data points around each state in the state space $\mS$. Quantitatively, consider covering $\mS$  with balls of diameter $h>0$. We call the training data \emph{$(h,K)$}-\emph{representative} if  each  covering ball has at least $K$ training data. Here, $h$ and $K$ would depend on the temperature $\tau$ and estimation errors of the training data.  

\begin{prop}[Informal Statement, Proposition \ref{prop:knn}, Appendix \ref{appendix:example}]
Under appropriate regularity conditions, if the training data is representative with respect to appropriate chosen $h>0$ and $K>0$, the nearest neighbor supervised learning satisfies Property~\ref{property:sl}. In particular, given training data with estimation errors $\varepsilon_v$ and $\varepsilon_p$,  we have 
\begin{align*}
\E\Big[\left\Vert \vnn-V^{*}\right\Vert _{\infty}\Big]  &\le 4 \cdot \varepsilon_{v},\\ 
\E\Big[D_{\textup{KL}}\big(\pinn(\cdot|s)\,\Vert\,P^{*}_\tau(\cdot|s)\big)\Big]  &\le c\cdot\varepsilon_{p},\quad \forall s \in \mS,
\end{align*}
where the constant $c>0$ is independent of $n$, the size of training data. 
\end{prop}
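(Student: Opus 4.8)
The plan is to verify Property~\ref{property:sl} directly for the $K$-nearest-neighbor regression estimator under two regularity assumptions: (i) $V^*$ is $L_V$-Lipschitz on $\mathcal{S}$ and, for each player, the map $s\mapsto P^*_\tau(\cdot|s)$ is Lipschitz in a suitable sense (e.g.\ $Q^*(s,\cdot)$ is Lipschitz, so the softmax that defines $P^*_\tau$ inherits a Lipschitz modulus depending on $1/\tau$); and (ii) the training set is $(h,K)$-representative, meaning every ball of diameter $h$ in a fixed minimal cover of the compact set $\mathcal{S}$ contains at least $K$ sample points. The role of the representative set $\mathcal{S}(\tau,\varepsilon_{1,v},\varepsilon_{1,p})$ from the statement of Property~\ref{property:sl} is played precisely by such an $(h,K)$-representative collection; I would first fix $h$ and $K$ as functions of $\tau$, $\varepsilon_{1,v}$, $\varepsilon_{1,p}$ and the Lipschitz constants, and record that $N_\mathcal{S}$ is the number of cover balls times $K$.

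The value-function bound is the easier half. For a query state $s$, the NN estimate $\vnn(s)$ averages (or picks) the recorded labels $\hat V(s_i)$ over the $K$ nearest neighbors of $s$. Writing $\vnn(s) - V^*(s)$ as the sum of a \emph{bias} term $\frac{1}{K}\sum_i \big(V^*(s_i) - V^*(s)\big)$ and a \emph{label-noise} term $\frac{1}{K}\sum_i \big(\hat V(s_i) - V^*(s_i)\big)$, I would bound the first by $L_V h$ using representativeness (each neighbor lies within distance $O(h)$ of $s$) and the second, in expectation, by $\varepsilon_{1,v}$ using the triangle inequality and linearity of expectation on the hypothesis $\E[|\hat V(s_i) - V^*(s_i)|]\le \varepsilon_{1,v}$. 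Choosing $h$ small enough that $L_V h \le \varepsilon_{1,v}$, and taking a supremum over $s\in\mathcal{S}$ (a uniform bound because every $s$ has $K$ neighbors within $O(h)$ by representativeness, so the constants do not blow up with $s$), gives $\E[\|\vnn - V^*\|_\infty] \le c_v\,\varepsilon_{1,v}$ with $c_v$ a small absolute constant — matching the informal claim $c_v = 4$ after tracking the factor from the diameter-vs-radius conversion and the cover overlap.

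The policy bound is the main obstacle, because the estimator outputs a distribution $\pinn(\cdot|s)$ built from the neighbors' policy labels $\hat\pi(\cdot|s_i)$, and we must control $D_{\textup{KL}}\big(\pinn(\cdot|s)\,\Vert\,P^*_\tau(\cdot|s)\big)$ rather than an $L_\infty$ error. Two issues arise: KL is not a metric and does not satisfy a triangle inequality, and it is sensitive to small probabilities. The key facts that rescue us are that $P^*_\tau(a|s)$ is bounded below by $e^{-2V_{\max}/\tau}/|\mathcal{A}| > 0$ uniformly (softmax of bounded $Q^*$-values), and that for distributions bounded away from zero on a finite alphabet, $D_{\textup{KL}}(p\Vert q)$ is comparable (up to a constant depending on that lower bound, hence on $\tau$ but not on $n$) to $\|p-q\|_1^2$ or to $\chi^2$-type quantities; moreover convexity of KL in its first argument lets us push the neighbor-averaging inside. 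Concretely, I would (a) use joint convexity / Jensen to bound $D_{\textup{KL}}(\pinn(\cdot|s)\Vert P^*_\tau(\cdot|s))$ by the average over neighbors of $D_{\textup{KL}}(\hat\pi(\cdot|s_i)\Vert P^*_\tau(\cdot|s))$ if $\pinn$ is defined as a mixture, or handle a geometric-mean/log-averaging definition by a direct computation, (b) split each term via the "optimal Boltzmann at $s_i$ vs.\ at $s$": compare $P^*_\tau(\cdot|s_i)$ and $P^*_\tau(\cdot|s)$ using the Lipschitzness of $s\mapsto P^*_\tau(\cdot|s)$ together with the neighbor-distance bound $O(h)$, converting this to a KL gap of order $h/\tau^2$ via the lower-bound comparison, and (c) invoke the hypothesis $\E[D_{\textup{KL}}(\hat\pi(\cdot|s_i)\Vert P^*_\tau(\cdot|s_i))]\le \varepsilon_{1,p}$ for the remaining term. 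Choosing $h$ small relative to $\tau^2\varepsilon_{1,p}$ absorbs the geometric term, yielding $\E[D_{\textup{KL}}(\pinn(\cdot|s)\Vert P^*_\tau(\cdot|s))]\le c_p\,\varepsilon_{1,p}$ uniformly in $s$, with $c_p$ depending on $\tau$ and $|\mathcal{A}|$ but crucially independent of $n$. The delicate part throughout is making sure the KL-to-$\ell_1$ conversions are two-sided with constants that depend only on the uniform lower bound on $P^*_\tau$ (a function of $\tau$), so that no dependence on the sample size $n$ leaks in; that bookkeeping, plus pinning down exactly how $\pinn$ is defined from the labels, is where the real work lies.
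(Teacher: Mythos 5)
Your value-function argument does not actually prove the claimed bound. The proposition (and Property~\ref{property:sl}) requires $\E\big[\Vert \vnn - V^*\Vert_\infty\big]\le 4\varepsilon_v$, i.e., the \emph{expectation of a supremum} --- effectively the expectation of a maximum over the $N(h)$ cells of the cover, since the estimator is piecewise constant. Your route (triangle inequality plus linearity of expectation applied to the hypothesis $\E|\hat V(s_i)-V^*(s_i)|\le \varepsilon_v$, then ``taking a supremum over $s$'') only yields $\sup_{s}\E\big[|\vnn(s)-V^*(s)|\big]\le \varepsilon_v + L_V h$; exchanging $\E$ and $\sup$ is exactly the missing step, and the expected maximum of $N(h)$ cell-averaged errors, each of expected size $\varepsilon_v$, can be far larger than $\varepsilon_v$ without further input. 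The paper closes this by exploiting that the labels $\hat V(s_i)$ within each cell are independent and bounded by $V_{\max}$: Hoeffding's inequality for each cell average, a union bound over the $N(h)$ cells, and a conversion of the resulting tail bound back into an expectation, which contributes the extra term $4V_{\max}N\exp(-2K\Delta^2/V_{\max}^2)$. This is precisely why $K$ must be taken of order $\varepsilon_v^{-2}\log\big(V_{\max}N(h)/\varepsilon_v\big)$ and where the constant $4$ comes from. In your sketch $K$ plays no quantitative role in the value bound, which is the telltale sign of the missing concentration-plus-union-bound step.

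Your policy argument, by contrast, is a genuinely different and lighter route that does work for the bound as stated, because the policy guarantee is only pointwise in $s$ ($\sup_s \E$, not $\E\sup_s$). Since $\pinn(\cdot|s)$ is a convex combination (cell average) of the labels $\hat\pi(\cdot|s_i)$, convexity of KL in its first argument bounds $D_{\textup{KL}}\big(\pinn(\cdot|s)\Vert P^*_\tau(\cdot|s)\big)$ by the neighbor average of $D_{\textup{KL}}\big(\hat\pi(\cdot|s_i)\Vert P^*_\tau(\cdot|s)\big)$, and shifting the reference measure from $P^*_\tau(\cdot|s_i)$ to $P^*_\tau(\cdot|s)$ costs at most order $L_p(\tau)h/\alpha$, where $\alpha$ is the uniform lower bound on $P^*_\tau$. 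The paper instead goes through Pinsker on the training error, a separate per-action squared-error regression analysis (again via Hoeffding and a union bound over cells, hence again requiring $K$ large), and the reverse Pinsker inequality with constant of order $|\mA|^3/\alpha$; that route yields the stronger control $\E\big[\sup_s|\pinn(a|s)-P^*_\tau(a|s)|^2\big]\le 4\varepsilon_p$, which your argument does not recover but which the proposition does not require. So: repair the value half with the concentration and union-bound argument (this is where the stated choice of $K$ actually enters), and your policy half can stand essentially as sketched once $\pinn$ is pinned down as the cell-average estimator.
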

{As discussed in Appendix~\ref{appendix:example}, the representative number of data points for training required in
the above for generalization depends on the property of the state-space. For example, if state space is the unit ball in $d$
dimension, for generalization error scaling with $\varepsilon$ we require representative data points scaling as 
$\varepsilon^{-(2+d)}$. }

\medskip
{\bf Exploration Module: Random Sampling Policy.}
In the above supervised learning module, the sampled states for nearest neighbor regression should be $(h,K)$-representative. In other words, to satisfy the exploration property, the exploration module must  visit a set of $(h,K)$-representative states within a finite expected number of steps. 
We show that a uniformly random sampling policy achieves this. Let $N(h)$
be the $h/2$-covering number of the compact state space. 

\begin{prop}[Informal Statement, Proposition \ref{prop:random_policy}, Appendix \ref{appendix:example}]
Under appropriate regularity conditions, with $h, K$ chosen as per desired the estimation errors, 
$\varepsilon_v$ and $\varepsilon_p$, for the value and policy,   the expected number of steps to obtain a 
set of $(h,K)$-representative states under the random sampling policy is upper bounded by 
$$B(\tau,\varepsilon_v,\varepsilon_p)=O\big(KN(h) \log N(h) \big).$$
\end{prop}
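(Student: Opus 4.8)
The plan is to reduce the statement to a weighted coupon-collector estimate and to control it with Chernoff bounds. Here the ``random sampling policy'' is the exploration module that, at every step, outputs a fresh state drawn independently and uniformly at random from $\mathcal{S}$ (implementable given the simulator and the state space); it ignores its input, so the resulting bound on the exploration time will be uniform in the starting state, as Property~\ref{property:exploration} demands. Fix a covering of $\mathcal{S}$ by $N:=N(h)$ balls $B_1,\dots,B_N$ of radius $h/2$ with centers in $\mathcal{S}$ (which one may always arrange, e.g.\ via a maximal $h/2$-packing). Recall that a training set is $(h,K)$-representative exactly when each $B_j\cap\mathcal{S}$ contains at least $K$ of the sampled states, and that by the nearest-neighbor generalization result (Proposition~\ref{prop:knn}), with $h=h(\tau,\varepsilon_v,\varepsilon_p)$ and $K=K(\tau,\varepsilon_v,\varepsilon_p)$ chosen as there, any $(h,K)$-representative set lies in $\mathcal{E}(\tau,\varepsilon_v,\varepsilon_p)$. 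Let $q_j$ be the probability a single uniform sample lands in $B_j\cap\mathcal{S}$ and let $X_j^{(t)}\sim\mathrm{Bin}(t,q_j)$ count how many of the first $t$ samples do (the $X_j^{(t)}$ are coupled, but each is marginally binomial). Since the states after $s_1=s$ are i.i.d.\ uniform and collecting $\ge K$ of them in every $B_j$ forces $\mathcal{T}(t)$ to contain a representative set, we have pathwise $T(\tau,\varepsilon_v,\varepsilon_p,s)\le 1+T$, where $T:=\min\{t\ge1:\ X_j^{(t)}\ge K\ \text{for all }j\}$; thus it suffices to show $\E[T]=O\big(K\,N(h)\log N(h)\big)$.

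The first ingredient is a geometric lower bound on the bin probabilities. Under the regularity hypothesis on $\mathcal{S}$ (Ahlfors $d$-regularity: there is $c_0>0$ with $\mu\big(B(s,r)\cap\mathcal{S}\big)\ge c_0 r^d$ for all $s\in\mathcal{S}$ and $0<r\le\operatorname{diam}(\mathcal{S})$, $\mu$ being Lebesgue measure on $\mathbb{R}^d$), the center of $B_j$ lying in $\mathcal{S}$ gives $q_j=\mu(B_j\cap\mathcal{S})/\mu(\mathcal{S})\ge c_0(h/2)^d/\mu(\mathcal{S})$; combined with the trivial packing bound $N\ge\mu(\mathcal{S})/\big(v_d(h/2)^d\big)$ (the $N$ radius-$h/2$ balls cover $\mathcal{S}$, with $v_d$ the volume of the unit $d$-ball), this yields the uniform estimate
\[
q_{\min}:=\min_{1\le j\le N}q_j\ \ge\ \frac{c}{N},\qquad c:=\frac{c_0}{v_d}>0.
\]
If $N\le1$ the claim is immediate ($T$ needs only $K$ samples in expectation), so assume $N\ge2$.

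With this in hand, set $t_0:=\big\lceil(16/c)\,K\,N\log N\big\rceil$. For any $m\ge1$ and any $j$, $\E\big[X_j^{(mt_0)}\big]=mt_0\,q_j\ge mt_0\,q_{\min}\ge 16\,m\,K\log N$, which exceeds $2K$ for $N\ge2$; hence the multiplicative Chernoff lower-tail bound, using $K\ge1$, gives
\[
\P\big(X_j^{(mt_0)}<K\big)\ \le\ \P\Big(X_j^{(mt_0)}\le \tfrac12\E\big[X_j^{(mt_0)}\big]\Big)\ \le\ \exp\!\Big(-\tfrac18\E\big[X_j^{(mt_0)}\big]\Big)\ \le\ N^{-2m}.
\]
A union bound over the at most $N$ balls yields $\P(T>mt_0)\le N^{1-2m}$, so, splitting $\sum_{t\ge0}\P(T>t)$ into blocks of length $t_0$ and using monotonicity of $t\mapsto\P(T>t)$,
\[
\E[T]\ =\ \sum_{t\ge0}\P(T>t)\ \le\ t_0\Big(1+\sum_{m\ge1}\P(T>mt_0)\Big)\ \le\ t_0\Big(1+\sum_{m\ge1}N^{1-2m}\Big)\ \le\ 3\,t_0\ =\ O\big(K\,N(h)\log N(h)\big).
\]
Hence Property~\ref{property:exploration} holds with $B(\tau,\varepsilon_v,\varepsilon_p)=O\big(K\,N(h)\log N(h)\big)$, and the same tail also yields a matching high-probability statement; note $\tau$ enters only through the $h$ and $K$ dictated by the nearest-neighbor module, which is why $B$ has precisely the stated form.

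The main obstacle is the geometric step $q_{\min}\ge c/N(h)$: without some regularity of $\mathcal{S}$ a covering ball can meet $\mathcal{S}$ in a set of arbitrarily small Lebesgue measure (thin boundary slivers, lower-dimensional components), and then uniform sampling almost never populates that ball, so $\E[T]=\infty$; the Ahlfors-$d$-regularity assumption is the mild, standard way to rule this out, and it additionally forces $N(h)=\Theta(h^{-d})$, which is what feeds the final $\widetilde O(\varepsilon^{-(d+4)})$ accounting of Section~\ref{sec:example}. A secondary, routine point is to check that an i.i.d.-uniform resampling rule is a legitimate instantiation of the exploration interface of Algorithm~\ref{alg:eis} and that the coupon-collector bound indeed controls $\E[T(\tau,\varepsilon_v,\varepsilon_p,s)]$ uniformly in $s$ --- immediate here since the samples ignore $s$; a genuine random walk on the game tree would instead require reachability/mixing hypotheses, which the resampling model avoids.
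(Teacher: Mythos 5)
Your proposal is correct and reaches the stated bound, but it proceeds by a different route than the paper. The paper's proof is a classical coupon-collector argument via stochastic domination: once $w-1$ cells have been visited, the time to hit a new cell is dominated by a geometric random variable with mean at most $N/\big((N-w+1)c_0\big)$ (using Assumption~\ref{assu:regularity}, which \emph{directly} postulates $\lambda(B_j)\ge c_0\lambda(\mS)/N(h)$ for every partition cell), so the expected time for an $(h,1)$-representative set is $O\big((N/c_0)\log N\big)$; repeating the collection $K$ times and using linearity then yields $O\big(KN\log N/c_0\big)$. You instead track the binomial occupancy counts $X_j^{(t)}$ of each cell, apply the multiplicative Chernoff lower tail plus a union bound over the $N$ cells, and integrate the tail in blocks of length $t_0$; this gives the same $O(KN\log N)$ expectation and, as a bonus, an exponential tail, which the paper obtains only afterwards and more crudely from the expectation bound via Lemma~\ref{lem:exploration_prob} (in fact your route could even be tightened to $O\big(N(K+\log N)\big)$, though matching the stated bound suffices). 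The other genuine difference is the regularity hypothesis: you derive the key fact $q_{\min}\ge c/N$ from Ahlfors $d$-regularity and a packing bound, whereas the paper simply assumes the cell-measure lower bound. One bookkeeping point to tidy up: the paper defines $(h,K)$-representativeness with respect to a \emph{partition} into diameter-$\le h$ cells, while your $q_j$ lower bound is proved for covering \emph{balls} centered in $\mS$; since an arbitrary diameter-$\le h$ cell need not contain a ball of radius comparable to $h$, you should either assume the cell-measure bound directly (as the paper does) or fix the decomposition to be, e.g., the Voronoi partition of a maximal $h/2$-packing, whose cells contain balls of radius $h/4$ centered in $\mS$, so that your Ahlfors-regularity argument applies verbatim. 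With that adjustment the argument is complete and uniform in the initial state, as Property~\ref{property:exploration} requires.
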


\medskip
{\bf Convergence Guarantees and Sample Complexity of the Instance.} For this instance of EIS,
we have shown that each module satisfies the desired properties. Therefore, the convergence result stated in Theorem~\ref{thm:eis_convergence} holds for this specific instance. Below we make this result explicit, providing concrete bounds on the estimation errors and sample complexity. In the following, the $c'$s denote appropriate constants. Please refer to Appendix \ref{appendix:example} for details.

\begin{thm}[Informal Statement, Theorem \ref{thm:instance_of_eis}, Appendix \ref{appendix:example}]
\label{thm:informal_example_converge}
For a given $\rho\in (0,1)$, there exist appropriately chosen parameters for this instance such that:
\begin{enumerate}
    \item The output $f_{L}=(V_L,\pi_L)$ at the end of $L$-th iteration satisfies \begin{align*}
    \E\Big[\big\Vert V_{L} -V^* \big\Vert_\infty \Big] &\leq V_{\max}\rho^L\\ 
    \E\Big[D_{\textup{KL}}\big(\pi_{L}(\cdot|s) \,\Vert\, P_\tau^{*}(\cdot|s)\big)\Big] & \leq  \frac{cV_{\max}}{4} \rho^L, \quad \forall s \in \mS.
    \end{align*}
    
    \item {With probability at least $1-\delta,$ the above result is achieved with sample complexity of }
    \begin{align*}
    \hspace{-0.2in}
        \sum_{l=1}^{L}c' \log \frac{1}{\tau\rho} \cdot \frac{1}{\tau^2\rho^{4l}}\cdot \log \frac{N( \rho^l)}{\rho^l}\cdot N(\rho^l) \cdot \log N(\rho^l)\cdot \log\frac{L}{\delta}.  
     \end{align*}

    \item {In particular, if $\mS$ is a unit volume hypercube in $\mathbb{R}^d$, then the total sample complexity to achieve $\varepsilon$-error value function and policy is given by }
      \begin{align*}
       O\Big( \log \frac{1}{\tau\rho}\cdot \frac{1}{\tau^2\varepsilon^{d+4}} \cdot \log \big(\frac{1}{\varepsilon}\big)^4 \cdot \log\frac{1}{\delta}\Big). 
    \end{align*} 

\end{enumerate}

\end{thm}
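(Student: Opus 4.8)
The plan is to treat this as a corollary of the generic machinery: parts~1 and~2 follow by substituting the three concrete modules into Theorem~\ref{thm:eis_convergence} and Proposition~\ref{prop:eis_sample_complexity}, and part~3 follows by specializing the covering number of the hypercube. So the first step is to record that the three appendix results verify the abstract hypotheses: the MCTS improvement analysis (Theorem~\ref{thm:mcts_improvement}) gives Property~\ref{property:improvement} with sample cost $\kappa(\tau,\varepsilon_{0,v},\varepsilon_{0,p},\zeta_v,\zeta_p)=O\big((\tau\min\{\zeta_v,\zeta_p\}\varepsilon_{0,v})^{-2}\log(\tau\min\{\zeta_v,\zeta_p\})(\log\gamma)^{-1}\big)$; the nearest-neighbor analysis (Proposition~\ref{prop:knn}) gives Property~\ref{property:sl} with the explicit constants $c_v=4$ and $c_p=c$, and identifies the requisite representative set as an $(h,K)$-covering of $\mathcal S$; and the random-sampling analysis (Proposition~\ref{prop:random_policy}) gives Property~\ref{property:exploration} with $B(\tau,\varepsilon_{1,v},\varepsilon_{1,p})=O(KN(h)\log N(h))$, where $h,K$ are dictated by the desired training accuracies. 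Part~1 is then immediate from Theorem~\ref{thm:eis_convergence} once we fix a valid initialization: taking $V_0\equiv 0$ gives $C_{0,v}=\|V_0-V^*\|_\infty\le V_{\max}$, and since the MCTS policy guarantee is driven by the value error (it bounds $\E[D_{\textup{KL}}(\hat\pi(\cdot|s_0)\Vert P^*_\tau(\cdot|s_0))]$ by $\zeta_p\varepsilon_{0,v}$), the policy branch tracks the value branch up to the factor $c/4$, yielding $C_{0,p}= cV_{\max}/4$.

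Next I would make the error chain and parameter choices explicit so that Proposition~\ref{prop:eis_sample_complexity} applies verbatim. The key identity is that, with improvement factors $\zeta_v=\rho/c_v=\rho/4$ and $\zeta_p=\rho/c_p=\rho/c$, the model error entering iteration $l$, namely $C_{0,\bullet}\rho^{l-1}$, is shrunk by the improvement module to a training error $\tfrac{C_{0,\bullet}}{c_\bullet}\rho^{l}$, and the generalization step multiplies this back by $c_\bullet$ to give $C_{0,\bullet}\rho^{l}$, closing the recursion; hence the arguments of $\kappa$ and $B$ in Proposition~\ref{prop:eis_sample_complexity} are exactly $\kappa(\tau,C_{0,v}\rho^{l-1},C_{0,p}\rho^{l-1},\rho/c_v,\rho/c_p)$ and $B(\tau,C_{0,v}\rho^{l}/c_v,C_{0,p}\rho^{l}/c_p)$. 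Substituting the MCTS cost with $\varepsilon_{0,v}=C_{0,v}\rho^{l-1}$ and $\min\{\zeta_v,\zeta_p\}=\Theta(\rho)$ gives $\kappa=O\big(\tau^{-2}\rho^{-2l}\log\tfrac{1}{\tau\rho}\big)$ up to a $\gamma$-dependent constant; substituting the exploration cost with target accuracy $\Theta(\rho^l)$ forces $h=\Theta(\rho^l)$ (Lipschitz generalization of $V^*$ and $P^*_\tau$) and $K=\Theta\big(\rho^{-2l}\log\tfrac{N(\rho^l)}{\rho^l}\big)$ (a Hoeffding bound driving the per-ball average of the noisy, $V_{\max}$-bounded targets down to $\Theta(\rho^l)$, with a union bound over the covering), so $B=O\big(\rho^{-2l}\log\tfrac{N(\rho^l)}{\rho^l}\,N(\rho^l)\log N(\rho^l)\big)$. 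Multiplying $\kappa\cdot B$, attaching the $e\log(L/\delta)$ factor from Proposition~\ref{prop:eis_sample_complexity}, and summing over $l$ yields precisely the displayed bound in part~2.

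For part~3, I would plug in $N(h)=\Theta(h^{-d})$ for the unit hypercube, so $N(\rho^l)=\Theta(\rho^{-ld})$ and each summand scales like $\tau^{-2}\rho^{-(d+4)l}\cdot\mathrm{poly}(l)\cdot\log\tfrac{1}{\tau\rho}\cdot\log\tfrac{L}{\delta}$. Since $\rho<1$, the geometric factor $\rho^{-(d+4)l}$ dominates, so $\sum_{l=1}^{L}\rho^{-(d+4)l}\,\mathrm{poly}(l)=\Theta\big(\mathrm{poly}(L)\,\rho^{-(d+4)L}\big)$; taking $L=\Theta(\log\tfrac1\varepsilon)$ as in the discussion after Theorem~\ref{thm:eis_convergence}, so that $\rho^{L}\asymp\varepsilon$, converts $\rho^{-(d+4)L}$ into $\varepsilon^{-(d+4)}$ and $\mathrm{poly}(L)$ into $\mathrm{polylog}(1/\varepsilon)$, producing $O\big(\log\tfrac{1}{\tau\rho}\cdot\tau^{-2}\varepsilon^{-(d+4)}\cdot(\log\tfrac1\varepsilon)^{4}\cdot\log\tfrac1\delta\big)$. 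The error statement in part~1 specialized to this instance ($C_{0,v}=V_{\max}$, $C_{0,p}=cV_{\max}/4$) is the first display of the theorem.

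\textbf{Anticipated main obstacle.} Within \emph{this} theorem the work is essentially bookkeeping, so the real difficulty lives in the three supporting results it invokes. The most delicate is the non-asymptotic MCTS guarantee: one must analyze a tree search with approximate (function-approximation) leaf evaluations for a \emph{turn-based zero-sum} game, showing the value estimate contracts and — harder — that the \emph{induced Boltzmann policy} contracts in KL divergence toward $P^*_\tau$, which is where the extra $\varepsilon^{-2}$ over the lower bound of Theorem~\ref{thm:lower_bound} enters; this extends the deterministic-MDP analysis of \citet{shah2019mcts}. Secondary obstacles are the KL-generalization bound for nearest neighbors (needing a regularity/Lipschitz assumption on $V^*$ and $P^*_\tau$ and a careful $(h,K)$ trade-off), the coupon-collector-type argument showing uniform random sampling realizes an $(h,K)$-representative set in expected time $O(KN(h)\log N(h))$, and checking that the parameter schedule ($n_l$, $\tau$, $h$, $K$, MCTS budget, $\rho$) is mutually consistent at every iteration so that all three properties hold simultaneously along the whole run.
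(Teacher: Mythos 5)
Your proposal is correct and follows essentially the same route as the paper's proof of Theorem~\ref{thm:instance_of_eis}: initialize $V_0\equiv 0$ so $C_{0,v}\le V_{\max}$, plug the module guarantees (Theorem~\ref{thm:mcts_improvement}, Propositions~\ref{prop:knn} and~\ref{prop:random_policy}) into the generic recursion and sample count with $\kappa=O(\tau^{-2}\rho^{-2l}\log\frac{1}{\tau\rho})$, $h^{(l)}=\Theta(\rho^l)$, $K^{(l)}=\Theta\big(\rho^{-2l}\log\frac{N(\rho^l)}{\rho^l}\big)$, $B=O(K^{(l)}N(h^{(l)})\log N(h^{(l)}))$, attach the $e\log(L/\delta)$ factor from Lemma~\ref{lem:exploration_prob}, and finish with $N(h)=\Theta(h^{-d})$ and $\rho^L\asymp\varepsilon$. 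One bookkeeping caveat: since Theorem~\ref{thm:mcts_improvement} bounds the policy KL error by $\zeta_p\,\varepsilon_{0,v}$ (the \emph{value} error), the paper does not use the generic choice $\zeta_p=\rho/c_p=\rho/c$ from your second paragraph; it sets $\zeta_v=\zeta_p=\rho/4$ and lets the policy error track $\frac{c}{4}\rho\,\omega_v^{(l)}$ (exactly the coupling you note in your first paragraph), which is what produces the constant $\frac{cV_{\max}}{4}\rho^L$ and keeps $\min\{\zeta_v,\zeta_p\}=\rho/4$; taking $\zeta_p=\rho/c$ instead is not what Theorem~\ref{thm:mcts_improvement} certifies against and, if pushed through $\min\{\zeta_v,\zeta_p\}$, would inflate the MCTS budget by a factor of order $c^2$ (exponential in $V_{\max}/\tau$), so in a full write-up run the value-error-driven recursion directly rather than invoking Property~\ref{property:improvement} and Proposition~\ref{prop:eis_sample_complexity} verbatim for the policy branch.
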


Theorem~\ref{thm:informal_example_converge} states that for a unit hypercube, the sample complexity of the instance of EIS scales as $\widetilde{O}\big({\varepsilon^{-(4+d)}}\big)$ (omitting the logarithmic factor). Note that the minimax lower bound in Theorem~\ref{thm:lower_bound} scales as $\widetilde{\Omega}\big({\varepsilon^{-(2+d)}}\big)$. Hence, in terms of the dependence on the dimension, the instance we consider here is nearly optimal. We note that the $\widetilde{O}\big({\varepsilon^{-(4+d)}}\big)$ sample complexity results from two parts: the MCTS contributes a sample complexity scaling as $\varepsilon^{-2}$ due to simulating the search tree, while nearest neighbor requires $\varepsilon^{-(2+d)}$ samples due to the need of sufficiently many good neighbors. 
Obtaining tighter bound with potentially more powerful improvement module or supervised learning module such as neural networks is an interesting future avenue.

\section{Conclusion}\label{sec:conclusion}

In this paper, we take theoretical steps towards understanding reinforcement learning for zero-sum turn-based Markov games. 
We develop the Explore-Improve-Supervise (EIS) method with three intuitive modules intertwined carefully. Such an abstraction of three key modules allows us to isolate the fundamental principles from the implementation details. Importantly, we identify conditions for successfully finding the optimal solutions, backed by a concrete instance satisfying those conditions. Overall, the abstraction and the generic properties developed in this paper could serve as some guidelines, with the potential of finding broader applications with different instantiations. Finally, it would be interesting  to extend this framework to general Markov games with simultaneous moves. We believe the generic modeling techniques in Section~\ref{subsec:properties} could be applied, but a key challenge is to develop an improvement module with rigorous non-asymptotic guarantees that satisfies the desired property. We believe that addressing this challenge and formally establishing the framework is a fruitful future direction.

\bibliography{ais20}

\begin{thebibliography}{38}
\providecommand{\natexlab}[1]{#1}
\providecommand{\url}[1]{\texttt{#1}}
\expandafter\ifx\csname urlstyle\endcsname\relax
  \providecommand{\doi}[1]{doi: #1}\else
  \providecommand{\doi}{doi: \begingroup \urlstyle{rm}\Url}\fi

\bibitem[Bowling \& Veloso(2001)Bowling and Veloso]{bowling2001rational}
Bowling, M. and Veloso, M.
\newblock Rational and convergent learning in stochastic games.
\newblock In \emph{International joint conference on artificial intelligence},
  volume~17, pp.\  1021--1026. Lawrence Erlbaum Associates Ltd, 2001.

\bibitem[Brafman \& Tennenholtz(2002)Brafman and Tennenholtz]{brafman2002Rmax}
Brafman, R.~I. and Tennenholtz, M.
\newblock R-max-a general polynomial time algorithm for near-optimal
  reinforcement learning.
\newblock \emph{Journal of Machine Learning Research}, 3\penalty0
  (Oct):\penalty0 213--231, 2002.

\bibitem[Hansen et~al.(2013)Hansen, Miltersen, and Zwick]{Hansen2013Turn_Game}
Hansen, T.~D., Miltersen, P.~B., and Zwick, U.
\newblock Strategy iteration is strongly polynomial for 2-player turn-based
  stochastic games with a constant discount factor.
\newblock \emph{J. ACM}, 60\penalty0 (1):\penalty0 1:1--1:16, February 2013.
\newblock ISSN 0004-5411.

\bibitem[Hu \& Wellman(2003)Hu and Wellman]{hu2003nash}
Hu, J. and Wellman, M.~P.
\newblock Nash q-learning for general-sum stochastic games.
\newblock \emph{Journal of machine learning research}, 4\penalty0
  (Nov):\penalty0 1039--1069, 2003.

\bibitem[Hu et~al.(1998)Hu, Wellman, et~al.]{hu1998multiagent}
Hu, J., Wellman, M.~P., et~al.
\newblock Multiagent reinforcement learning: theoretical framework and an
  algorithm.
\newblock In \emph{ICML}, volume~98, pp.\  242--250. Citeseer, 1998.

\bibitem[Jia et~al.(2019)Jia, Yang, and Wang]{jia2019featurebased}
Jia, Z., Yang, L.~F., and Wang, M.
\newblock Feature-based q-learning for two-player stochastic games, 2019.

\bibitem[Kaufmann \& Koolen(2017)Kaufmann and Koolen]{kaufmann2017monte}
Kaufmann, E. and Koolen, W.~M.
\newblock Monte-carlo tree search by best arm identification.
\newblock In \emph{Advances in Neural Information Processing Systems}, pp.\
  4897--4906, 2017.

\bibitem[Kearns et~al.(2002)Kearns, Mansour, and Ng]{kearns2002sparse}
Kearns, M., Mansour, Y., and Ng, A.~Y.
\newblock A sparse sampling algorithm for near-optimal planning in large markov
  decision processes.
\newblock \emph{Machine learning}, 49\penalty0 (2-3):\penalty0 193--208, 2002.

\bibitem[Kocsis et~al.(2006)Kocsis, Szepesv{\'a}ri, and
  Willemson]{kocsis2006improved}
Kocsis, L., Szepesv{\'a}ri, C., and Willemson, J.
\newblock Improved {Monte-Carlo} search.
\newblock \emph{Univ. Tartu, Estonia, Tech. Rep}, 2006.

\bibitem[Kumar \& Shiau(1981)Kumar and Shiau]{kumar1981stochastic}
Kumar, P.~R. and Shiau, T.-H.
\newblock Existence of value and randomized strategies in zero-sum
  discrete-time stochastic dynamic games.
\newblock \emph{SIAM Journal on Control and Optimization}, 19\penalty0
  (5):\penalty0 617--634, 1981.

\bibitem[Lagoudakis \& Parr(2002)Lagoudakis and Parr]{lagoudakis2002value}
Lagoudakis, M.~G. and Parr, R.
\newblock Value function approximation in zero-sum markov games.
\newblock In \emph{Proceedings of the Eighteenth conference on Uncertainty in
  artificial intelligence}, pp.\  283--292. Morgan Kaufmann Publishers Inc.,
  2002.

\bibitem[Lillicrap et~al.(2016)Lillicrap, Hunt, Pritzel, Heess, Erez, Tassa,
  Silver, and Wierstra]{lillicrap2015continuous}
Lillicrap, T.~P., Hunt, J.~J., Pritzel, A., Heess, N., Erez, T., Tassa, Y.,
  Silver, D., and Wierstra, D.
\newblock Continuous control with deep reinforcement learning.
\newblock 2016.

\bibitem[Littman(1994)]{littman1994markov}
Littman, M.~L.
\newblock Markov games as a framework for multi-agent reinforcement learning.
\newblock In \emph{Machine Learning Proceedings 1994}, pp.\  157--163.
  Elsevier, 1994.

\bibitem[Littman(2001{\natexlab{a}})]{littman2001qlearning}
Littman, M.~L.
\newblock Friend-or-foe q-learning in general-sum games.
\newblock In \emph{ICML}, volume~1, pp.\  322--328, 2001{\natexlab{a}}.

\bibitem[Littman(2001{\natexlab{b}})]{littman2001value}
Littman, M.~L.
\newblock Value-function reinforcement learning in markov games.
\newblock \emph{Cognitive Systems Research}, 2\penalty0 (1):\penalty0 55--66,
  2001{\natexlab{b}}.

\bibitem[Maitra \& Parthasarathy(1970)Maitra and
  Parthasarathy]{maitra1970stochastic}
Maitra, A. and Parthasarathy, T.
\newblock On stochastic games.
\newblock \emph{Journal of Optimization Theory and Applications}, 5\penalty0
  (4):\penalty0 289--300, 1970.

\bibitem[Maitra \& Parthasarathy(1971)Maitra and
  Parthasarathy]{maitra1971stochastic2}
Maitra, A. and Parthasarathy, T.
\newblock On stochastic games, ii.
\newblock \emph{Journal of Optimization Theory and Applications}, 8\penalty0
  (2):\penalty0 154--160, 1971.

\bibitem[Mnih et~al.(2015)Mnih, Kavukcuoglu, Silver, Rusu, Veness, Bellemare,
  Graves, Riedmiller, Fidjeland, Ostrovski, et~al.]{mnih2015human}
Mnih, V., Kavukcuoglu, K., Silver, D., Rusu, A.~A., Veness, J., Bellemare,
  M.~G., Graves, A., Riedmiller, M., Fidjeland, A.~K., Ostrovski, G., et~al.
\newblock Human-level control through deep reinforcement learning.
\newblock \emph{Nature}, 518\penalty0 (7540):\penalty0 529, 2015.

\bibitem[Parthasarathy(1973)]{parthasarathy1973stochastic}
Parthasarathy, T.
\newblock Discounted, positive, and noncooperative stochastic games.
\newblock \emph{International Journal of Game Theory}, 2\penalty0 (1):\penalty0
  25--37, 1973.

\bibitem[Patek(1997)]{Patek1997SSPG}
Patek, S.~D.
\newblock \emph{Stochastic Shortest Path Games: Theory and Algorithms}.
\newblock {PhD} dissertation, Massachusetts Institute of Technology, 1997.

\bibitem[Perolat et~al.(2015)Perolat, Scherrer, Piot, and
  Pietquin]{Perolat15Game}
Perolat, J., Scherrer, B., Piot, B., and Pietquin, O.
\newblock Approximate dynamic programming for two-player zero-sum markov games.
\newblock In \emph{Proceedings of the 32nd International Conference on Machine
  Learning}, pp.\  1321--1329, 2015.

\bibitem[P{\'e}rolat et~al.(2016)P{\'e}rolat, Piot, Geist, Scherrer, and
  Pietquin]{perolat2016API}
P{\'e}rolat, J., Piot, B., Geist, M., Scherrer, B., and Pietquin, O.
\newblock Softened approximate policy iteration for markov games.
\newblock In \emph{ICML 2016-33rd International Conference on Machine
  Learning}, 2016.

\bibitem[Schulman et~al.(2015)Schulman, Levine, Abbeel, Jordan, and
  Moritz]{schulman2015trust}
Schulman, J., Levine, S., Abbeel, P., Jordan, M., and Moritz, P.
\newblock Trust region policy optimization.
\newblock In \emph{International Conference on Machine Learning}, pp.\
  1889--1897, 2015.

\bibitem[Schulman et~al.(2017)Schulman, Wolski, Dhariwal, Radford, and
  Klimov]{schulman2017proximal}
Schulman, J., Wolski, F., Dhariwal, P., Radford, A., and Klimov, O.
\newblock Proximal policy optimization algorithms.
\newblock \emph{arXiv preprint arXiv:1707.06347}, 2017.

\bibitem[Shah \& Xie(2018)Shah and Xie]{shah2018qlnn}
Shah, D. and Xie, Q.
\newblock Q-learning with nearest neighbors.
\newblock In \emph{Advances in Neural Information Processing Systems 31}, pp.\
  3115--3125. 2018.

\bibitem[Shah et~al.(2019)Shah, Xie, and Xu]{shah2019mcts}
Shah, D., Xie, Q., and Xu, Z.
\newblock On reinforcement learning using {Monte Carlo} tree search with
  supervised learning: Non-asymptotic analysis.
\newblock \emph{arXiv preprint arXiv:1902.05213}, 2019.

\bibitem[Shapley(1953)]{shapley1953stochastic}
Shapley, L.~S.
\newblock Stochastic games.
\newblock \emph{Proceedings of the National Academy of Sciences}, 39\penalty0
  (10):\penalty0 1095--1100, 1953.

\bibitem[Sidford et~al.(2019)Sidford, Wang, Yang, and Ye]{sidford2019solving}
Sidford, A., Wang, M., Yang, L.~F., and Ye, Y.
\newblock Solving discounted stochastic two-player games with near-optimal time
  and sample complexity, 2019.

\bibitem[Silver et~al.(2016)Silver, Huang, Maddison, Guez, Sifre, Van
  Den~Driessche, Schrittwieser, Antonoglou, Panneershelvam, Lanctot,
  et~al.]{silver2016go}
Silver, D., Huang, A., Maddison, C.~J., Guez, A., Sifre, L., Van Den~Driessche,
  G., Schrittwieser, J., Antonoglou, I., Panneershelvam, V., Lanctot, M.,
  et~al.
\newblock Mastering the game of {Go} with deep neural networks and tree search.
\newblock \emph{Nature}, 529\penalty0 (7587):\penalty0 484--489, 2016.

\bibitem[Silver et~al.(2017{\natexlab{a}})Silver, Hubert, Schrittwieser,
  Antonoglou, Lai, Guez, Lanctot, Sifre, Kumaran, Graepel,
  et~al.]{silver2017chess}
Silver, D., Hubert, T., Schrittwieser, J., Antonoglou, I., Lai, M., Guez, A.,
  Lanctot, M., Sifre, L., Kumaran, D., Graepel, T., et~al.
\newblock Mastering chess and {Shogi} by self-play with a general reinforcement
  learning algorithm.
\newblock \emph{arXiv preprint arXiv:1712.01815}, 2017{\natexlab{a}}.

\bibitem[Silver et~al.(2017{\natexlab{b}})Silver, Schrittwieser, Simonyan,
  Antonoglou, Huang, Guez, Hubert, Baker, Lai, Bolton,
  et~al.]{silver2017mastering}
Silver, D., Schrittwieser, J., Simonyan, K., Antonoglou, I., Huang, A., Guez,
  A., Hubert, T., Baker, L., Lai, M., Bolton, A., et~al.
\newblock Mastering the game of {Go} without human knowledge.
\newblock \emph{Nature}, 550\penalty0 (7676):\penalty0 354, 2017{\natexlab{b}}.

\bibitem[Stone(1982)]{stone1982optimal}
Stone, C.~J.
\newblock Optimal global rates of convergence for nonparametric regression.
\newblock \emph{The Annals of Statistics}, pp.\  1040--1053, 1982.

\bibitem[Szepesv{\'a}ri \& Littman(1996)Szepesv{\'a}ri and
  Littman]{szepesvari1996game}
Szepesv{\'a}ri, C. and Littman, M.~L.
\newblock Generalized {Markov} decision processes: Dynamic-programming and
  reinforcement-learning algorithms.
\newblock In \emph{Proceedings of International Conference of Machine
  Learning}, volume~96, 1996.

\bibitem[Tsybakov(2009)]{tsybakov2009nonparm}
Tsybakov, A.~B.
\newblock \emph{{Introduction to Nonparametric Estimation}}.
\newblock Springer Series in Statistics. Springer, 2009.

\bibitem[Wainwright(2019)]{wainwright2019high}
Wainwright, M.~J.
\newblock \emph{High-dimensional statistics: A non-asymptotic viewpoint},
  volume~48.
\newblock Cambridge University Press, 2019.

\bibitem[Wei et~al.(2017)Wei, Hong, and Lu]{wei2017game}
Wei, C.-Y., Hong, Y.-T., and Lu, C.-J.
\newblock Online reinforcement learning in stochastic games.
\newblock In \emph{Advances in Neural Information Processing Systems}, pp.\
  4987--4997, 2017.

\bibitem[Yang et~al.(2019{\natexlab{a}})Yang, Zhang, Xu, and
  Katabi]{yang2019harnessing}
Yang, Y., Zhang, G., Xu, Z., and Katabi, D.
\newblock Harnessing structures for value-based planning and reinforcement
  learning.
\newblock \emph{arXiv preprint arXiv:1909.12255}, 2019{\natexlab{a}}.

\bibitem[Yang et~al.(2019{\natexlab{b}})Yang, Xie, and Wang]{yang2019deepDQN}
Yang, Z., Xie, Y., and Wang, Z.
\newblock A theoretical analysis of deep q-learning, 2019{\natexlab{b}}.

\end{thebibliography}
\bibliographystyle{icml2020}

\appendix
\appendixpage

\section{Preliminary Facts}\label{sec:appendix_prelim}

The following inequalities are used for developing our technical results:

{\bf Jensen's Inequality:} Let $X$ be a random variable and $\phi$ be a convex function, then $\phi(\mathbb{E}[X])\leq \mathbb{E}[\phi(X)]$.

{\bf Pinsker's Inequality:} Let $\mu$ and $\nu$ be two probability distributions, then the total variation distance $\textup{TV}(\mu,\nu)$ and the KL divergences $D_{\text{KL}}(\mu \Vert \nu)$ satisfy the bound
\begin{equation*}
\textup{TV}(\mu,\nu)\leq \sqrt{\frac{1}{2}D_{\text{KL}}(\mu,\nu)}
\end{equation*}
Note that if $\mu$ and $\nu$ are discrete distributions, then $\textup{TV}(\mu,\nu)=\frac{1}{2}\sum_{\omega\in\Omega}|\mu(\omega)-\nu(\omega)| = \frac{1}{2} \| \mu - \nu \|_1$, where $\|\cdot \|_1$ denotes the total variation (or $\ell_1$) norm.

{\bf Reverse Pinsker's Inequality:} If $\mu$ and $\nu$ are discrete distributions defined on a common finite set $\Omega$, then
\begin{equation*}
\textup{TV}(\mu,\nu)\geq \sqrt{\frac{\alpha_\nu}{2}D_{\text{KL}}(\mu,\nu)},
\end{equation*}
where $\alpha_\nu := \min_{\omega \in \Omega}  \nu(\omega)$. 

{\bf Log-sum inequality:} Suppose that $a_{i}\ge0,b_{i}\ge0,\forall\: i$. We have
\[
\left(\sum_{i}a_{i}\right)\log\frac{\sum_{i}a_{i}}{\sum_{i}b_{i}}\le\sum_{i}a_{i}\log\frac{a_{i}}{b_{i}}.
\]

\section{Proof of Theorem~\ref{thm:eis_convergence}}
\begin{proof} With the three detailed properties, the proof is conceptually straightforward. At each iteration, the improvement module would produce better estimates for the explored states, by factors of $\zeta_v$ and $\zeta_p$. The exploration continues until one of the desired representative sets in $\mathcal{E}$ has been visited, and the exploration property guarantees that the exploration time will be finite. The current iteration then ends by calling the supervised learning module to generalize the improvement to the entire state space. In what follows, we make these statements formal. 
 
Let us first introduce some notion. We will use the term iteration to refer to a complete round of improvement, exploration and supervised learning (cf. Line 2 of Algorithm \ref{alg:eis}). 
In general, at each iteration, we use a superscript $(l)$ to denote quantities relevant to the $l$-th iteration, except that for the supervised learning module, we follow the convention in the paper and use a subscript $l$, i.e., $f_l=(V_l,\pi_l)$. We denote by $Z^{(l)}$ all the information
during the $l$-th iteration. Let $\{\mathcal{F}^{(l)}\}$
be the sigma-algebra generated by the stochastic process $\{Z^{(l)}\}$, where the randomness comes from the environment and any randomness that may be used in the three modules. Let $\omega_v^{(l)}$ and $\omega_p^{(l)}$ be the estimation errors of the model at the beginning of the iteration, i.e.,
\begin{align*}
    \omega_{v}^{(l)}&\triangleq\E\Big[\big\Vert V_{{l-1}}-V^{*}\big\Vert_{\infty}\big],\\
    \omega_p^{(l)}&\triangleq \sup_{s\in\mS}\E\Big[D_{\textup{KL}}\big({\pi}_{l-1}(\cdot|s)\Vert P_\tau^{*}(\cdot|s)\big)\Big].
\end{align*}
We use $\mathcal{D}^{(l)}=\Big\{\big(s_{i},\hat{V}^{(l)}(s_{i}),\hat{\pi}^{(l}(\cdot|s_{i})\big)\Big\}_{i=1}^{n_l}$
to denote the set of training data generated by the exploration module and querying the improvement module during the $l$-th iteration. Let $S^{(l)}=\{s_{i}\}_{i=1}^{n_{l}}$
be the set of states visited by the exploration module. Correspondingly, the estimation errors for the value function and the optimal policy after querying the improvement module are denoted by $\varepsilon_{v}^{(l)}$
and $\varepsilon_{p}^{(l)}$, respectively: 
\begin{align*}
\varepsilon_{v}^{(l)} & =\sup_{s\in S^{(l)}}\E\Big[|\hat{V}^{(l)}(s)-V^{*}(s)|\Big],\\
\varepsilon_{p}^{(l)} & =\sup_{s\in S^{(l)}}\E\Big[D_{\textup{KL}}\big(\hat{\pi}^{(l)}(\cdot|s)\Vert P_{\tau}^{*}(\cdot|s)\big)\Big].
\end{align*}

At the $l$-th iteration, the supervised learning modules takes the outputs
of the improvement module, $\mathcal{D}^{(l)}$, as the training data. Let $\xi_{v}^{(l)}$
and $\xi_{p}^{(l)}$ denote the estimation errors for the new model $f_l=(V_l,\pi_l)$, after querying the supervised learning module:
\begin{align*}
\xi_{v}^{(l)} & =\E\Big[\big\Vert V_{{l}}-V^{*}\big\Vert_{\infty}\Big],\\
\xi_{p}^{(l)} & =\sup_{s\in S}\E\Big[D_{\textup{KL}}\big(\pi_{{l}}(\cdot|s)\Vert P_{\tau}^{*}(\cdot|s)\big)\Big].
\end{align*}
Note that by definition, $\xi_v^{(l)}=\omega_v^{(l+1)}$ and $\xi_p^{(l)}=\omega_p^{(l+1)}$. 

First, the improvement property of the improvement module (cf. Property~\ref{property:improvement}) implies that
\begin{align}
    \varepsilon^{(l)}_v &\leq \zeta_v \omega_{v}^{(l)}, \label{eq:thm2_epsilon_1}\\
    \varepsilon^{(l)}_p &\leq \zeta_p \omega_{p}^{(l)}.\label{eq:thm2_epsilon_2}
\end{align}

For the supervised learning module, according to the generalization property (cf. Property~\ref{property:sl}), when the size of training set $n_l$
is sufficiently large and the sampled states $S^{(l)}=\{s_{i}\}_{i=1}^{n_l}$
are representative  of the state space, the same order of accuracy of the training data will be generalized to the entire state space. For now, let us assume that this is the case; we will come back to verify the generalization bound in Property~\ref{property:sl} can indeed be satisfied by $S^{(l)}$.
Then, the following bounds hold:
\begin{align*}
\xi_{v}^{(l)} & \leq c_{v}\varepsilon^{(l)}_v,  \\
\xi_{p}^{(l)} & \leq c_{p} \varepsilon^{(l)}_p.
\end{align*}
Hence 
\begin{align*}
    \omega_v^{(l+1)}=\xi_{v}^{(l)} \leq c_{v}\zeta_{v} \omega_v^{(l)},\\
    \omega_p^{(l+1)}=\xi_{p}^{(l)} \leq c_{p}\zeta_{p} \omega_p^{(l)}.
\end{align*}
Therefore, when querying the improvement module, if we select the improvement factors to be
\begin{equation}
\zeta_v=\rho/c_v\textrm{ and } \zeta_p=\rho/c_p,\label{eq:thm2_zeta}
\end{equation}
then we have
\begin{align}
    \omega_v^{(l+1)}=\xi_{v}^{(l)} \leq \rho \omega_v^{(l)}, \label{eq:thm2_omega_1}\\
    \omega_p^{(l+1)}=\xi_{p}^{(l)} \leq \rho \omega_p^{(l)}.\label{eq:thm2_omega_2}
\end{align}
It is worth taking note of the fact that $c_v$ and $c_p$ would be larger than $1$ (cf. Property~\ref{property:sl}): a reasonable supervised learning model may generalize the same order of accuracy as training data, but unlikely for it be smaller; hence, 
$\zeta_v$ and $\zeta_p$ are required to be smaller than $1$ in Property~\ref{property:improvement} so that $\rho < 1$.

By definition, $\omega_v^{(1)}=C_{0,v}$ and $\omega_p^{(1)}=C_{0,p}$. Therefore, we have the desired inequalities:
\begin{align*}
    &\E\Big[\big\Vert V_{{L}}-V^{*}\big\Vert_{\infty}\big]\leq C_{0,v}\rho^L\\
    & \E\Big[D_{\textup{KL}}\big({\pi}_{L}(\cdot|s)\Vert P_\tau^{*}(\cdot|s)\big)\Big] \leq C_{0,p}\rho^L,\quad \forall s\in \mathcal{S}.
\end{align*}

Finally, to complete the proof, as we mentioned before, we need to verify that we could sample enough representative states at each iteration in finite time steps. This is indeed guaranteed by the exploration property. In particular, note that at the $l$-th iteration, we would like to sample enough representative states that are of errors $\zeta_v\omega_v^{(l)}$ and $\zeta_p\omega_p^{(l)}$  for the value and policy functions (cf. Eqs.~(\ref{eq:thm2_epsilon_1}) and (\ref{eq:thm2_epsilon_2})). By a recursive argument (cf. Eqs.~(\ref{eq:thm2_omega_1}) and (\ref{eq:thm2_omega_2})), it is not hard to see that at the $l$-th iteration, we need to query the exploration module until the sampled states, $S^{(l)}=\{s_{i}\}_{i=1}^{n_{l}}$, contain one of the representative sets in $\mathcal{E}(\tau,\zeta_vC_{0,v}\rho^{l-1},\zeta_pC_{0,p}\rho^{l-1})$, i.e., we immediately stop querying the exploration module at time $n_l$ when the following holds:
\begin{equation*}
    \exists \:\hat{S}\in\mathcal{E}(\tau,\zeta_vC_{0,v}\rho^{l-1},\zeta_pC_{0,p}\rho^{l-1}) \textrm{ such that }\hat{S}\subset \{s_{i}\}_{i=1}^{n_l},
\end{equation*}
where $\zeta_v$ and $\zeta_p$ are given by Eq.~(\ref{eq:thm2_zeta}).
From the exploration property, we know that $\E[T(\tau,\zeta_vC_{0,v}\rho^{l-1},\zeta_pC_{0,p}\rho^{l-1})]$ is finite, which implies that $n_l$ is also finite with high probability. Therefore, we are guaranteed that the training data $D^{(l)}$ contains one of the representative sets, and hence the supervised learning module will generalize at each iteration. This completes the proof of Theorem \ref{thm:eis_convergence}.

\end{proof}
\section{Proof of Proposition~\ref{prop:eis_sample_complexity}}
To prove Proposition \ref{prop:eis_sample_complexity}, we first establish the following useful lemma:
\begin{lem}
\label{lem:exploration_prob}
Consider the exploration module and suppose that $\E[T(\tau,\varepsilon_{1,v},\varepsilon_{1,p})]\leq B(\tau,\varepsilon_{1,v},\varepsilon_{1,p})$. Then, with probability at least $1-\delta$, 
\begin{equation}
    T(\tau,\varepsilon_{1,v},\varepsilon_{1,p})\leq e\cdot B(\tau,\varepsilon_{1,v},\varepsilon_{1,p})\cdot\log\frac{1}{\delta}.
\end{equation}
\end{lem}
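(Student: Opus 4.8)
Prove Lemma \ref{lem:exploration_prob}: if $\E[T] \le B$ for the (nonnegative, integer-valued) stopping time $T = T(\tau,\varepsilon_{1,v},\varepsilon_{1,p})$, then $\P[T \le e\cdot B\cdot\log\frac1\delta] \ge 1-\delta$.

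The plan is to combine Markov's inequality with a restart/boosting argument. The naive application of Markov's inequality gives $\P[T \ge \lambda B] \le 1/\lambda$, which only yields a $1/\delta$ factor rather than the desired $\log\frac1\delta$. To get the logarithmic dependence, I would exploit that the exploration process can be viewed in \emph{independent blocks}: run the exploration module for $\lceil eB\rceil$ steps; by Markov's inequality (with $\lambda = e$), the probability that a representative set in $\mathcal{E}$ has \emph{not} yet been collected within this block is at most $1/e$. Crucially, whether or not the block succeeds, after $\lceil eB\rceil$ further steps the conditional expected remaining time to hit a representative set is again at most $B$ — this uses the fact that $B(\tau,\varepsilon_{1,v},\varepsilon_{1,p})$ in Property \ref{property:exploration} is a bound on $\E[T(\cdot,s)]$ that holds uniformly over all starting states $s\in\mS$, so one may restart the Markov-inequality argument from whatever state the process currently occupies. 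Hence the events "block $j$ fails" can each be bounded by $1/e$, and chaining $k$ such blocks gives $\P[T > k\lceil eB\rceil] \le e^{-k}$.

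The key steps, in order: (1) Fix the parameters and abbreviate $T$, $B$; recall from Property \ref{property:exploration} that $\E[T(\cdot\,;s)] \le B$ for every $s\in\mS$ with $B$ independent of $s$. (2) Define blocks of length $m \triangleq \lceil eB\rceil$; let $A_j$ be the event that no set in $\mathcal{E}$ is contained in the explored states by the end of block $j$. (3) Show $\P[A_1] \le \E[T]/m \le B/(eB) = 1/e$ by Markov. (4) Show $\P[A_{j+1}\mid A_j] \le 1/e$: conditioned on $A_j$, the process sits at some state $s'$ at the start of block $j+1$, and the additional time to collect a representative set is stochastically dominated by $T(\cdot\,;s')$, whose expectation is $\le B$ by Property \ref{property:exploration}; apply Markov again within the block. (5) Iterate: $\P[A_k] \le (1/e)^k = e^{-k}$. (6) Choose $k = \lceil \log\frac1\delta\rceil$ so that $e^{-k}\le\delta$; then with probability $\ge 1-\delta$ we have $T \le k\cdot m = \lceil\log\tfrac1\delta\rceil\cdot\lceil eB\rceil \le e\cdot B\cdot\log\frac1\delta$ up to the ceiling rounding, which I would absorb by a mild constant adjustment (or by noting the statement tolerates the $e$ vs $\lceil e\cdot\rceil$ slack when $B\log\frac1\delta \ge 1$; the $B\ge 1$, $\delta < 1/e$ regime is the relevant one, and small cases are handled directly).

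The main obstacle is step (4): justifying that the argument restarts cleanly block-to-block. This requires that Property \ref{property:exploration}'s bound is genuinely a bound on the expected \emph{remaining} exploration time from \emph{every} state — i.e., the strong Markov property of the exploration chain together with the state-independence of $B$ — and that $\mathcal{T}(t)$ only grows, so a representative set once contained stays contained. Both are true by construction ($s_{i+1} = \textrm{Exploration Module}(s_i)$ is a time-homogeneous Markov chain and $\mathcal{T}(t) = \{s_i\}_{i=1}^t$ is monotone), so the obstacle is really one of careful bookkeeping rather than a substantive difficulty. A secondary minor point is handling the ceilings so the final bound reads exactly $e\cdot B\cdot\log\frac1\delta$; I expect this to be routine.
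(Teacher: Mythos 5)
Your proposal is correct and follows essentially the same route as the paper's proof: divide the exploration into blocks (segments) of length $e\cdot B$, apply Markov's inequality within each block using the state-uniform bound $\E[T(\tau,\varepsilon_{1,v},\varepsilon_{1,p},s)]\le B$ to get failure probability at most $1/e$ conditionally on the past, and chain $\log\frac{1}{\delta}$ blocks to obtain the $\delta$ bound. The only differences are cosmetic (your cumulative events $A_j$ versus the paper's per-segment events $E_m$, and your explicit handling of ceilings, which the paper ignores).
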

\begin{proof}[Proof of Lemma \ref{lem:exploration_prob}] 
Consider a total time steps of $n=eB(\tau,\varepsilon_{1,v},\varepsilon_{1,p})\log\frac{1}{\delta}$. All the states, $\{s_i\}_{i=1}^n$, are sampled via querying the exploration module. Let us divide the total time steps $n$ into $M\triangleq\log(1/\delta)$ segments, each consisting of $h\triangleq eB(\tau,\varepsilon_{1,v},\varepsilon_{1,p})$ states. Denote by $\mS(m)$ the set of states in the $m$-th segment, i.e., $\mS(m)=\{s_i\}_{i=(m-1)h}^{mh-1}$. The key idea of the proof is to argue that with high probability, at least one of the sets $\mS(m)$, $m=1,2,\dots,M$ will contain a representative set in $\mathcal{E}(\tau,\varepsilon_{1,v},\varepsilon_{1,p})$.

Denote by $E_m$ the event that the $m$-th segment does not contain any the representative sets, i.e.,
\[E_m=\{\:\nexists\: \hat{S}\in\mathcal{E}(\tau,\varepsilon_{1,v},\varepsilon_{1,p})\textrm{ such that }\hat{S}\in\mS(m)\}.\]
Let $\mathcal{F}_m$ be the filtration containing information untill the end of segment $m$.
Since $\E[T(\tau,\varepsilon_{1,v},\varepsilon_{1,p})]\leq B(\tau,\varepsilon_{1,v},\varepsilon_{1,p})$, by Markov inequality, we have, 
\begin{equation*}
    \mathbb{P}\big(T(\tau,\varepsilon_{1,v},\varepsilon_{1,p})\geq h+1\big)\leq \frac{\E[T(\tau,\varepsilon_{1,v},\varepsilon_{1,p})]}{h+1}\leq\frac{B(\tau,\varepsilon_{1,v},\varepsilon_{1,p})}{h}=\frac{1}{e}.
\end{equation*}
This then implies that 
\begin{equation*}
    \mathbb{P}(E_m|\mathcal{F}_{m-1})\leq \frac{1}{e},\quad m\in[M].
\end{equation*}
Therefore,
\begin{align*}
    \mathbb{P}\Big( T(\tau,\varepsilon_{1,v},\varepsilon_{1,p})> e\cdot B(\tau,\varepsilon_{1,v},\varepsilon_{1,p})\cdot\log\frac{1}{\delta}\Big)&\leq\mathbb{P}(\:\forall\: m\in[M],\:E_m \textrm{ occurs})\\
    &\leq (\frac{1}{e})^m\\
    &=\delta,
\end{align*}
which completes the proof of Lemma \ref{lem:exploration_prob}.
\end{proof}

\begin{proof}[Proof of Proposition \ref{prop:eis_sample_complexity}]
With Lemma \ref{lem:exploration_prob}, we are now ready to prove Proposition \ref{prop:eis_sample_complexity}. This is achieved by simply counting the sample complexity for each of the $L$ iterations. As discussed in the convergence proof of Theorem \ref{thm:eis_convergence}, at the $l$-th iteration, we need to query the exploration module until the sampled states, $S^{(l)}=\{s_{i}\}_{i=1}^{n_{l}}$, contains one of the representative sets in $\mathcal{E}(\tau,C_{0,v}\rho^{l}/c_v,C_{0,p}\rho^{l}/c_p)$. For each of the explored states, a query of the improvement module incurs a deterministic sample complexity of $\kappa\Big(\tau,C_{0,v}\rho^{l-1},C_{0,p}\rho^{l-1},\frac{\rho}{c_v},\frac{\rho}{c_p}\Big)$, for the required improvement factors $\zeta_v=\rho/c_v$ and $\zeta_p=\rho/c_p$. Let us now apply Lemma \ref{lem:exploration_prob}. Then, we know 
\begin{equation*}
    \mathbb{P}\Big(n_l\leq e\cdot B\Big(\tau,\frac{C_{0,v}\rho^{l}}{c_v},\frac{C_{0,p}\rho^{l}}{c_p}\Big)\cdot\log\frac{L}{\delta}\Big)\geq 1-\frac{\delta}{L}.
\end{equation*}
That is, with probability at most ${\delta}/{L}$, the sample complexity of the $l$-th iteration is larger than
\[\kappa\Big(\tau,C_{0,v}\rho^{l-1},C_{0,p}\rho^{l-1},\frac{\rho}{c_v},\frac{\rho}{c_p}\Big)\cdot B\Big(\tau,\frac{C_{0,v}\rho^{l}}{c_v},\frac{C_{0,p}\rho^{l}}{c_p}\Big)\cdot e\cdot\log\frac{L}{\delta}.\]
Finally, applying union bound over the $L$ iterations, we have
\begin{align*}
    &\mathbb{P}\Big(\:\exists\: l\in[L] \textrm{ such that }n_l> e\cdot B\Big(\tau,\frac{C_{0,v}\rho^{l}}{c_v},\frac{C_{0,p}\rho^{l}}{c_p}\Big)\cdot\log\frac{L}{\delta}\Big)\\
    \leq& \sum_{l=1}^L\mathbb{P}\Big(n_l> e\cdot B\Big(\tau,\frac{C_{0,v}\rho^{l}}{c_v},\frac{C_{0,p}\rho^{l}}{c_p}\Big)\cdot\log\frac{L}{\delta}\Big)\\
    \leq&L\cdot\frac{\delta}{L}\\
    =&\delta.
\end{align*}
Therefore, with probability at least $1-\delta$, for every $l\in[L]$, $n_l\leq e\cdot B\Big(\tau,\frac{C_{0,v}\rho^{l}}{c_v},\frac{C_{0,p}\rho^{l}}{c_p}\Big)\cdot\log\frac{L}{\delta}$. Equivalently, with probability at least $1-\delta$, the total sample complexity is upper bounded by 
\begin{equation*}
    \sum_{l=1}^L\kappa\Big(\tau,C_{0,v}\rho^{l-1},C_{0,p}\rho^{l-1},\frac{\rho}{c_v},\frac{\rho}{c_p}\Big)\cdot B\Big(\tau,\frac{C_{0,v}\rho^{l}}{c_v},\frac{C_{0,p}\rho^{l}}{c_p}\Big)\cdot e\cdot\log \frac{L}{\delta}. 
\end{equation*}
\end{proof}
\section{Proof of Theorem~\ref{thm:lower_bound}} \label{appendix:proof_lower_bound}

The recent work~\cite{shah2018qlnn} establishes a lower bound on the sample complexity for reinforcement learning algorithms on MDPs. We follow a similar argument to establish a lower bound on the sample complexity for two-player zero-sum Markov games. We provide the proof for completeness. The key idea is to connect the problem of estimating the value function to the problem of non-parametric regression, and then leveraging known minimax lower bound for the latter. In particular, we show that a class of non-parametric regression problem can be embedded in a Markov game problem, so any algorithm for the latter can be used to solve the former. Prior work on non-parametric regression~\citep{tsybakov2009nonparm,stone1982optimal} establishes that a certain number of observations is \emph{necessary} to achieve a given accuracy using \emph{any} algorithms, hence leading to a corresponding necessary condition for the sample size of estimating the value function in a Markov game problem. We now provide the details. 

\noindent{\bf Step 1. Non-parametric regression}

Consider the following non-parametric regression problem:
Let $ \mS:=[0,1]^{d} $ and assume that we have $T$ independent pairs of random variables $(x_{1},y_{1}),\ldots,(x_{T},y_{T})$
such that 
\begin{equation}
\E\left[y_{t}|x_{t}\right]=f(x_{t}),\qquad x_{t}\in\mS \label{eq:regression}
\end{equation}
where $x_{t}\sim\text{uniform}(\mS)$ and $f:\mS\to\real$ is the
unknown regression function. Suppose that the conditional distribution of $ y_t $ given $ x_t=x $ is a Bernoulli distribution with mean $ f(x) $. We also assume that $ f $ is $1 $-Lipschitz continuous with respect to the Euclidean norm, i.e., 
\[
 |f(x)-f(x_0)|\leq  \vert x-x_0 \vert, \quad \forall x,x_0\in \mS.
 \]
Let $ \mathcal{F} $ be the collection of all $ 1 $-Lipschitz continuous function on $ \mathcal{X}$, i.e., 
\[
\mathcal{F}=\left\{ h|\text{\ensuremath{h} is a 1-Lipschitz function on \ensuremath{\mS}}\right\},
\] 
The goal is to estimate~$f$ given the observations $(x_{1},y_{1}),\ldots,(x_{T},y_{T})$ and the prior knowledge that $ f\in \mathcal{F} $. 

It is easy to verify that the above problem is a special case of the non-parametric regression problem considered in the work by \cite{stone1982optimal} (in particular, Example~2 therein).
Let $ \hat{f}_T $ denote an arbitrary (measurable) estimator of $ f $ based on the training samples $(x_{1},y_{1}),\ldots,(x_{T},y_{T})$.
By Theorem~1 in~\cite{stone1982optimal}, we have the following result: there exists a $ c>0 $ such that 
\begin{align}
\lim_{T\to\infty}\inf_{\hat{f}_{T}}\sup_{f\in\mF}\P\bigg(\big\Vert \hat{f}_{T}-f\big\Vert _{\infty}\ge c\Big(\frac{\log T}{T}\Big)^{\frac{1}{2+d}}\bigg)=1,
\end{align}
where infimum is over all possible estimators $ \hat{f}_T $. 

Translating this result to the non-asymptotic regime, we obtain the
following theorem.
\begin{thm}
	\label{thm:regression_lower_bound}Under the above assumptions,
	for any number $\delta\in(0,1)$, there exits some numbers $ c>0 $ and $T_{\delta}$
	such that 
	\[
	\inf_{\hat{f}_{T}}\sup_{f\in\mF}\P\bigg(\big\Vert \hat{f}_{T}-f\big\Vert _{\infty}\ge c\Big(\frac{\log T}{T}\Big)^{\frac{1}{2+d}}\bigg) \ge \delta, \qquad\text{for all \ensuremath{T\ge T_{\delta}}}.
	\]
\end{thm}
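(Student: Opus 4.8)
The goal is to convert the asymptotic minimax statement into a non-asymptotic one that holds for all $T$ above some threshold $T_\delta$. The plan is to argue by contradiction: suppose no such $c$ and $T_\delta$ exist, i.e., for \emph{every} choice of $c>0$ and every threshold, the inequality fails at arbitrarily large $T$; then construct a sequence $T_k \to \infty$ along which the quantity $\inf_{\hat f_T}\sup_{f\in\mF}\P(\cdots) < \delta$, and show this contradicts the $\limsup$ (equivalently $\lim$) equaling $1$ from the asymptotic result.

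More precisely, first I would fix the constant $c$ to be the one supplied by the asymptotic theorem of \cite{stone1982optimal}, so that $\lim_{T\to\infty} g(T) = 1$ where $g(T) \triangleq \inf_{\hat f_T}\sup_{f\in\mF}\P\big(\Vert \hat f_T - f\Vert_\infty \ge c(\log T/T)^{1/(2+d)}\big)$. By the definition of a limit, for this very $\delta \in (0,1)$ there exists $T_\delta$ such that $g(T) \ge 1-(1-\delta) = \delta$ for all $T \ge T_\delta$ — in fact one gets $g(T) \ge \delta$ directly since $g(T) \to 1 > \delta$. That is essentially the entire content: the non-asymptotic Theorem~\ref{thm:regression_lower_bound} is just an unpacking of the $\varepsilon$--$N$ definition of the limit, with the same constant $c$ and with $T_\delta$ chosen as the index beyond which $g(T)$ stays within $1-\delta$ of its limit $1$.

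The one point requiring a little care is the interchange between ``$\limsup$ of a probability'' and the existence of a single $c$: the asymptotic result already fixes $c$ inside the probability, so there is no quantifier swap to worry about — the limit is over $T$ only, with $c$ a fixed constant. I would also remark that monotonicity considerations are not needed; we simply invoke that a convergent sequence eventually enters every neighborhood of its limit. Thus the proof is three lines: invoke the asymptotic theorem to get $c$ and the limit statement; apply the definition of limit with tolerance $1-\delta$ to extract $T_\delta$; conclude $g(T)\ge\delta$ for all $T\ge T_\delta$.

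I do not anticipate a genuine obstacle here — this lemma is a bookkeeping step. The substantive work lies upstream (verifying the regression problem is an instance of Stone's Example~2, which is asserted) and downstream (the embedding of this regression problem into a two-player zero-sum Markov game, reducing game value estimation to estimating $f$, and then inverting $T \ge c'(\log T/T)^{1/(2+d)} \Leftrightarrow \varepsilon$ to read off $T \ge C' d\,\varepsilon^{-(d+2)}\log(\varepsilon^{-1})$). If anything, the only thing to be careful about in the statement as phrased is to keep the quantifier order ``there exist $c>0$ and $T_\delta$ such that for all $T\ge T_\delta$'' rather than the weaker ``for all $T\ge T_\delta$ there exists $c$'', which the proof indeed delivers since $c$ is chosen first, independently of $T$.
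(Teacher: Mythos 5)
Your proposal is correct and matches the paper's intent exactly: the paper also obtains Theorem~\ref{thm:regression_lower_bound} by fixing the constant $c$ from Stone's asymptotic result and unpacking the definition of $\lim_{T\to\infty}\inf_{\hat f_T}\sup_{f\in\mF}\P(\cdot)=1$ to extract $T_\delta$ with the stated quantifier order. The paper simply states this translation without writing out the $\varepsilon$--$N$ argument, so your three-line version is the same proof made explicit.
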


\medskip
\noindent{\bf Step 2. Two-player zero-sum Markov game}

Consider a class of (degenerate) two-player zero-sum discounted Markov game $ (\mS_1, \mS_2,$ $ \mA_1, \mA_2, r, P, \gamma) $,
where
\begin{align*}
&\mS_1  =[0,1/2]^{d},\mS_2=(1/2,1]^d,\\
&\mA_1 =\mA_2=\mA \text{ is finite},\\
&P(s,a)  \text{ is supported on a single state,}\\
&r(x,a)  =r(x)\text{ for all \ensuremath{a}},\\
&\gamma  =0.
\end{align*}
In words, the transition is deterministic, and the expected reward
is independent of the action taken and the current state. 

Let $R_{t}$ be the observed reward at step $t$. We assume that the distribution of $R_{t}$ given $x_{t}$ is $\text{Bernoulli}\big(r(x_{t})\big)$, independently of $(x_{1},x_{2},\ldots,x_{t-1})$.
The expected reward function $
r(x_{t})=\E\left[R(x_{t})|x_{t}\right]
$
is assumed to be $1$-Lipschitz and bounded.
It is easy to see that for all $x\in\mS$, $a\in\mA$, 
\begin{align}
V^{*}(x) & =r(x). \label{eq:V_r_function}
\end{align}

\smallskip
\noindent{\bf Step 3. Reduction from regression to a Markov game}

Given a non-parametric regression problem as described in Step $1$,
we may reduce it to the problem of estimating the value function $V^{*}$
of the Markov game described in Step $2$. To do this, we set
\begin{align*}
r(x) & =f(x), \qquad\forall x\in\mS
\end{align*}
and
\begin{align*}
R_{t} & =y_{t},\qquad t=1,2,\ldots,T.
\end{align*}
In this case, it follows from equations~\eqref{eq:V_r_function} that the value function is given by $V^{*}=f$. Moreover, the expected reward function $r(\cdot)$ is $1$-Lipschitz, so the assumptions of the Markov game in Step $ 2 $ are
satisfied. This reduction shows that the Markov game problem is at least as hard as
the nonparametric regression problem, so a lower bound for the latter
is also a lower bound for the former. 

Applying Theorem~\ref{thm:regression_lower_bound} yields the following result:
for any number	$\delta\in(0,1)$, there exist some numbers $c>0$ and $T_{\delta}>0$, such that 
	\[
	\inf_{\hat{V}_T}\sup_{V^*\in \mF}\P\bigg[ \big\Vert \hat{V}_T-V^{*}\big\Vert_{\infty}\ge c\left(\frac{\log T}{T}\right)^{\frac{1}{2+d}} \bigg] \ge\delta, \qquad\text{for all \ensuremath{T\ge T_{\delta}}}.
	\]
	Consequently, for any reinforcement learning algorithm $\hat{V}_T$
	and any sufficiently small $\varepsilon>0$, there exists a Markov game problem
	such that in order to achieve 
	\[
	\P\Big[\big\Vert \hat{V}_T-V^{*}\big\Vert_{\infty}<\varepsilon\Big]\ge 1-\delta,
	\]
	one must have 
	\[
	T\ge C'd\left(\frac{1}{\varepsilon}\right)^{2+d}\log\left(\frac{1}{\varepsilon}\right),
	\]
	where $C'>0$ is a constant.
\section{Details: A Concrete Instantiation of the Key Modules}\label{appendix:example}
In Section \ref{sec:example}, we provide a sketch of our instantiation of the key modules and their informal properties. In this appendix, we close the gap by giving a detailed treatment on each of the three modules. We discuss in details each instantiation and its formal property. Combining together, we provide a precise statement on the sample complexity of the overall EIS algorithm.

\subsection{Improvement Module: MCTS}
\label{subsec:improvement_example}
Recall that the improvement module should be capable of providing improved estimates for both the value and policy functions, at the queried state $s$. Since both the value and the policy are closely related to the $Q$ function, one simple approach to simultaneously produce improved estimates is to obtain better estimates of $Q^*$ first and then construct the improved estimates of value and policy from $\hat{Q}$. We will take this approach in this example and use MCTS to obtain the $Q$ estimates.   

MCTS is a class of popular search algorithms for sequential decision-makings, by building search trees and randomly sampling the state space. It is also one of the key components underlying the success of AlphaGo Zero. Most variants of MCTS in literature uses some forms of upper confidence bound (UCB) algorithm to select actions at each depth of the search tree. Since our focus is to demonstrate the improvement property, we employ a fixed $H$-depth MCTS, which takes the current model of the value function $V_l$ as inputs and outputs a value estimate $\hat{V}(s)$ of the root node $s$. The current model $V_l$ of the value function is used for evaluating the value of the leaf nodes at depth $H$ during the Monte Carlo simulation.  This fixed depth MCTS has been rigorously analyzed in \cite{shah2019mcts} with non-asymptotic error bound for the root node, when the state transition is deterministic. 
 
We refer readers to \cite{shah2019mcts} (precisely, Algorithm 2) for the details of the pseudo code. We remark that in principle, many other variants of MCTS that has a precise error guarantee could be used instead; we choose the fixed $H$-depth variant here to provide a concrete example.

We now lay down the overall algorithm of the improvement module in Algorithm \ref{alg:mcts} below. Recall that the state transition is deterministic and the reward $r(s,a)$ could be random (cf. Section \ref{sec:game}). Given the queried state $s$, note that the Q-value estimate $\hat{Q}(s,a)$ for each $a\in\mathcal{A}$ is given by the sum of two components: (1) empirical average of the reward $r(s,a)$; (2) the estimated value $\hat{V}(s\circ a)$ for the next state, returned from calling the fixed depth MCTS algorithm with $s\circ a$ being the root node. Further recall that we use player $\pone$ as the reference (i.e., $r(s,a)\triangleq r^1(s,a)$). The module then obtains improved value estimate $\hat{V}(s)$ by taking proper max or min of the Q-value estimates $\hat{Q}(s,a)$---depending on whether $I(s)$ is player $\pone$ (maximizer) or player $\ptwo$ (minimizer)---and improved policy estimate $\hat{\pi}(\cdot|s)$ as the Boltzmann policy based on $\hat{Q}(s,a)$. It is worth mentioning that the fixed depth MCTS algorithm was designed for discounted MDP in \cite{shah2019mcts}, but extending to game setting is straightforward as in literature \citep{kocsis2006improved,kaufmann2017monte}, i.e., by alternating between max nodes (i.e., $\pone$ plays) and min nodes (i.e., $\ptwo$) for each depth in the tree. We defer details to Appendix \ref{app:improvement_mcts}, where we prove the key theorem, Theorem \ref{thm:mcts_improvement}, of our improvement module.

\begin{algorithm}[h]
   \caption{Improvement Module}
   \label{alg:mcts}
\begin{algorithmic}[1]
   \STATE {\bfseries Input:} (1) Current model $f=(V,\pi)$; (2) root node $s$. 
   \STATE \texttt{/*}~~\texttt{Q-value estimates}\texttt{*/}
   \FOR {each $a\in\mathcal{A}$}
   \STATE call the fixed depth MCTS (Algorithm 2 of \cite{shah2019mcts}) 
   with: (1) depth $H$; (2) root node $s\circ a$; (3) current model $V$ for evaluating value of the leaf nodes at depth $H$; (4) number of MCTS simulation $m$. That is,
   \begin{equation}
       \hat{V}(s\circ a) = \textrm{Fixed Depth MCTS}(H, s\circ a, V,m)
       \vspace{-0.15in}
   \end{equation}
   \STATE simulate action $a$ for $m$ times to obtain an empirical average, $\hat{r}(s,a)$ , of the reward $r(s,a)$.
   \STATE form Q-value estimate for $Q^*(s,a)$:
   \begin{equation}
       \hat{Q}(s,a) = \hat{r}(s,a) + \gamma\hat{V}(s\circ a)
   \end{equation}
   \ENDFOR
   \STATE \texttt{/*}~~\texttt{Improved value and policy estimates}\texttt{*/}
   \STATE form value and policy estimates for $V^*(s)$ and $P^*_\tau(s)$ based on the Q-value estimates, i.e.,
   \begin{equation}
       \hat{V}(s) = \max_{a\in\mathcal{A}}\hat{Q}(s, a)\textrm{ and }\hat{\pi}(a|s)=\frac{e^{\hat{Q}(s,a)/\tau}}{\sum_{a'\in\mathcal{A}}e^{\hat{Q}(s,a')/\tau}} \textrm{ for every $a\in\mathcal{A}$}.
   \end{equation}
   if $I(s)$ is player $\ptwo$, then replace $\max$ with $\min$ in the value estimate and replace $\hat{Q}$ with $-\hat{Q}$ in the policy estimate (recall that the maximizer $\pone$ is the reference).
   \STATE {\bfseries Output:} improved estimates $\hat{V}(s)$ and  $\hat{\pi}(\cdot|s)$.
\end{algorithmic}
\end{algorithm}

The following theorem states the property of this specific improvement module (Algorithm \ref{alg:mcts}). It is not hard to see that it directly implies the desired improvement property, i.e., Property \ref{property:improvement}.

\begin{thm}
\label{thm:mcts_improvement}
Suppose that the state transitions are deterministic. Given the current model $f=(V,\pi)$, a small temperature $\tau <1$, and the improvement factors $0<\zeta_v<1$ and $0<\zeta_p<1$. Suppose that the current value model, $V$, satisfies that 
\begin{align*}
    \E\Big[||V- V^*||_\infty\Big]&\leq \varepsilon_{0,v}.
\end{align*}
Then, with appropriately chosen parameters for Algorithm~\ref{alg:mcts}, for each query state $s_0\in\mathcal{S}$, $ \big(\hat{V}(s_0), \hat{\pi}(\cdot|s_0)\big) = \textrm{Improvement Module}(f,s_0)$, we have: 
\begin{enumerate}
\item $
\mathbb{E}\Big[\big|\hat{V}(s_0)-V^*(s_0)|\big|\Big]\leq\zeta_v\cdot\varepsilon_{0,v},\: \textrm{  and  }\: \mathbb{E}\Big[D_{\textup{KL}}\big(\hat{\pi}(\cdot|s_0)||P^*_\tau(\cdot|s_0)\big)\Big]\leq \zeta_p\cdot\varepsilon_{0,v}.$

\item The above is achieved with a sample complexity of \[O\Big(\big({\tau\cdot\min\{\zeta_v,\zeta_p\}\cdot\varepsilon_{0,v}}\big)^{-2}\cdot \frac{\log({\tau\cdot\min\{\zeta_v,\zeta_p\}})}{\log \gamma}\Big).\]
\end{enumerate}

\end{thm}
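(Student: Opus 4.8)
\textbf{Proof proposal for Theorem~\ref{thm:mcts_improvement}.}

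\medskip
The plan is to split the analysis into three stages: (i) control the per-action $Q$-value estimation error produced by Algorithm~\ref{alg:mcts}; (ii) propagate this into the value estimate $\hat{V}(s_0)$; (iii) translate the $Q$-error into a KL bound on the Boltzmann policy $\hat{\pi}(\cdot|s_0)$ against $P^*_\tau(\cdot|s_0)$. Throughout, all parameters (the MCTS depth $H$, the number of simulations $m$) will be chosen at the very end to make the target accuracy and the claimed sample complexity come out.

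\medskip
First, for each fixed action $a$, the estimate is $\hat{Q}(s_0,a)=\hat{r}(s_0,a)+\gamma\hat{V}(s_0\circ a)$. The reward term contributes an error controlled by a Hoeffding/Bernstein bound: with $m$ samples, $\E|\hat{r}(s_0,a)-r(s_0,a)| = O(R_{\max}/\sqrt{m})$. For the value term, I would invoke the non-asymptotic guarantee of the fixed-depth MCTS of \citet{shah2019mcts}: with root $s_0\circ a$, depth $H$, leaf-evaluation model $V$ and $m$ simulations, the error $\E|\hat{V}(s_0\circ a)-V^*(s_0\circ a)|$ is bounded by a bias term of order $\gamma^{H}\cdot\E\|V-V^*\|_\infty \le \gamma^H\varepsilon_{0,v}$ (the leaf bias gets discounted over $H$ levels, alternating max/min nodes — extending their MDP analysis to the turn-based game by alternating max and min at successive depths, exactly as in \citet{kocsis2006improved,kaufmann2017monte}) plus a statistical term of order $\mathrm{poly}(\log m)/\sqrt{m}$ (up to the usual $1/(1-\gamma)$ or $V_{\max}$ factors). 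Combining, $\E|\hat{Q}(s_0,a)-Q^*(s_0,a)| \le \gamma^{H+1}\varepsilon_{0,v} + O\!\big(V_{\max}\sqrt{\log m /m}\big) =: \Delta$ for every $a$. Since $\hat{V}(s_0)$ and $V^*(s_0)$ are both obtained by the same $\max$ (or $\min$) over $a\in\mathcal{A}$, the contraction property $|\max_a x_a - \max_a y_a|\le \max_a |x_a-y_a|$ immediately gives $\E|\hat{V}(s_0)-V^*(s_0)|\le \Delta$. Choosing $H$ so that $\gamma^{H+1}\varepsilon_{0,v}\le \tfrac12\zeta_v\varepsilon_{0,v}$, i.e. $H = \Theta(\log(1/\zeta_v)/\log(1/\gamma))$, and $m$ so that the statistical term is $\le \tfrac12\zeta_v\varepsilon_{0,v}$, i.e. $m = \widetilde\Theta\big((\zeta_v\varepsilon_{0,v})^{-2}\big)$, yields the first value bound.

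\medskip
For the policy bound, the key observation is that $\hat{\pi}(\cdot|s_0)$ is the Boltzmann distribution at temperature $\tau$ built from $\hat{Q}(s_0,\cdot)$, while $P^*_\tau(\cdot|s_0)$ is the Boltzmann distribution from $Q^*(s_0,\cdot)$ (with a sign flip if $I(s_0)=\ptwo$). A standard softmax-stability computation shows that if $\|\hat{Q}(s_0,\cdot)-Q^*(s_0,\cdot)\|_\infty\le \Delta$ pointwise then $D_{\textup{KL}}(\hat{\pi}(\cdot|s_0)\|P^*_\tau(\cdot|s_0)) \le 2\Delta/\tau$ (compare log-partition functions and linear terms; each differs by at most $\Delta$, and the $1/\tau$ arises because $Q$ enters the exponent scaled by $1/\tau$). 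Taking expectations and using Jensen on $\max_a|\hat Q - Q^*|\le\sum_a|\hat Q - Q^*|$ (so the bound holds in expectation up to the finite constant $|\mathcal{A}|$, absorbed into $O(\cdot)$), we get $\E\, D_{\textup{KL}}\le O(\Delta/\tau)$. To force this below $\zeta_p\varepsilon_{0,v}$ we need $\Delta = O(\tau\zeta_p\varepsilon_{0,v})$; repeating the parameter choice with $\zeta$ replaced by $\min\{\zeta_v,\zeta_p\}$ and carrying the extra $\tau$ factor through the statistical term gives depth $H=\Theta(\log(1/(\tau\min\{\zeta_v,\zeta_p\}))/\log(1/\gamma))$ and simulation count $m=\widetilde\Theta\big((\tau\min\{\zeta_v,\zeta_p\}\varepsilon_{0,v})^{-2}\big)$. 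The total sample complexity is $|\mathcal{A}|$ (calls, one per action) times the per-call cost, which is roughly $m$ (plus the tree size, but the fixed-depth MCTS of \citet{shah2019mcts} uses a budget that is dominated by $m$ up to logarithmic/constant-in-$H$ factors), giving exactly the stated $O\big((\tau\min\{\zeta_v,\zeta_p\}\varepsilon_{0,v})^{-2}\log(\tau\min\{\zeta_v,\zeta_p\})/\log\gamma\big)$.

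\medskip
The main obstacle I anticipate is not the softmax or max-contraction algebra — those are routine — but cleanly importing and adapting the fixed-depth MCTS error bound of \citet{shah2019mcts} to the turn-based zero-sum setting. Their result is stated for discounted MDPs; one must verify that alternating max/min node backups do not break the concentration argument (they should not, since min is also $1$-Lipschitz in the sup norm and the bias still contracts by $\gamma$ per level), and one must track how the leaf-evaluation error $\E\|V-V^*\|_\infty\le\varepsilon_{0,v}$ enters — in expectation, with the model randomness and the MCTS randomness handled jointly via the tower property. A secondary subtlety is that Property~\ref{property:improvement} demands a bound on $\E\|V-V^*\|_\infty$ globally but the conclusion is pointwise at $s_0$; since the MCTS bias term only ever invokes the \emph{global} sup-norm error of the leaf model, this is consistent, but it must be stated carefully so that the improvement factor multiplies the \emph{input} $\varepsilon_{0,v}$ rather than a state-dependent quantity.
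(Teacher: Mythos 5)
Your proposal follows essentially the same route as the paper's proof: bound the per-action $Q$-error by combining the fixed-depth MCTS guarantee of \citet{shah2019mcts} (bias $\gamma^{H}\varepsilon_{0,v}$ plus an $O(m^{-1/2})$ statistical term, after verifying the max/min alternation for the game setting) with a Hoeffding bound on $\hat{r}$, then use the max-contraction inequality for $\hat{V}(s_0)$ and the softmax/KL stability bound $D_{\textup{KL}}\le 2\|\hat{Q}-Q^*\|_\infty/\tau$ for $\hat{\pi}$, and finally pick $H=\Theta\big(\log(\tau\min\{\zeta_v,\zeta_p\})/\log\gamma\big)$ and $m=O\big((\tau\min\{\zeta_v,\zeta_p\}\varepsilon_{0,v})^{-2}\big)$, with total cost $mH$ (times the constant $|\mathcal{A}|$). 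The only cosmetic differences are that the paper carries the $|\mathcal{A}|$ factor explicitly through the expectation-of-max and KL steps (where your value bound momentarily states $\E|\hat{V}(s_0)-V^*(s_0)|\le\Delta$ before absorbing $|\mathcal{A}|$) and attributes the $\log/\log\gamma$ factor explicitly to the $H$ transitions per simulation, but these do not change the argument.
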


\subsection{Supervised Learning Module: Nearest Neighbor Regression.}

To establish the generalization property of nearest neighbor supervised learning algorithm for estimating the optimal value function and policy, we make the following structural assumption about the Markov game. Specifically, we assume that
the optimal solutions (i.e., true regression function) are smooth in
some sense.  
\begin{assumption}
\label{assu:smooth}
(A1.) The state space $\mS$ is a compact subset of $\mathbb{R}^{d}$. The
chosen distance metric $d:\mathcal{S}\times\mathcal{S}\rightarrow\mathbb{R}_{+}$
associated with the state space $\mathcal{S}$ satisfies that $(\mathcal{S},d)$
forms a compact metric space. 
(A2.) The optimal value function $V^{*}:\mathcal{S}\rightarrow\mathbb{R}$ is bounded by $V_{\max}$ and
satisfies Lipschitz continuity with parameter $L_v$, i.e., $\forall s,s'\in\mathcal{S},$
$
|V^{*}(s)-V^{*}(s')|\leq L_v d(s,s').
$
(A3.) The optimal Boltzmann policy $P^*_{\tau}$ defined in Eq.~(\ref{eq:def_opt_Boltzmann}) is Lipschitz continuous with parameter $L_p(\tau)$, i.e., $\forall s,s'\in \mathcal{S},$ $\forall a\in A,$ 
$
|P^{*}_{\tau}(a|s)-P^{*}_{\tau}(a|s')|\leq L_p(\tau)d(s,s').
$
\end{assumption}

For each $h>0$, the compact $\mS$ has a finite $h/2$-covering number $N(h)$.  There
exists a \emph{partition} of~$\mS$, $\left\{ B_{j},j\in[N(h)]\right\} $,  such
that each $B_{j}$ has a diameter at most $h$, that is, $\sup_{x,y\in B_{j}}d(x,y)\le h$.
We assume that states in the training set, $T:=\left\{ s_{i},i\in[n]\right\} $,
are sufficiently representative in the sense that for any given $h$ and $K$,
the sample size $n$ can be chosen large enough to ensure that $\left|B_{j}\cap T\right|\ge K$ for all $j\in[N(h)]$. If $T$ satisfies this condition, we call it \emph{$(h,K)$-representative}.

Given the training data, we fit a value function $\vnn: \mS\to\mathbb{R}$ using the following Nearest Neighbor type algorithm: set
\[
\vnn(s)=\frac{1}{\left|B_{j}\cap T\right|}\sum_{s_i\in B_{j}\cap T}\hat{V}(s_i), \quad \forall s\in B_{j}.
\]
For each $a\in\mathcal{A}$, a similar algorithm can be used to fit
the action probability $\pinn (a|\cdot):\mS \to[0,1]$. 
The proposition below, proved in Appendix~\ref{appendix_proof_knn}, shows that this algorithm has the desired generalization property.

To simplify the notation, we use $\varepsilon_v$ and $\varepsilon_p$ to represent the estimation errors of the value function and the policy, respectively, for the training data. That is, $\varepsilon_v\triangleq\varepsilon_{1,v}$ and $\varepsilon_p\triangleq\varepsilon_{1,p}$ in Property 2.
\begin{prop}
\label{prop:knn} Suppose Assumption~\ref{assu:smooth} holds. If the training data is representative with respect to appropriate $h>0$ and $K>0$, the above regression algorithm satisfies Property~\ref{property:sl}. In particular, if
\begin{align}
    h=\min\Big\{\frac{\varepsilon_v}{L_v},\frac{\sqrt{\varepsilon_p/2}}{L_p}\Big\}, K=\max\Big\{\frac{V_{\max}^{2}}{2\varepsilon_{v}^{2}}\log\Big(\frac{4V_{\max}N}{\varepsilon_{v}}\Big),  \frac{1}{\varepsilon_p}\log\Big(\frac{4N}{\varepsilon_p}\Big) \Big\},\label{eq:knn_parameters}
\end{align}
we have 
\begin{align*}
\E\left[\left\Vert \vnn-V^{*}\right\Vert _{\infty}\right] & \le 4 \cdot \varepsilon_{v},\\
\E\left[D_{\textup{KL}}\big(\pinn(\cdot|s)\,\Vert\,P^{*}_\tau(\cdot|s)\big)\right] & \le c\cdot\varepsilon_{p},\quad \forall s \in \mS,
\end{align*}
where the constant $c>0$ is independent of $n$, the size of the training data. 
\end{prop}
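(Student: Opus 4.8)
\textbf{Proof proposal for Proposition~\ref{prop:knn}.}

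The plan is to prove the value and policy bounds separately, with each reducing to a standard bias--variance decomposition over the covering balls $\{B_j\}_{j\in[N(h)]}$. For the value function, fix $s\in\mathcal{S}$ and let $B_j$ be the covering ball containing $s$. Write
\[
|\vnn(s)-V^*(s)| \le \Big|\frac{1}{|B_j\cap T|}\sum_{s_i\in B_j\cap T}\big(\hat V(s_i)-V^*(s_i)\big)\Big| + \Big|\frac{1}{|B_j\cap T|}\sum_{s_i\in B_j\cap T}\big(V^*(s_i)-V^*(s)\big)\Big|.
\]
The second (bias) term is at most $L_v\cdot h \le \varepsilon_v$ by Lipschitz continuity (Assumption A2) and the choice $h\le \varepsilon_v/L_v$. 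For the first term I would split each summand as $\hat V(s_i)-V^*(s_i) = \big(\hat V(s_i)-\E[\hat V(s_i)]\big) + \big(\E[\hat V(s_i)]-V^*(s_i)\big)$; the centered part is a bounded-range ($[-2V_{\max},2V_{\max}]$) random variable so Hoeffding over the $|B_j\cap T|\ge K$ samples in the ball, combined with the choice of $K$ and a union bound over the $N(h)=N$ balls, controls it by $\varepsilon_v$ with high probability, and the residual $\E[\hat V(s_i)]-V^*(s_i)$ is at most $\varepsilon_v$ in absolute value by hypothesis (after a Jensen step, since we are told $\E[|\hat V(s_i)-V^*(s_i)|]\le\varepsilon_v$, I would carry the expectation through rather than conditioning — actually cleaner: take expectations first, bound $\E|\vnn(s)-V^*(s)|$ directly by the triangle inequality, using $\E[\,\text{sample mean deviation}\,]\lesssim V_{\max}/\sqrt K$ which is $\le\varepsilon_v$ by the choice of $K$). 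Summing the three contributions ($\varepsilon_v$ bias, $\varepsilon_v$ concentration, $\varepsilon_v$ training error, plus a small slack) yields $\E\|\vnn-V^*\|_\infty \le 4\varepsilon_v$; the supremum over $s$ is handled because the bound is uniform over the finitely many balls.

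For the policy bound I would apply the same averaging argument to each coordinate $\pinn(a|\cdot)$, $a\in\mathcal{A}$, obtaining $\E\big[|\pinn(a|s)-P^*_\tau(a|s)|\big] \le c'\varepsilon_p'$ for each $a$, where $\varepsilon_p'$ relates to $\sqrt{\varepsilon_p}$ — this is the point of the $\sqrt{\varepsilon_p/2}$ in the choice of $h$ and the $\ell_1$/TV scaling: the training-data guarantee is in KL, so by Pinsker $\mathrm{TV}(\hat\pi(\cdot|s_i),P^*_\tau(\cdot|s_i))\le\sqrt{\varepsilon_p/2}$, i.e. the $\ell_1$ error of the training probabilities is $O(\sqrt{\varepsilon_p})$, and the Lipschitz constant $L_p$ together with $h$ keeps the bias at the same order. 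After averaging and concentrating (again Hoeffding per ball, $|B_j\cap T|\ge K$, union bound over $N$ balls and $|\mathcal{A}|$ actions), $\|\pinn(\cdot|s)-P^*_\tau(\cdot|s)\|_1 = O(\sqrt{\varepsilon_p})$ uniformly in $s$. The final step converts this $\ell_1$ (equivalently TV) bound back to KL: this is where the Reverse Pinsker inequality from Appendix~\ref{sec:appendix_prelim} enters, giving $D_{\mathrm{KL}}(\pinn(\cdot|s)\Vert P^*_\tau(\cdot|s)) \le \frac{2}{\alpha}\,\mathrm{TV}^2 = O(\varepsilon_p)$, where $\alpha = \min_{a}P^*_\tau(a|s) \ge e^{-2V_{\max}/\tau}/|\mathcal{A}|$ is bounded below uniformly in $s$ because $Q^*$ is bounded — this lower bound is exactly why working with the Boltzmann policy $P^*_\tau$ rather than $\pi^*$ is essential, and it supplies the constant $c$ (depending on $\tau$, $V_{\max}$, $|\mathcal{A}|$ but not on $n$). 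A mild technical point I would need to handle: $\pinn(\cdot|s)$ as defined by coordinatewise averaging is automatically a probability distribution (average of distributions), so no renormalization is needed and the KL is well-defined; I should also invoke Jensen to pull the expectation inside the (convex) KL where convenient.

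The main obstacle is the policy bound — specifically, chaining the metrics correctly: the hypothesis is in KL, concentration and Lipschitzness naturally live in $\ell_1$/TV, and the conclusion must be back in KL, so the argument must pass KL $\xrightarrow{\text{Pinsker}}$ TV $\to$ (average, concentrate, use Lipschitz) $\xrightarrow{\text{Reverse Pinsker}}$ KL without losing track of how $\varepsilon_p$ transforms ($\varepsilon_p \to \sqrt{\varepsilon_p}$ and back to $\varepsilon_p$), and it must verify that the Reverse Pinsker constant $\alpha$ admits a uniform positive lower bound over all $s\in\mathcal{S}$ — which follows from boundedness of $Q^*$ by $V_{\max}$ but should be stated explicitly. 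The value-function part and the probabilistic bookkeeping (choice of $K$ to make the per-ball Hoeffding slack $\le\varepsilon_v$ after the union bound over $N$ balls, converting the high-probability statement to an expectation bound by absorbing the $O(V_{\max}\cdot\delta)$ tail) are routine given the stated $h,K$ in Eq.~(\ref{eq:knn_parameters}).
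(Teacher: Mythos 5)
Your value-function argument matches the paper's proof essentially step for step (bias $L_v h$, per-ball Hoeffding with union bound over the $N$ balls, training error $\varepsilon_v$ via Jensen, tail absorbed using boundedness by $V_{\max}$), and your overall architecture for the policy — Pinsker on the training KL, averaging/Lipschitz/concentration per ball, then Reverse Pinsker with a uniform lower bound $\alpha \ge e^{-2V_{\max}/\tau}/|\mathcal{A}|$ on $P^*_\tau(a|s)$ — is also the paper's. But there is a genuine gap in how you chain the metrics for the policy. You take the square root immediately (``the $\ell_1$ error of the training probabilities is $O(\sqrt{\varepsilon_p})$'') and then propagate a \emph{first-moment} bound, $\E\big[\|\pinn(\cdot|s)-P^*_\tau(\cdot|s)\|_1\big] = O(\sqrt{\varepsilon_p})$, before squaring via Reverse Pinsker. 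That last step is not valid at the level of expectations: Reverse Pinsker gives $\E[D_{\textup{KL}}] \le \frac{2}{\alpha}\E[\textup{TV}^2]$, and a bound on $\E[\textup{TV}]$ does not control $\E[\textup{TV}^2]$ (Jensen goes the wrong way; e.g.\ a coordinate error that equals $1$ with probability $\sqrt{\varepsilon_p}$ and $0$ otherwise has first moment $\sqrt{\varepsilon_p}$ but second moment $\sqrt{\varepsilon_p}\gg\varepsilon_p$). The best you get from your chain, using $|\cdot|\le 1$, is $\E[D_{\textup{KL}}] = O(\sqrt{\varepsilon_p}/\alpha)$, not the claimed $c\,\varepsilon_p$.

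The paper avoids this by never leaving the squared scale: Pinsker already gives a \emph{second-moment} bound on the training data, $\E\big[\max_a|\hat{\pi}(a|s_i)-P^*_\tau(a|s_i)|^2\big]\le \varepsilon_p/2$, and the whole averaging/Hoeffding/Lipschitz argument is then run on the squared coordinate errors, yielding $\E\big[\sup_s|\pinn(a|s)-P^*_\tau(a|s)|^2\big]\le 2\Delta^2+\varepsilon_p+4N\exp(-2K\Delta^2)+2L_p^2h^2 \le 4\varepsilon_p$ with $\Delta=\sqrt{\varepsilon_p/2}$; only then is Reverse Pinsker applied (together with $\max_a\E[\cdot]\ge\frac{1}{|\mathcal{A}|}\E[\max_a\cdot]$), giving $c=4|\mathcal{A}|^3/\alpha$. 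Your route can be repaired in one of two ways: either redo your per-ball analysis on squared deviations exactly as above, or keep your high-probability reading (uniform TV bound $O(\sqrt{\varepsilon_p})$ on a good event), in which case you must explicitly bound $D_{\textup{KL}}(\pinn\Vert P^*_\tau)\le \log(1/\alpha)$ on the bad event and choose the failure probability of order $\varepsilon_p$ — a step your writeup performs for the value function but omits for the policy, and which is needed to check that the stated $K$ in Eq.~(\ref{eq:knn_parameters}) still suffices.
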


The size of a representative data set should at least scale as $KN(h).$ Consider a simple setting where the state space is a unit volume hypercube in $\mathbb{R}^d$ with $l_{\infty}$ metric. By \citep[Lemma~5.7]{wainwright2019high}, 
the covering number $ N(h) $ of $\mS$ scales as $ \Theta\big((1/h)^d\big).$ Let $\varepsilon=\min\{\varepsilon_v,\sqrt{\varepsilon_p}\}.$ Note that $h=\Theta(\varepsilon)$
Therefore, to achieve the desired generalization property, the size of the training dataset should satisfy
\begin{align*}
    n\geq KN(h)=O\Big(\frac{K}{h^d}\Big)=O\bigg(\frac{d}{\varepsilon^{d+2}}\log\frac{1}{\varepsilon} \bigg).
\end{align*}    

\subsection{Exploration Module: Random Sampling Policy}

As stated in Proposition~\ref{prop:knn}, the sampled states for nearest neighbor regression should be $(h,K)$-representative, where $h,K$ are given by Eq.~(\ref{eq:knn_parameters}). 
{We will show that a random sampling policy---uniformly sampling the state space---}is able to visit a set of $(h,K)$-representative states within a finite expected number of steps. We need to assume that the state space $\mS$ is sufficiently regular near the boundary. In particular, we impose the following assumption which is naturally satisfied by convex compact sets in $\mathbb{R}^d$, for example.
\begin{assumption}
\label{assu:regularity}
The partition $\{B_j, j \in [N(h)] \}$ of $\mS$ satisfies
\begin{equation}
    \lambda(B_j) \ge c_0 \frac{\lambda(\mS)}{N(h)}, \quad  \forall j \in [N(h)],
\end{equation}
for some constant $c_0>0$, where $\lambda$ is the Lesbegue measure in $\mathbb{R}^d$.
\end{assumption}

\begin{prop} \label{prop:random_policy}
Suppose that the state space $\mS$ is a compact subset of $\mathbb{R}^d$ satisfying Assumption~\ref{assu:regularity}. Given temperature $\tau>0$, and estimation errors $\varepsilon_{v}>0$ and $\varepsilon_{p}>0$ for the value and policy respectively, define $h,K$ as in Eq.~(\ref{eq:knn_parameters}). Then the expected number of steps to obtain a set of $(h,K)$-representative states under the random sampling policy is upper bounded by 
$$B(\tau,\varepsilon_v,\varepsilon_p)=O\left(\frac{KN(h)}{c_0} \log N(h)\right).$$
\end{prop}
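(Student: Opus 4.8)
The plan is to recognize this as a \emph{coupon-collector problem with multiplicity}. Fix the partition $\{B_j : j\in[N(h)]\}$ of $\mS$ into cells of diameter at most $h$, where $N := N(h)$ is finite by compactness of $\mS$. Under the random sampling policy each queried state $s_i$ is an independent uniform draw from $\mS$, independent of the current state, so $s_i$ lands in cell $B_j$ with probability $p_j = \lambda(B_j)/\lambda(\mS)$, and Assumption~\ref{assu:regularity} guarantees $p_j \ge c_0/N$ for every $j$. The quantity of interest is the stopping time $T = \min\{t\ge 1 : |B_j \cap \{s_1,\dots,s_t\}| \ge K \text{ for all } j\in[N]\}$ (the first time the visited states become $(h,K)$-representative), and the goal is to show $\E[T] = O(KN\log N / c_0)$, with $h,K$ as in Eq.~(\ref{eq:knn_parameters}).

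First I would control a single time window. Set $t_0 := \lceil c\, KN\log N / c_0\rceil$ for a suitable absolute constant $c$. In any block of $t_0$ independent uniform samples, the number falling in $B_j$ is $\mathrm{Binomial}(t_0, p_j)$ with mean at least $t_0\, c_0/N \ge cK\log N \ge 2K$ for $c$ large (using that $K\ge 1$). A multiplicative Chernoff bound then gives $\P[\mathrm{Binomial}(t_0, p_j) < K] \le \P[\mathrm{Binomial}(t_0,p_j) < \tfrac12\E] \le \exp(-\Theta(K\log N)) \le 1/(2N)$ once $c$ is chosen large enough, and a union bound over the $N$ cells shows that, with probability at least $1/2$, every cell receives at least $K$ samples \emph{within that single block}.

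Next I would boost this to an expectation bound via independence across blocks. Partition the time horizon into consecutive blocks of length $t_0$; the samples in distinct blocks are independent. If in \emph{some} block every cell receives $\ge K$ samples, then $T$ is at most the end of that block, so $\{T > m t_0\}$ is contained in the event that \emph{all} of the first $m$ blocks fail, which has probability at most $(1/2)^m$ by independence. Therefore
$\E[T] = \sum_{t\ge 0}\P[T>t] \le t_0\sum_{m\ge 0}\P[T > mt_0] \le t_0\sum_{m\ge 0} 2^{-m} = 2t_0 = O\!\big(KN\log N/c_0\big)$.
Substituting the value of $K$ from Eq.~(\ref{eq:knn_parameters}) yields the claimed $B(\tau,\varepsilon_v,\varepsilon_p)$, and combined with Proposition~\ref{prop:knn} this verifies Property~\ref{property:exploration} for the random sampling module.

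I do not expect a serious obstacle; the argument is a clean concentration estimate. The one point requiring genuine care is obtaining the correct $\log N$ factor rather than, say, $\log^2 N$: this is exactly what the block decomposition buys us, since each block only needs per-cell failure probability $O(1/N)$ — achieved with an $\Theta(\log N)$-fold oversampling per cell — after which geometric decay over blocks costs only a constant factor. Secondary bookkeeping points are to state precisely that the exploration module draws i.i.d.\ uniform states (so the $p_j$ are well-defined and, by Assumption~\ref{assu:regularity}, uniformly bounded below) and to note that ``number of steps'' here equals the number of exploration-module queries, which is the quantity feeding into the sample-complexity bound.
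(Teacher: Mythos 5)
Your proof is correct, but it takes a genuinely different route from the paper. The paper runs a coupon-collector waiting-time argument: it decomposes the time to visit every cell once into increments $D_w$, stochastically dominates each $D_w$ by a geometric random variable with mean at most $\frac{N}{(N-w+1)c_0}$ (using only the per-step bound $\P\{s_{t+1}\in B_j\mid s_t\}\ge c_0/N$), sums to get $O\big(\frac{N}{c_0}\log N\big)$ for one $(h,1)$-representative sweep, and then handles the multiplicity by repeating this collection $K$ times sequentially, multiplying the expectation by $K$. You instead fix a single window of length $t_0=\Theta\big(\frac{KN\log N}{c_0}\big)$, use a multiplicative Chernoff bound plus a union bound over the $N$ cells to show each window independently succeeds with probability at least $1/2$, and convert to an expectation via geometric decay over blocks, $\E[T]\le 2t_0$. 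Both give the same order $O\big(\frac{KN\log N}{c_0}\big)$, and your steps check out (the mean condition $t_0 p_j\ge cK\log N\ge 2K$ and the per-cell failure probability $\le \frac{1}{2N}$ are fine for $N\ge 2$; the case $N=1$ is degenerate in the statement itself). The trade-offs: the paper's argument is more elementary (no concentration inequality), and because it only uses the conditional lower bound on cell-hitting probabilities it applies verbatim to state-dependent exploration policies satisfying that bound, whereas your block-independence step leans on the i.i.d.-uniform structure of the random sampling module (it could be patched for adapted sampling with a martingale version of Chernoff). In exchange, your argument delivers a tail bound $\P[T>mt_0]\le 2^{-m}$ for free, which is essentially the high-probability statement the paper has to re-derive separately via Markov's inequality in Lemma~\ref{lem:exploration_prob} when proving Proposition~\ref{prop:eis_sample_complexity}.
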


\subsection{Convergence Guarantees of the Instance}

For the instance of EIS algorithm with MCTS, random sampling policy and nearest neighbor supervised learning, we have shown that each module satisfies the desired properties (cf. Theorem~\ref{thm:mcts_improvement} and Propositions~\ref{prop:knn}-\ref{prop:random_policy}). Therefore, the convergence result stated in Theorem~\ref{thm:eis_convergence} holds for the specific instance we consider here. Moreover, the non-asymptotic analysis of these three methods provides an explicit upper bound on the sample complexity of this instance. The following corollary states the precise result.

\begin{thm} \label{thm:instance_of_eis} Suppose that Assumptions~\ref{assu:smooth} and~\ref{assu:regularity} hold. For a given $\rho\in (0,1)$, {and a small $\tau<1,$} there exist appropriately chosen parameters for the instance of Algorithm~\ref{alg:eis} with MCTS, random sampling and nearest neighbor supervised learning, such that:
\begin{enumerate}
    \item The output $f_{L}=(V_L,\pi_L)$ at the end of $L$-th iteration satisfies \begin{align*}
    \E\Big[\big\Vert V_{L} -V^* \big\Vert_\infty \Big] &\leq V_{\max}\rho^L,\\
    \E\Big[D_{\textup{KL}}\big(\pi_{L}(\cdot|s) \,\Vert\, P_\tau^{*}(\cdot|s)\big)\Big] &\leq  \frac{cV_{\max}}{4} \rho^L, \quad \forall s \in \mS,
    \end{align*}
    where $c$ is the generation constant for policy in Proposition~\ref{prop:knn}.

    \item {With probability at least $1-\delta,$ the above result is achieved with sample complexity of }
    \begin{align*}
        \sum_{l=1}^{L}c' \log \frac{1}{\tau\rho} \cdot \frac{1}{\tau^2\rho^{4l}}\cdot \log \frac{N(c_4 \rho^l)}{\rho^l}\cdot N(c_4\rho^l) \cdot \log N(c_4\rho^l)\cdot \log\frac{L}{\delta}.  ,
     \end{align*}
     where $c'>0$ and $c_4>0$ are constants independent of $\rho$ and $l.$

    \item {In particular, if $\mS$ is a unit volume hypercube in $\mathbb{R}^d$, then the total sample complexity to achieve $\varepsilon$-error value function and policy is given by }
      \begin{align*}
       O\bigg( \log \frac{1}{\tau\rho}\cdot \frac{1}{\tau^2\varepsilon^{d+4}} \cdot \log \big(\frac{1}{\varepsilon}\big)^4 \cdot \log\frac{1}{\delta}\bigg). 
    \end{align*}

\end{enumerate}

\end{thm}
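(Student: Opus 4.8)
The plan is to derive Theorem~\ref{thm:instance_of_eis} by feeding the three instance‑specific module results---Theorem~\ref{thm:mcts_improvement} for MCTS, Proposition~\ref{prop:knn} for nearest‑neighbor regression, and Proposition~\ref{prop:random_policy} for random sampling---into the two generic results, Theorem~\ref{thm:eis_convergence} and Proposition~\ref{prop:eis_sample_complexity}. Since those three results exactly establish Properties~\ref{property:improvement}, \ref{property:sl}, and \ref{property:exploration} (with explicit $\kappa$, generalization constants $c_v=4$ and $c_p=c$, and $B(\cdot)=O(KN(h)\log N(h))$), Part~1 is almost immediate from Theorem~\ref{thm:eis_convergence}. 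The two points I would handle carefully are: initialize with $V_0\equiv 0$ so that $C_{0,v}=\|V_0-V^*\|_\infty\le V_{\max}$; and exploit that the MCTS improvement bounds the \emph{policy} error in terms of the \emph{value} error, $\E[D_{\mathrm{KL}}(\hat\pi(\cdot|s_0)\|P^*_\tau(\cdot|s_0))]\le\zeta_p\,\varepsilon_{0,v}$. Combining the latter with the generalization constant $c$ of Proposition~\ref{prop:knn} and choosing the improvement factor $\zeta_v=\rho/4$ (so that $c_v\zeta_v\le\rho$) makes the value errors contract as $\omega_v^{(l)}\le V_{\max}\rho^{l-1}$, and then $\omega_p^{(l+1)}\le c\,\zeta_p\,\omega_v^{(l)}$ telescopes to the claimed $\E[D_{\mathrm{KL}}(\pi_L(\cdot|s)\|P^*_\tau(\cdot|s))]\le \frac{cV_{\max}}{4}\rho^L$ once $\zeta_p$ is scaled accordingly.

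For Part~2 I would specialize Proposition~\ref{prop:eis_sample_complexity}, whose bound is already a sum over iterations of $\kappa(\cdots)\cdot B(\cdots)\cdot e\log(L/\delta)$; the task is to substitute the explicit per‑iteration forms. At iteration $l$ the value error presented to MCTS is $\Theta(V_{\max}\rho^{l-1})$ and the improvement factor is $\Theta(\rho)$, so Theorem~\ref{thm:mcts_improvement} gives a per‑query MCTS cost $\kappa_l=O\!\big(\tau^{-2}\rho^{-2l}\log\tfrac{1}{\tau\rho}\big)$, absorbing the $\gamma$‑dependent factor $1/\log(1/\gamma)$ into the constant. The number of exploration steps is $B_l=O\!\big(K_l N(h_l)\log N(h_l)\big)$ by Proposition~\ref{prop:random_policy}, where $h_l,K_l$ are read off from~\eqref{eq:knn_parameters} at errors $\Theta(\rho^l)$: one checks $h_l=\Theta(\rho^l)$ (the $\varepsilon_v/L_v$ constraint dominates for small $\rho^l$) and $K_l=\Theta\!\big(\rho^{-2l}\log\tfrac{N(\rho^l)}{\rho^l}\big)$ (the squared value term dominates the policy term). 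Multiplying gives $\kappa_l B_l=\Theta\!\big(\tau^{-2}\rho^{-4l}\log\tfrac{1}{\tau\rho}\,N(c_4\rho^l)\log\tfrac{N(c_4\rho^l)}{\rho^l}\log N(c_4\rho^l)\big)$---the $\rho^{-4l}$ being $\rho^{-2l}$ from MCTS simulation times $\rho^{-2l}$ from the number of required neighbors---which reproduces the summand in Part~2.

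Part~3 is then a short computation for $\mS$ a unit‑volume hypercube under the $\ell_\infty$ metric: $N(h)=\Theta(h^{-d})$ by \citep[Lemma~5.7]{wainwright2019high}, so each summand is $\Theta\!\big(\log\tfrac{1}{\tau\rho}\cdot\tau^{-2}\rho^{-l(d+4)}\cdot\mathrm{poly}(l)\cdot\log\tfrac{L}{\delta}\big)$. The geometric series $\sum_{l=1}^L\rho^{-l(d+4)}$ has ratio $\rho^{-(d+4)}>1$ and is therefore dominated, up to a constant, by its last term $\rho^{-L(d+4)}$; choosing $L=\Theta\!\big(\log(1/\varepsilon)/\log(1/\rho)\big)$ so that $\max\{V_{\max},cV_{\max}/4\}\rho^L\le\varepsilon$ turns this into $\Theta(\varepsilon^{-(d+4)})$, while $\mathrm{poly}(l)$ and $\log(L/\delta)$ evaluated at $l=L$ contribute the $\mathrm{polylog}(1/\varepsilon)\cdot\log(1/\delta)$ factors, giving the stated $O\!\big(\log\tfrac{1}{\tau\rho}\cdot\tfrac{1}{\tau^2\varepsilon^{d+4}}\cdot\log(1/\varepsilon)^4\cdot\log(1/\delta)\big)$; comparing $\varepsilon^{-(d+4)}$ with the $\widetilde\Omega(\varepsilon^{-(d+2)})$ lower bound of Theorem~\ref{thm:lower_bound} then yields the near‑optimality remark.

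I expect the main obstacle to be Part~2: threading the error parameters correctly through the nested quantities---improvement factor $\mapsto$ post‑improvement training error $\mapsto$ generalization constant $\mapsto$ next‑iteration error---so that the per‑iteration budget composes to exactly $\rho^{-4l}$, and in particular making sure that at each level $l$ the arguments $h_l,K_l$ of the exploration bound are those dictated by~\eqref{eq:knn_parameters} for the post‑\emph{improvement} errors (what the exploration must support so that supervised learning generalizes to the next target), not for the post‑generalization ones. Parts~1 and~3 are essentially bookkeeping once the module properties and the generic theorems are in hand.
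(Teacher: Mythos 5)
Your proposal is correct and follows essentially the same route as the paper: instantiate the three module guarantees (Theorem~\ref{thm:mcts_improvement}, Propositions~\ref{prop:knn} and~\ref{prop:random_policy}) inside the generic convergence and sample-complexity results, with $V_0\equiv 0$, improvement factors $\zeta_v=\zeta_p=\rho/4$, the policy error driven by the value error as you note, and $h_l=\Theta(\rho^l)$, $K_l=\Theta(\rho^{-2l}\log(N(\rho^l)/\rho^l))$ giving the per-iteration $\tau^{-2}\rho^{-4l}$ cost, then $N(h)=\Theta(h^{-d})$ and $L=\Theta(\log(1/\varepsilon))$ for the hypercube bound. This matches the paper's proof in both structure and parameter choices.
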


Theorem~\ref{thm:instance_of_eis} states that the sample complexity of the instance of EIS algorithm scales as $\widetilde{\Theta}\big(\frac{1}{\varepsilon^{4+d}}\big)$ (omitting the logarithmic factor). Note that Theorem~\ref{thm:lower_bound} implies that for any policy to learn the optimal value function within $\varepsilon$ approximation error, the number of samples required must scale as $\widetilde{\Omega}\big(\frac{1}{\varepsilon^{2+d}}\big)$. Hence in terms of the dependence on the dimension, the instance we consider here is nearly optimal. 

\section{Example Improvement Module and Proof of Theorem \ref{thm:mcts_improvement}}

\label{app:improvement_mcts}
In this section, we formally show the improvement property of the specific example in Section \ref{subsec:improvement_example}. To this end, we first elaborate some details regarding Algorithm \ref{alg:mcts} in Appendix \ref{app:subsec_details_mcts}. We then state two useful lemmas in Appendix \ref{app:subsec_useful_lemma} and finally, we complete the proof of Theorem \ref{thm:mcts_improvement} in Appendix \ref{app:subsec_mcts_proof}.

\subsection{Details of the Improvement Module Example}
\label{app:subsec_details_mcts}
Before proving the theorem, let us first discuss some details of the improvement module (i.e., Algorithm \ref{alg:mcts}).
It is worth mentioning some necessary modifications for applying the fixed $H$-depth MCTS algorithm \citep{shah2019mcts} in Algorithm 2. In particular, the original algorithm is introduced and analyzed for infinite-horizon discounted MDPs, but extending to a game setting is straightforward as similar to the literature \citep{kocsis2006improved,kaufmann2017monte}. We now elaborate both the algorithmic and the technical extensions of the fixed $H$-depth MCTS algorithm for Markov games. 

Algorithmically, for turn-based zero-sum games, each layer in the tree would alternate between max nodes (i.e., player $\pone$'s turn) and min nodes (i.e., player $\ptwo$'s turn). For max nodes, the algorithm proceeds as usual by selecting the action with the maximum sum of the empirical average and the upper confidence term. For min nodes, the algorithm could choose the action with the minimum value of the empirical average minus the upper confidence term. More precisely, if the current node is a min node, then Line 6 (action selection) of the fixed depth MCTS algorithm in \cite{shah2019mcts} should be modified to (using the notation in \cite{shah2019mcts} to be consistent):
  \begin{equation*}
       a^{(h+1)}= \arg\min_{a\in\mathcal{A}}\frac{q^{(h+1)}(s^{(h)},a)+\gamma \tilde{v}^{(h+1)}(s^{(h)}\circ a)}{N^{(h+1)}(s^{(h)}\circ a)}-\frac{\big(\beta^{(h+1)}\big)^{1/\xi^{(h+1)}}\cdot \big(N^{(h)}(s^{(h)})\big)^{\alpha^{(h+1)}/\xi^{(h+1)}}}{\big(N^{(h+1)}(s^{(h)}\circ a)\big)^{1-\eta^{(h+1)}}}.\label{eq:alg_mcts}
   \end{equation*}
Alternatively, the algorithm could first negate the empirical average and then choose the action that maximizes the sum of the negated empirical average and the upper confidence term. With these modifications, the fixed depth MCTS algorithm could be used to estimate values for the game setting considered in this paper. 

Technically, we note that one could readily obtain the same guarantees for the fixed depth MCTS algorithm for the game setting as the algorithm for MDPs in~\cite{shah2019mcts}, by following essentially the same proof in Appendix A of~\cite{shah2019mcts}. We only remark some technical points in the following:
\begin{enumerate}
    \item The concentration results (cf. Appendix A.4 of \cite{shah2019mcts}) still hold in the game setting. The original concentration inequalities in \cite{shah2019mcts} are two-sided. Therefore, they apply to both the max nodes and min nodes.
    \item The technical results were derived for rewards that are bounded in $[0,1]$ for convenience. It is not hard to see (cf. Remark 1 in Appendix A.3 of \cite{shah2019mcts}) that the same proof applies seamlessly for our setting, i.e., bounded rewards in $[-R_{\max},R_{\max}]$.
    \item Since the original derivation was for MDPs, Lemma 4 in Appendix A.8 of \cite{shah2019mcts} used the Bellman equation for MDPs. In the game setting, it is straightforward to replace it with the Bellman equation for the Markov games:
    \begin{equation*}
            V^*(s)= 
\begin{cases}
    \max_{a\in\mA}\E[r(s,a)+\gamma V^*(s\circ a)],& \text{if $s$ is a max node;} \\
    \min_{a\in\mA}\E[r(s,a)+\gamma V^*(s\circ a)],& \text{if $s$ is a min node.}
\end{cases}
    \end{equation*}
    A similar result of Lemma 4 then readily follows.
\end{enumerate}

\subsection{Two Useful Lemmas}
\label{app:subsec_useful_lemma}
We first state two useful lemmas. The first lemma bounds the difference between the two Boltzmann policies in terms of the difference of the underlying $Q$ values that are used to construct the policies. 
\begin{lem} \label{lem:KL}
Fix a state $s\in\mathcal{S}$. Suppose that the Q-value estimates satisfy
\begin{align*}
    \mathbb{E}\Big[\big|\hat{Q}(s,a)- Q^*(s,a)\big|\Big]\leq \varepsilon,\quad\forall\: a\in\mathcal{A}.
\end{align*}
Consider the two Boltzmann policies with temperature $\tau>0$:
\begin{align*}
\hat{P}_\tau(a)  =\frac{e^{\hat{Q}(s,a)/\tau}}{\sum_{a'}e^{\hat{Q}(s,a')/\tau}},\quad
P^{*}_\tau(a)  =\frac{e^{Q^{*}(s,a)/\tau}}{\sum_{a'}e^{Q^{*}(s,a')/\tau}}.
\end{align*}
We have that
\begin{equation*}
    \mathbb{E}\big[D_\textup{KL}(\hat{P}_\tau\Vert P^{*}_\tau)\big]\leq 2|\mathcal{A}|\varepsilon/\tau.
\end{equation*}
\end{lem}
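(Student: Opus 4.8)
The goal is to bound $\E\big[D_{\textup{KL}}(\hat P_\tau \Vert P^*_\tau)\big]$ by $2|\mathcal{A}|\varepsilon/\tau$ given that each coordinate $\hat Q(s,a)$ is within $\varepsilon$ of $Q^*(s,a)$ in expectation. The natural approach is to first prove a \emph{deterministic} (pointwise) bound: if $\max_a |\hat Q(s,a) - Q^*(s,a)| \le \delta$, then $D_{\textup{KL}}(\hat P_\tau \Vert P^*_\tau) \le 2|\mathcal{A}|\delta/\tau$ — actually I expect the cleaner route is to bound it by $2\delta/\tau$ pointwise and then handle the $|\mathcal{A}|$ factor when passing through the expectation via an $\ell_1$ bound on the $\hat Q$ errors. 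Let me lay this out carefully.

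\textbf{Step 1: Expand the KL divergence.} Write $\hat Q_a = \hat Q(s,a)$ and $Q^*_a = Q^*(s,a)$ and set $\hat Z = \sum_{a'} e^{\hat Q_{a'}/\tau}$, $Z^* = \sum_{a'} e^{Q^*_{a'}/\tau}$. Then
\begin{align*}
D_{\textup{KL}}(\hat P_\tau \Vert P^*_\tau) &= \sum_a \hat P_\tau(a) \log \frac{\hat P_\tau(a)}{P^*_\tau(a)} = \sum_a \hat P_\tau(a) \left( \frac{\hat Q_a - Q^*_a}{\tau} - \log \frac{\hat Z}{Z^*} \right) \\
&= \frac{1}{\tau}\sum_a \hat P_\tau(a)(\hat Q_a - Q^*_a) - \log \frac{\hat Z}{Z^*}.
\end{align*}
The first term is at most $\frac{1}{\tau}\max_a |\hat Q_a - Q^*_a|$ pointwise (it's a convex combination), and more usefully is at most $\frac{1}{\tau}\sum_a |\hat Q_a - Q^*_a|$ since $\hat P_\tau(a) \le 1$.

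\textbf{Step 2: Bound $-\log(\hat Z/Z^*)$.} Since $e^{x}$ is monotone, if $\hat Q_{a'} \ge Q^*_{a'} - \delta_{a'}$ with $\delta_{a'} = |\hat Q_{a'} - Q^*_{a'}|$, then $\hat Z = \sum_{a'} e^{\hat Q_{a'}/\tau} \ge \sum_{a'} e^{(Q^*_{a'} - \delta_{a'})/\tau} \ge e^{-\max_{a'}\delta_{a'}/\tau} Z^*$, so $\log(\hat Z/Z^*) \ge -\max_{a'}\delta_{a'}/\tau$, giving $-\log(\hat Z/Z^*) \le \max_{a'}\delta_{a'}/\tau \le \frac{1}{\tau}\sum_{a'} \delta_{a'}$. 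Combining with Step 1,
\begin{equation*}
D_{\textup{KL}}(\hat P_\tau \Vert P^*_\tau) \le \frac{2}{\tau}\sum_{a}|\hat Q(s,a) - Q^*(s,a)|.
\end{equation*}

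\textbf{Step 3: Take expectations.} Apply $\E[\cdot]$ to the inequality from Step 2 and use linearity together with the hypothesis $\E[|\hat Q(s,a) - Q^*(s,a)|] \le \varepsilon$ for each $a \in \mathcal{A}$:
\begin{equation*}
\E\big[D_{\textup{KL}}(\hat P_\tau \Vert P^*_\tau)\big] \le \frac{2}{\tau}\sum_{a \in \mathcal{A}} \E\big[|\hat Q(s,a) - Q^*(s,a)|\big] \le \frac{2|\mathcal{A}|\varepsilon}{\tau},
\end{equation*}
which is exactly the claim.

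\textbf{Main obstacle.} There is no serious obstacle here — the lemma is essentially a robustness estimate for the softmax map, and the only subtlety is being careful with signs in the log-partition bound (choosing the direction of the one-sided exponential inequality so that it controls $-\log(\hat Z/Z^*)$ rather than $+\log(\hat Z/Z^*)$, which requires a lower bound on $\hat Z$). One should also note the argument is symmetric in the roles, so the same bound (with $P^*_\tau$ on either side) holds; and the bound is dimension-dependent through $|\mathcal{A}|$ only because we pass from the $\ell_\infty$ control per-coordinate to an $\ell_1$ sum, which is the loose-but-sufficient step. If a tighter constant were needed one could instead keep $\max_a \delta_a$ and invoke Jensen on $\E[\max_a \delta_a]$, but that does not improve the worst case, so the stated form is the cleanest.
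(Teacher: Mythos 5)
Your proof is correct and follows essentially the same route as the paper: expand the KL divergence into the weighted $Q$-difference term plus the log-partition-ratio term, bound each by $\tfrac{1}{\tau}$ times the $Q$-estimation error, and pass to the $\ell_1$ sum so that linearity of expectation gives the $|\mathcal{A}|\varepsilon$ factor. The only (minor) difference is in the second term: the paper bounds $\log(C^*/C)$ via the log-sum inequality, whereas you use the elementary monotonicity bound $\hat Z \ge e^{-\max_a\delta_a/\tau}Z^*$ --- both yield the same $\|\hat Q - Q^*\|_\infty/\tau$ control, so this is a cosmetic simplification rather than a different argument.
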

\begin{proof}
Since $s$ is fixed, we drop it for the ease of exposition. Let $C=\sum_{a'}e^{\hat{Q}(a')/\tau}$ and $C^* = \sum_{a'}e^{{Q}^*(a')/\tau}$. Then
\begin{align*}
D_\textup{KL}(\hat{P}\Vert P^{*})
& =\sum_{a}\hat{P}(a)\log\frac{\hat{P}(a)}{P^{*}(a)}\\
 & =\frac{1}{C}\sum_{a}e^{\hat{Q}(a)/\tau}\left(\log\frac{e^{\hat{Q}(a)/\tau}}{e^{Q^{*}(a)/\tau}}+\log\frac{C^{*}}{C}\right)\\
 & =\frac{1}{C}\sum_{a}e^{\hat{Q}(a)/\tau}\cdot\frac{1}{\tau}\left(\hat{Q}(a)-Q^{*}(a)\right)+\underbrace{\frac{1}{C}\sum_{a}e^{\hat{Q}(a)/\tau}}_{=1}\log\frac{C^{*}}{C}\\
 & \le\frac{1}{\tau}\cdot\underbrace{\frac{1}{C}\sum_{a}e^{\hat{Q}(a)/\tau}}_{=1}\cdot\left\Vert \hat{Q}-Q^{*}\right\Vert _{\infty} + \frac{1}{C^{*}}\cdot C^{*}\log\frac{C^{*}}{C}.
 \end{align*}
 The second term above can be bounded using the log-sum inequality (cf.\ Appendix~\ref{sec:appendix_prelim}), which gives  $ C^{*}\log\frac{C^{*}}{C} \le \sum_{a}e^{Q^{*}(a)/\tau}\log\frac{e^{Q^{*}(a)/\tau}}{e^{\hat{Q}(a)/\tau}}$. We then continue the above chain of inequalities to obtain
 \begin{align*}
 D_\textup{KL}(\hat{P}\Vert P^{*}) 
 & \le \frac{1}{\tau}\left\Vert \hat{Q}-Q^{*}\right\Vert _{\infty}+\frac{1}{C^{*}}\cdot\sum_{a}e^{Q^{*}(a)/\tau}\log\frac{e^{Q^{*}(a)/\tau}}{e^{\hat{Q}(a)/\tau}} \\
 & =\frac{1}{\tau}\left\Vert \hat{Q}-Q^{*}\right\Vert _{\infty}+\frac{1}{C^{*}}\cdot\sum_{a}e^{Q^{*}(a)/t}\cdot\frac{1}{\tau}\left(Q^{*}(a)-\hat{Q}(a)\right)\\
 & \le\frac{1}{\tau}\left\Vert \hat{Q}-Q^{*}\right\Vert _{\infty}+\underbrace{\frac{1}{C^{*}}\cdot\sum_{a}e^{Q^{*}(a)/t}}_{=1}\cdot\frac{\left\Vert \hat{Q}-Q^{*}\right\Vert _{\infty}}{\tau}\\
 & =\frac{2}{\tau}\left\Vert \hat{Q}-Q^{*}\right\Vert _{\infty}.
\end{align*}
Taking expectation, we have the bound
\begin{align*}
  \mathbb{E}\big[D_\textup{KL}(\hat{P}\Vert P^{*})\big]&\leq \frac{2}{\tau}\mathbb{E}\big[\left\Vert \hat{Q}-Q^{*}\right\Vert _{\infty}\big]\\
  &\leq \frac{2}{\tau}\mathbb{E}\big[\left\Vert \hat{Q}-Q^{*}\right\Vert _{1}\big]\\
  &\leq \frac{2}{\tau}\sum_{a\in\mathcal{A}}\mathbb{E}\big[| \hat{Q}(s,a)-Q^{*}(s,a)|\big]\leq \frac{2|\mathcal{A}|\varepsilon}{\tau},
\end{align*}
as desired.
\end{proof}

The following lemma states a generic result regarding the maximum difference of two vectors.
\begin{lem}
\label{lem:max}Consider two $n$-dimensional vectors $X=(x_{1},x_{2},\ldots,x_{n})$
and $Y=(y_{1},y_{2},\ldots,y_{n})$. We have 
\begin{align*}
 \big|\max_{i\in[n]}\{x_{i}\}-\max_{j\in[n]}\{y_{j}\}\big| &\leq\max_{k\in[n]}|x_{k}-y_{k}|,\\
 \big|\min_{i\in[n]}\{x_{i}\}-\min_{j\in[n]}\{y_{j}\}\big|& \leq \max_{k\in[n]}|x_{k}-y_{k}|.
\end{align*}

\end{lem}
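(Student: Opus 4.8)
The final statement to prove is Lemma~\ref{lem:max}, the elementary inequality that the map from a vector to its coordinate-wise maximum (or minimum) is $1$-Lipschitz in the $\ell_\infty$ norm.

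\medskip

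\noindent\textbf{Proof proposal for Lemma~\ref{lem:max}.}

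The plan is to prove the statement for the maximum and then deduce the statement for the minimum by a sign flip. For the maximum, the standard trick is to bound $\max_i x_i$ in terms of $\max_j y_j$ plus the error, and then use symmetry. Concretely, let $k^\star \in \arg\max_{i\in[n]} x_i$, so that $\max_{i\in[n]} x_i = x_{k^\star}$. Then I would write
\[
x_{k^\star} = y_{k^\star} + (x_{k^\star}-y_{k^\star}) \le \max_{j\in[n]} y_j + |x_{k^\star}-y_{k^\star}| \le \max_{j\in[n]} y_j + \max_{k\in[n]}|x_k-y_k|,
\]
where the first inequality uses $y_{k^\star}\le \max_j y_j$. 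This gives $\max_i x_i - \max_j y_j \le \max_k |x_k-y_k|$. By the completely symmetric argument, interchanging the roles of $X$ and $Y$ (and using that $\max_k |x_k - y_k| = \max_k |y_k - x_k|$), we get $\max_j y_j - \max_i x_i \le \max_k |x_k - y_k|$. Combining the two one-sided bounds yields $\big|\max_i x_i - \max_j y_j\big| \le \max_k |x_k - y_k|$.

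\medskip

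For the minimum statement, I would simply apply the maximum statement to the negated vectors $-X = (-x_1,\dots,-x_n)$ and $-Y = (-y_1,\dots,-y_n)$, using the identities $\min_{i\in[n]} x_i = -\max_{i\in[n]}(-x_i)$ and $\max_{k\in[n]} |(-x_k)-(-y_k)| = \max_{k\in[n]}|x_k-y_k|$. This gives
\[
\big|\min_{i\in[n]} x_i - \min_{j\in[n]} y_j\big| = \big|\max_{i\in[n]}(-x_i) - \max_{j\in[n]}(-y_j)\big| \le \max_{k\in[n]}|x_k - y_k|,
\]
as desired.

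\medskip

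There is no real obstacle here; the lemma is a one-line fact whose only subtlety is remembering to argue both directions of the inequality (the bound is genuinely two-sided, so one cannot stop after the first chain). The lemma is presumably invoked in the proof of Theorem~\ref{thm:mcts_improvement} to control $|\hat V(s_0) - V^*(s_0)| = |\max_a \hat Q(s_0,a) - \max_a Q^*(s_0,a)|$ (or the $\min$ version when $I(s_0)=\ptwo$) by $\max_a |\hat Q(s_0,a) - Q^*(s_0,a)|$, which is then bounded using the MCTS error guarantees on the children $s_0\circ a$ together with the concentration of the empirical rewards $\hat r(s_0,a)$.
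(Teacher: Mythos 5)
Your proof is correct and follows essentially the same elementary argument as the paper: both pick the maximizing index and compare coordinatewise, the paper by sandwiching $x_{i^*}-y_{j^*}$ between $x_{j^*}-y_{j^*}$ and $x_{i^*}-y_{i^*}$, you by two symmetric one-sided bounds. Your negation trick for the minimum is a harmless stylistic shortcut where the paper simply repeats the argument.
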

\begin{proof} 
 Assume that $i^{*}\in\arg\max_{i\in[n]}\{x_{i}\},$ and $j^{*}\in\arg\max_{j\in[n]}\{y_{j}\}$.
Then 
\[
\max_{i\in[n]}\{x_{i}\}-\max_{j\in[n]}\{y_{j}\}=x_{i^{*}}-y_{j^{*}}\leq x_{i^{*}}-y_{i^{*}},
\]
and 
\[
x_{i^{*}}-y_{j^{*}}\geq x_{j^{*}}-y_{j^{*}}.
\]
Thus,
\begin{align*}
\Big|\max_{i\in[n]}\{x_{i}\}-\max_{j\in[n]}\{y_{j}\}\Big|&=\big|x_{i^{*}}-y_{j^{*}}\big|\\
&\leq\max\big\{\big|x_{i^{*}}-y_{i^{*}}\big|,\big|x_{j^{*}}-y_{j^{*}}\big|\big\}\\
&\leq\max_{k\in[n]}|x_{k}-y_{k}|.
\end{align*}

The same argument holds for the other inequality, and this completes the proof of Lemma \ref{lem:max}.

\end{proof}

\subsection{Improvement Property: Proof of Theorem \ref{thm:mcts_improvement}}
\label{app:subsec_mcts_proof}
We are now ready to prove Theorem \ref{thm:mcts_improvement}.
As discussed in Appendix \ref{app:subsec_details_mcts}, the same guarantees of the modified fixed $H$-depth MCTS algorithm for the game setting can be established as in \cite{shah2019mcts}. In the sequel, we extend these guarantees, together with the previous two lemmas, to analyze the improvement module example. In particular, we derive error bounds for the outputs of Algorithm \ref{alg:mcts}, $\hat{V}$ and $\hat{\pi}$,  and analyze the corresponding sample complexity.

Consider deterministic state transitions. The complete proof proceeds in two steps. The first step is to analyze the outputs of querying the fixed $H$-depth MCTS algorithm. Based on those outputs, as the next step we then analyze the outcomes of Algorithm \ref{alg:mcts}, $\hat{V}$ and $\hat{\pi}$ . Finally, we characterize the corresponding sample complexity of the overall process. Throughout the first two steps, since the current model $f=(V,\pi)$ may be random, let us fix a realization; we will take expectation in the end to arrive at the desired results in Theorem \ref{thm:mcts_improvement}.

\emph{{\bf Step 1:} Error bounds for outputs of the fixed depth MCTS algorithm.} In~\cite{shah2019mcts}, the authors establish the following concentration result for
the estimated value function of the root node $s$, $\hat{V}_{m}(s)$, under the fixed $H$-depth MCTS algorithm with $m$
simulations (cf. Proof of Theorem 1 in~\cite{shah2019mcts}): there exist constants $\beta>1$
and $\xi>0$ and $\eta\in[1/2,1)$ such that for every $z\geq1$,
\[
\P\Big(\Big|\hat{V}_{m}(s)-\mu_{s}^{(0)}\Big|\geq m^{\eta-1}z\Big)\leq\frac{2\beta}{z^{\xi}},
\]
where the probability is measured with respect to the randomness in the MCTS algorithm, and $\mu_{s}^{(0)}$ satisfies the following condition 
\[
|\mu_{s}^{(0)}-V^{*}(s)|\leq\gamma^{H}\varepsilon_{0}.
\]
Here, $\varepsilon_0$ denotes the error when evaluating the leaf nodes using current model, i.e., $\varepsilon_0=\Vert V-V^*\Vert _\infty$, where $V$ is the current value model. Note that $\eta\in[1/2,1)$ is a hyper-parameter for the MCTS algorithm that could be freely chosen. Throughout the proof, we set
\[\eta\triangleq \frac{1}{2}.\]
Recall that $\gamma \in (0,1)$ is the discount factor. In addition, $\xi$ is larger than $1$ (cf. Section A.6.4 of \cite{shah2019mcts}).
Therefore, for every $t\geq m^{\eta-1}$,
\[
\P\Big(\Big|\hat{V}_{(m)}(s)-\mu_{s}^{(0)}\Big|\geq t\Big)\leq\frac{2\beta}{t^{\xi}}m^{\xi(\eta-1)}.
\]
It follows that
\begin{align*}
\E\Big[\Big|\hat{V}_{m}(s)-\mu_{s}^{(0)}\Big|\Big] & =\int_{0}^{\infty}\P\Big(\Big|\hat{V}_{m}(s)-\mu_{s}^{(0)}\Big|\geq t\Big)dt\\
 & \leq\int_{0}^{m^{\eta-1}}1\cdot dt+\int_{m^{\eta-1}}^{\infty}\frac{2\beta}{t^{\xi}}m^{\xi(\eta-1)}\cdot dt\\
 & =m^{\eta-1}+2\beta m^{\xi(\eta-1)}\frac{m^{(1-\xi)(\eta-1)}}{\xi-1} &  & \xi>1\\
 & =(1+\frac{2\beta}{\xi-1})m^{\eta-1}.
\end{align*}
Thus 
\begin{align*}
\E\big[\big|\hat{V_{m}}(s)-V^{*}(s)\big|\big]&\leq\E\Big[\big|\hat{V}_{m}(s)-\mu_{s}^{(0)}\big|\Big]+|\mu_{s}^{(0)}-V^{*}(s)|\\
&\leq\gamma^{H}\varepsilon_{0}+O(m^{\eta-1}).
\end{align*}
This leads to a variant of Theorem 1 in~\cite{shah2019mcts} for the performance of the fixed depth MCTS,
as stated following.
\begin{prop}
\label{thm:MCTS_refined} Let $f=(V,\pi)$ be the current model and consider the fixed depth MCTS algorithm (with depth $H$) employed in Algorithm \ref{alg:mcts}. 
Then, for each query state $s\in S$, the following claim holds for the output $\hat{V}_{m}$ of the fixed $H$-depth MCTS with $m$ simulations:
\[
\E\big[\big|\hat{V}_{m}(s)-V^{*}(s)\big|\big]\leq\gamma^{H}\Vert V-V^*\Vert_{\infty}+O\Big(m^{\eta-1}\Big),
\]
where $\eta\in[1/2,1)$ is a constant and the expectation is taken with respect to the randomness in the MCTS simulations. 
\end{prop}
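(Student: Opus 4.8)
The plan is to control $\big|\hat V_m(s)-V^*(s)\big|$ by introducing the deterministic quantity $\mu_s^{(0)}$ --- the exact value of the depth-$H$ truncated game tree rooted at $s$ in which the leaves at depth $H$ are scored by the current model $V$ instead of $V^*$ --- and then splitting, via the triangle inequality,
\[
\E\big[|\hat V_m(s)-V^*(s)|\big]\;\le\;\E\big[|\hat V_m(s)-\mu_s^{(0)}|\big]\;+\;|\mu_s^{(0)}-V^*(s)|,
\]
a \emph{fluctuation} term plus a \emph{bias} term that I would bound separately. The role of $\mu_s^{(0)}$ is that it is precisely the target the fixed-depth MCTS estimate concentrates around, so the first term is a pure variance/fluctuation estimate and the second a deterministic truncation error.

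For the bias term, I would unroll the Bellman recursion $H$ times --- using the \emph{game} Bellman equation, i.e.\ $V^*(s)=\max_{a}\E[r(s,a)+\gamma V^*(s\circ a)]$ at player-$\pone$ nodes and the analogous $\min$ at player-$\ptwo$ nodes --- and compare it level by level with the truncated recursion defining $\mu_s^{(0)}$. Since both $\max$ and $\min$ over the action coordinates are $1$-Lipschitz in the $\ell_\infty$ norm (Lemma~\ref{lem:max}), each of the $H$ levels contributes exactly one factor of $\gamma$ and no amplification, so the leaf error $\|V-V^*\|_\infty$ propagates to the root as $|\mu_s^{(0)}-V^*(s)|\le \gamma^H\|V-V^*\|_\infty$.

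For the fluctuation term, I would invoke the concentration bound established for the fixed $H$-depth MCTS in \cite{shah2019mcts} (proof of Theorem~1 there): there exist constants $\beta>1$, $\xi>1$, $\eta\in[1/2,1)$ such that $\P\big(|\hat V_m(s)-\mu_s^{(0)}|\ge m^{\eta-1}z\big)\le 2\beta z^{-\xi}$ for all $z\ge1$. As noted in Appendix~\ref{app:subsec_details_mcts}, this bound transfers to the turn-based game setting because the underlying concentration inequalities are two-sided and hence valid at $\min$ nodes as well as $\max$ nodes. Rewriting the tail as $\P\big(|\hat V_m(s)-\mu_s^{(0)}|\ge t\big)\le 2\beta\,t^{-\xi}m^{\xi(\eta-1)}$ for $t\ge m^{\eta-1}$ and integrating $\E[X]=\int_0^\infty\P(X\ge t)\,dt$, splitting the integral at $t=m^{\eta-1}$ and using $\xi>1$ so the heavy-tail part converges, yields $\E\big[|\hat V_m(s)-\mu_s^{(0)}|\big]\le (1+\tfrac{2\beta}{\xi-1})\,m^{\eta-1}=O(m^{\eta-1})$. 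Adding the two bounds gives the claim; a further expectation over the (possibly random) model $f$ together with $\E[\|V-V^*\|_\infty]\le\varepsilon_{0,v}$ gives the form used downstream.

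The main obstacle is not any individual estimate --- each is either elementary or cited --- but rather the careful verification that the analysis of \cite{shah2019mcts}, written for discounted MDPs, goes through for a tree whose levels alternate between $\max$ and $\min$. This reduces to two observations: (i) the recursive bias-propagation argument only uses non-expansiveness of the per-level operator in sup-norm, which holds for $\min$ exactly as for $\max$; and (ii) the Hoeffding/Bernstein-type concentration steps and the self-similar recursion on visit counts in their Appendix~A are symmetric in sign and therefore indifferent to whether a node maximizes or minimizes, so the constants $\beta,\xi,\eta$ and the $m^{\eta-1}$ rate carry over unchanged.
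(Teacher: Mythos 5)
Your proposal follows essentially the same route as the paper: decompose via $\mu_s^{(0)}$ with the triangle inequality, bound the bias by $\gamma^H\Vert V-V^*\Vert_\infty$ using the game Bellman structure (the paper imports this, and the two-sided concentration tail with constants $\beta,\xi>1,\eta$, directly from \cite{shah2019mcts} with the min-node adaptation discussed in Appendix~\ref{app:subsec_details_mcts}), and bound the fluctuation term by integrating the tail $\P(|\hat V_m(s)-\mu_s^{(0)}|\ge t)\le 2\beta t^{-\xi}m^{\xi(\eta-1)}$ to get $(1+\tfrac{2\beta}{\xi-1})m^{\eta-1}=O(m^{\eta-1})$. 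The argument is correct and matches the paper's proof.
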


\emph{{\bf Step 2:} Error bounds for outputs of the improvement module.} Now, we are ready to obtain a non-asymptotic analysis for the outputs of Algorithm~\ref{alg:mcts}. Consider Line 4 of Algorithm \ref{alg:mcts}, where we call the fixed depth MCTS algorithm on the state $s_0\circ a$.
For each $a\in \mA$, as $m$ simulations are performed with root node
$s_{0}\circ a$ during the simulation, Proposition \ref{thm:MCTS_refined} implies that 
\begin{equation}
\E\big[\big|\hat{V}(s_{0}\circ a)-V^{*}(s_{0}\circ a)\big|\big]\leq\gamma^{H}\Vert V-V^*\Vert_{\infty}+O\Big(m^{\eta-1}\Big).\label{eq:V_next_state}
\end{equation}
Recall that state transitions are assumed to be deterministic and note that the estimated Q-value for $(s_{0},a)$ is given by (i.e., Line 6 of Algorithm \ref{alg:mcts})
\[
\hat{Q}(s_{0},a)=\hat{r}(s_{0},a)+\gamma\hat{V}(s_{0}\circ a),
\]
where $\hat{r}(s_{0},a)=\frac{1}{m}\sum_{i=1}^{m}r_{i}(s_{0},a)$
is the empirical average of the immediate rewards for playing action
$a$ when in state $s_{0}$. Note that $\{r_{i}(s_{0},a)\}_{i}$ are
independent random variables that satisfy $|r_{i}(s_{0},a)|\leq R_{\max}$.
By Hoeffding inequality, it holds that 
\[
\P\big(\big|\hat{r}(s_{0},a)-\E[r(s_{0},a)]\big|>t\big)\leq2\exp\Big(\frac{-mt^{2}}{2R_{\max}^{2}}\Big).
\]
Thus 
\begin{align*}
\E\big[\big|\hat{r}(s_{0},a)-\E[r(s_{0},a)]\big|\big] & =\int_{0}^{\infty}\P\big(\big|\hat{r}(s_{0},a)-\E[r(s_{0},a)]\big|>t\big)dt\\
 & \leq\int_{0}^{\infty}2\exp\Big(\frac{-mt^{2}}{2R_{\max}^{2}}\Big)dt\\
 & =R_{\max}\sqrt{\frac{2\pi}{m}}.
\end{align*}
It follows that 
\begin{align}
&\E\Big[\big|\hat{Q}(s_{0},a)-Q^{*}(s_{0},a)\big|\Big] \nonumber\\
= & \E\Big[\big|\hat{r}(s_{0},a)+\gamma\hat{V}(s_{0}\circ a)-\E[r(s_{0},a)]-\gamma V^{*}(s_{0}\circ a)\big|\Big]\nonumber\\
 \leq & \E\Big[\big|\hat{r}(s_{0},a)-\E[r(s_{0},a)]\big|\Big]+\gamma\E\Big[\big|\hat{V}(s_{0}\circ a)-V^{*}(s_{0}\circ a)\big|\Big]\nonumber\\
 \leq &  R_{\max}\sqrt{{2\pi}{}}m^{-1/2}+\gamma^{H}\Vert V-V^*\Vert_{\infty}+O\Big(m^{\eta-1}\Big) & & \gamma<1\nonumber\\
  \leq & \gamma^{H}\Vert V-V^*\Vert_{\infty}+O\Big(m^{\eta-1}\Big), \label{eq:app_mcts_Q_bound}
\end{align}
where the last inequality follows from the fact that $\eta\in[1/2,1).$

We now consider the value estimate $\hat{V}(s_0)$ and the policy estimate $\hat{\pi}(\cdot|s_0)$ (Line 9 of Algorithm \ref{alg:mcts}) separately:

\emph{(a) Value estimate $\hat{V}(s_0)$:} In order to obtain an error bound for the value estimate of the query state
$s_{0}$, i.e., $\hat{V}(s_0)=\max_{a\in\mA}\hat{Q}(s,a)$, we apply  Lemma \ref{lem:max} from the previous section.
For the query state $s_{0}$, if $I(s_0)$ is player $\pone$, applying Lemma \ref{lem:max} yields
\begin{align*}
\big|\hat{V}(s_{0})-V^{*}(s_{0})\big| & =\big|\max_{a\in \mA}\{\hat{Q}(s_{0},a)\}-\max_{a\in \mA}\{Q^{*}(s_{0},a)\}\big| 
  \leq\max_{a\in \mA}\big|\hat{Q}(s_{0},a)-Q^{*}(s_{0},a)\big|. 
\end{align*}
Therefore, 
\begin{align}
\E\big[\big|\hat{V}(s_{0})-V^{*}(s_{0})\big|\big] & \leq\E\big[\max_{a\in \mA}\big|\hat{Q}(s_{0},a)-Q^{*}(s_{0},a)\big|\big]\nonumber\\
 & \leq\sum_{a\in \mA}\E\big[\big|\hat{Q}(s_{0},a)-Q^{*}(s_{0},a)\big|\big]\nonumber\\
 & \leq |\mA|\Big[\gamma^{H}\Vert V-V^*\Vert_{\infty}+O\Big(m^{\eta-1}\Big)\Big].\label{eq:app_mcts_value_bound}
\end{align}
Similarly, if $I(s_0)$ is player $\ptwo$, applying Lemma \ref{lem:max} also yields the same desired result, Eq.~(\ref{eq:app_mcts_value_bound}).

\emph{(b) Policy estimate $\hat{\pi}(\cdot|s_0)$:} In order to obtain an error bound for the policy estimate of the query state
$s_{0}$, i.e., $\hat{\pi}(a|s_0)=e^{\hat{Q}(s_0,a)/\tau}/\sum_{a'\in\mA}e^{\hat{Q}(s_0,a')/\tau}$, we apply Lemma \ref{lem:KL} from the previous section. Together with Eq.~(\ref{eq:app_mcts_Q_bound}), Lemma \ref{lem:KL} yields the following bound:
\begin{align}
\E\Big[D_\textup{KL}\big(\hat{\pi}(\cdot|s)\Vert P_{\tau}^{*}(\cdot|s)\big)\Big] 
 & \leq\frac{2|\mA|}{\tau}\Big[\gamma^{H}\Vert V-V^*\Vert_{\infty}+O\Big(m^{\eta-1}\Big)\Big].\label{eq:app_mcts_policy}
\end{align}

\emph{{\bf Step 3:} Completing the proof of Theorem \ref{thm:mcts_improvement}.} Recall that by the assumption of Theorem \ref{thm:mcts_improvement}, 
\[\E\Big[||V- V^*||_\infty\Big]\leq \varepsilon_{0,v}.\]
Now, taking expectation of Eqs.~(\ref{eq:app_mcts_value_bound}) and (\ref{eq:app_mcts_policy}) over the randomness in the current model $f$, we have
\begin{align}
    \E\big[\big|\hat{V}(s_{0})-V^{*}(s_{0})\big|\big] & \leq |\mA|\Big[\gamma^{H}\varepsilon_{0,v}+O\Big(m^{\eta-1}\Big)\Big],\label{eq:app_mcts_value_exp}\\
    \E\Big[D_\textup{KL}\big(\hat{\pi}(\cdot|s)\Vert P_{\tau}^{*}(\cdot|s)\big)\Big]
 & \leq\frac{2|\mA|}{\tau}\Big[\gamma^{H}\varepsilon_{0,v}+O\Big(m^{\eta-1}\Big)\Big].\label{eq:app_mcts_policy_exp}
\end{align}
Recall that in Theorem \ref{thm:mcts_improvement}, our goal for improvement is as follows:
\begin{equation}
    \mathbb{E}\Big[\big|\hat{V}(s_0)-V^*(s)|\big|\Big]\leq\zeta_v\cdot\varepsilon_{0,v},\: \textrm{  and  }\: \mathbb{E}\Big[D_{\textup{KL}}\big(\hat{\pi}(\cdot|s_0)||P^*_\tau(\cdot|s)\big)\Big]\leq \zeta_p\cdot\varepsilon_{0,v}.\label{eq:app_mcts_goal}
\end{equation}
It is not hard to see that we can choose the parameters of the fixed depth MCTS algorithm, in particular, the depth $H$ and the number of simulations $m$, in an appropriate way such that Eqs.~(\ref{eq:app_mcts_value_exp}) and (\ref{eq:app_mcts_policy_exp}) satisfy the desired improvement bound. In particular, we could choose the depth $H$ such that
\begin{equation}
    \max\Big\{|\mA|\gamma^{H}, \frac{2|\mA|}{\tau}\gamma^{H}\Big\}\leq \frac{\min\{\zeta_v,\zeta_p\}}{2},\label{eq:app_mcts_depth}
\end{equation}
and choose the number of simulations, $m$, to be large enough such that the term $O(m^{\eta-1})$ is less than $\gamma^H\varepsilon_{0,v}$. For small temperature $\tau<1$, choosing the depth
\begin{equation*}
    H=\frac{\log\frac{\tau\min\{\zeta_v,\zeta_p\}}{4|\mA|}}{\log \gamma}
\end{equation*}
would satisfy the condition Eq.~(\ref{eq:app_mcts_depth}). Recall that the tunable hyper-parameter $\eta$ is set to $\eta=1/2$. With the above $H$, this implies that the number of simulations should be
\begin{equation*}
    m=O\Big(\Big(\frac{\tau\min\{\zeta_v,\zeta_p\}\varepsilon_{0,v}}{4|\mA|}\Big)^{\frac{1}{1-\eta}}\Big)=O\Big(\big({\tau\cdot\min\{\zeta_v,\zeta_p\}\cdot\varepsilon_{0,v}}\big)^{-2}\Big).
\end{equation*}
To summarize, we show that the desired improvement, Eq.~(\ref{eq:app_mcts_goal}), can indeed be satisfied with appropriate algorithmic parameters. Finally, regarding the sample complexity, we note that with a $H$-depth tree, each simulation of the fixed depth MCTS algorithm incurs $H$ state transitions. Therefore, the total sample complexity of querying the improvement module is $m\cdot H$, which is equal to
\begin{equation*}
O\Big(\big({\tau\cdot\min\{\zeta_v,\zeta_p\}\cdot\varepsilon_{0,v}}\big)^{-2}\cdot \frac{\log({\tau\cdot\min\{\zeta_v,\zeta_p\}})}{\log \gamma}\Big).
\end{equation*}

\section{Proof of Proposition~\ref{prop:knn}}
\label{appendix_proof_knn}

\begin{proof}
Let $h$, $K$ and $\Delta$ be positive numbers to be chosen later,
and recall that $N\equiv N(h)$ is the $h/2$-covering number of $S$.
Let $T_{j}:=B_{j}\cap T$ and $K_{j}:=\left|T_{j}\right|$. For each
$j\in[N]$, we have 
\begin{align*}
 \left|\frac{1}{K_{j}}\sum_{x\in T_{j}}\left(\hat{V}(x)-V^{*}(x)\right)\right|
 & \le\left|\frac{1}{K_{j}}\sum_{x\in T_{j}}\left(\hat{V}(x)-\E\hat{V}(x)\right)\right|+\left|\frac{1}{K_{j}}\sum_{x\in T_{j}}\left(\E\hat{V}(x)-V^{*}(x)\right)\right|\\
 & \le\left|\frac{1}{K_{j}}\sum_{x\in T_{j}}\left(\hat{V}(x)-\E\hat{V}(x)\right)\right|+\frac{1}{K_{j}}\sum_{x\in T_{j}}\E\left|\hat{V}(x)-V^{*}(x)\right| &  &\\
 & \le\left|\frac{1}{K_{j}}\sum_{x\in T_{j}}\left(\hat{V}(x)-\E\hat{V}(x)\right)\right|+\varepsilon_{v}, &  & 
\end{align*}
where the second step follows from the Jensen's inequality, and the last step follows from the premise on the training error of the value function.
To bound the first term of RHS above, we note that the $K_{j}$ random variables $\left\{ \hat{V}(x),x\in T_{j}\right\} $
are independent and bounded by $V_{\max}$.
So Hoffedings inequality ensures that 
\begin{align*}
\P\left(\left|\frac{1}{K_{j}}\sum_{x\in T_{j}}\left(\hat{V}(x)-\E\hat{V}(x)\right)\right|>\Delta\right) & \le2\exp\left(-\frac{2K_{j}\Delta^{2}}{V_{\max}^{2}}\right) \le2\exp\left(-\frac{2K\Delta^{2}}{V_{\max}^{2}}\right).
\end{align*}
Combining the last two equations and applying a union bound
over $j\in[N]$, we obtain 
\[
\P\left(\max_{j\in[N]}\left|\frac{1}{K_{j}}\sum_{x\in T_{j}}\left(\hat{V}(x)-V^{*}(x)\right)\right|>\Delta+\varepsilon_{v}\right)\le2N\exp\left(-\frac{2K\Delta^{2}}{V_{\max}^{2}}\right).
\]
Since the random variable $Z:=\max_{j\in[N]}\left|\frac{1}{K_{j}}\sum_{x\in T_{j}}\left(\hat{V}(x)-V^{*}(x)\right)\right|$
satisfies $\left|Z\right|\le2V_{\max}$, we may convert the above
inequality into an expectation bound:
\begin{align*}
\E Z & =\E\left[Z\mathbb{I}_{\{Z\le\Delta+\varepsilon_{v}\}}\right]+\E\left[Z\mathbb{I}_{\{Z>\Delta+\varepsilon_{v}\}}\right]\\
 & \le\Delta+\varepsilon_{v}+2V_{\max}\P\left(Z>\Delta+\varepsilon_{v}\right)\\
 & \le\Delta+\varepsilon_{v}+4V_{\max}N\exp\left(-\frac{2K\Delta^{2}}{V_{\max}^{2}}\right).
\end{align*}

We are now ready to bound the quantity of interest:
\begin{align*}
 & \E\sup_{s\in S}\left|\vnn(s)-V^{*}(s)\right|\\
 & =\E\max_{j\in[N]}\sup_{s\in B_{j}}\left|\frac{1}{K_{j}}\sum_{x\in T_{j}}\left(\hat{V}(x)-V^{*}(s)\right)\right|\\
 & \le\E\left[\max_{j\in[N]}\sup_{s\in B_{j}}\left|\frac{1}{K_{j}}\sum_{x\in T_{j}}\left(\hat{V}(x)-V^{*}(x)\right)\right| \right. + \left.\max_{j\in[N]}\sup_{s\in B_{j}}\left|\frac{1}{K_{j}}\sum_{x\in T_{j}}\left({V}^*(x)-V^{*}(s)\right)\right|\right]\\
 & \overset{(i)}{\le} \E\left[\max_{j\in[N]}\left|\frac{1}{K_{j}}\sum_{x\in T_{j}}\left(\hat{V}(x)-V^{*}(x)\right)\right|\right]+\max_{j\in[N]}\sup_{s\in B_{j}}L_v \frac{1}{K_{j}}\sum_{x\in T_{j}}d(x,s) &  & \\
 & \overset{(ii)}{\le} \Delta+\varepsilon_{v}+4V_{\max}N\exp\left(-\frac{2K\Delta^{2}}{V_{\max}^{2}}\right)+L_v h, &  & 
\end{align*}
where step $(i)$ holds because $V^*$ is $L_v$-Lipschitz, and step $(ii)$ holds because the sets $T_{j}\subseteq B_{j}$ have diameter at most $ h$.
Now taking $\Delta=\varepsilon_{v}$, we have
\begin{align}
  \E\sup_{s\in S}\left|\vnn(s)-V^{*}(s)\right| & \le 2\varepsilon_v+ 4V_{\max}N\exp\left(-\frac{2K\varepsilon_v^{2}}{V_{\max}^{2}}\right)+L_v h. \label{eq:vnn_error_bound}
\end{align}

We now turn to the second inequality. Under the premise on the
training error of policy, we have for each $i\in[n]$, 
\begin{align*}
\varepsilon_{p} & \ge\E\left[D_{\textup{KL}}\big(\hat{\pi}(\cdot|s_{i})\,\Vert\,P^{*}_\tau(\cdot|s_{i})\big)\right]\\
 & \ge2\E\left[\textup{TV}\left(\hat{\pi}(\cdot|s_{i}),P^{*}_\tau(\cdot|s_{i})\right)^{2}\right] &  & \text{Pinsker's inequality}\\
 & \ge2\E\left[\max_{a}\left|\hat{\pi}(a|s_{i})-P^{*}_\tau(a|s_{i})\right|^{2}\right].
\end{align*}
For each $a\in\mathcal{A}$, let us fit the 
action probability ${\pinn}(a|\cdot):\mS\rightarrow [0,1]$  using a similar Nearest Neighbor type algorithm as $\vnn$: 
\[
\pinn(a|s)=\frac{1}{\left|B_{j}\cap T\right|}\sum_{x\in B_{j}\cap T}\hat{\pi}(a|x), \quad \forall s\in B_{j}.
\]
Note that $\forall x \in T,$ $\forall a\in\mA,$ $\hat{\pi}(a|x)\in [0,1]$ and $P^{*}_\tau(a|s)\in[0,1]$. Applying a similar argument as above for the fitted action probability function $\pinn(a|\cdot)$ w.r.t.\ the squared error, we have
\begin{align*}
\E\left[\sup_{s\in S}\left|\pinn(a|s)-P^{*}_\tau(a|s)\right|^{2}\right]
    &\leq 2\Delta^2+\varepsilon_p+4N\exp(-2K\Delta^2)+2L^2_ph^2. 
\end{align*}
Choosing $\Delta=\sqrt{\varepsilon_p/2},$ we obtain that
\begin{align}
\E\left[\sup_{s\in S}\left|\pinn(a|s)-P^{*}_\tau(a|s)\right|^{2}\right]
    &\leq 2\varepsilon_p+4N\exp(-K\varepsilon_p)+2L^2_ph^2. \label{eq:pinn_error_bound}
\end{align}

Taking 
\begin{align}
    h=\min\Big\{\frac{\varepsilon_v}{L_v},\frac{\sqrt{\varepsilon_p/2}}{L_p}\Big\}, K=\max\Big\{\frac{V_{\max}^{2}}{2\varepsilon_{v}^{2}}\log\Big(\frac{4V_{\max}N}{\varepsilon_{v}}\Big),  \frac{1}{\varepsilon_p}\log\Big(\frac{4N}{\varepsilon_p}\Big) \Big\},
\end{align}
from Eqs (\ref{eq:vnn_error_bound})-(\ref{eq:pinn_error_bound}), we have the following bounds
\begin{align*}
   \E\sup_{s\in S} & \left|\vnn(s)-V^{*}(s)\right|\le 4\varepsilon_{v}, \\
   \E\sup_{s\in S} & \left|\pinn(a|s)-P^{*}_\tau(a|s)\right|^{2} \le 4\varepsilon_p.
\end{align*}
This proves the first inequality of the proposition. 

We now focus on the inequality for the policy function. By Jensen's inequality, we have
\begin{align*}
\sup_{s\in S}\E\left[\left|\pinn(a|s)-P^{*}_\tau(a|s)\right|^{2}\right] \leq \E\left[\sup_{s\in S}\left|\pinn(a|s)-P^{*}_\tau(a|s)\right|^{2}\right] \leq 4\varepsilon_p.
\end{align*}
This is equivalent to saying that for each $s\in S$,
\begin{align}
4\varepsilon_{p} & \ge\max_{a}\E\left[\left|\pinn(a|s)-P^{*}_\tau(a|s)\right|^{2}\right]\nonumber \\
 & \ge\frac{1}{\left|\mathcal{A}\right|}\E\left[\max_{a}\left|\pinn(a|s)-P^{*}_\tau(a|s)\right|^{2}\right].\label{eq:inf_bound}
\end{align}
On the other hand, for each $s\in \mS$, each $a\in \mA,$ we have the bound
\begin{align*}
P^{*}_\tau(a|s) & =\frac{\exp\left(Q^{*}(s,a)/\tau\right)}{\sum_{a'}\exp\left(Q^{*}(s,a)/\tau\right)}\\
 & \ge\frac{\exp\left(-V_{\max}/\tau\right)}{\left|\mathcal{A}\right|\exp\left(V_{\max}/\tau\right)}=:\alpha,
\end{align*}
where $\alpha>0$. Therefore, by the reverse Pinsker's inequality (cf.\ Appendix~\ref{sec:appendix_prelim})
we have 
\begin{align*}
D_{\textup{KL}}\big(\pinn(\cdot|s)\,\Vert\,P^{*}_\tau(\cdot|s)\big) & \le\frac{2}{\alpha}\textup{TV}\left(\pinn(\cdot|s),P^{*}_\tau(\cdot|s)\right)^{2}\\
 & \le\frac{\left|\mathcal{A}\right|^{2}}{\alpha}\max_{a}\left|\pinn(a|s)-P^{*}_\tau(a|s)\right|^{2},
\end{align*}
Combining with the bound (\ref{eq:inf_bound}), we obtain that 
\[
\E\left[D_{\textup{KL}}\big(\pinn(\cdot|s)\,\Vert\,P^{*}_\tau(\cdot|s)\big)\right]\le\underbrace{\frac{\left|\mathcal{A}\right|^{2}}{\alpha}\cdot\left|\mathcal{A}\right|\cdot 4}_{=:c}\cdot\varepsilon_{p}.
\]
This completes the proof of the second inequality.
\end{proof}
\section{Proof of Proposition~\ref{prop:random_policy}} \label{appendix_random_policy}

In the sequel, we use the shorthand $N\equiv N(h)$ and refer to each $B_j$ as a ball. 
For each integer $t\ge1$,
let $W_{t}$ be the number of balls visited up to time
$t$. Let $T\triangleq\inf\left\{ t\ge1:W_{t}=N\right\} $ be the
first time when all balls are visited. For each $w\in\{1,2,\ldots,N\}$,
let $T_{w}\triangleq \inf\left\{ t\ge1:W_{t}=w\right\} $ be the the first
time when the $w$-th ball is visited, and let $D_{w}\triangleq T_{w}-T_{w-1}$
be the time to visit the $w$ ball after $(w-1)$ balls have been
visited.  We use the convention that $T_{0}=D_{0}=0$. By definition,
we have $T=\sum_{w=1}^{N}D_{w}.$ 

When $w-1$ balls have been visited, the probability of visiting a \emph{new} ball is at least 
\begin{align*}
	&\min_{I\subseteq[N],\left|I\right|=N-w+1}\P\Big\{ s_{T_{w-1}+1}\in\bigcup_{i\in I}B_{i}|s_{T_{w-1}}\Big\}  \\
	=& \min_{I\subseteq[N],\left|I\right|=N-w+1}\sum_{i\in I}\P\left\{ s_{T_{w-1}+1} \in B_{i}|s_{T_{w-1}}\right\} \\
	\ge& (N-w+1)\min_{i\in[N]} \P \left\{ s_{T_{w-1}+1} \in B_{i}|s_{T_{w-1}}\right\} \\
	\ge& (N-w+1)\cdot \frac{c_0}{N},
\end{align*}
where the last inequality follows from the regularity assumption.
Therefore, the time to visit the $w$-th pair after $w-1$ pairs have been visited, $D_{w},$ is stochastically dominated by a geometric random
variable with mean at most $\frac{N}{(N-w+1)c_0}.$
It follows that 
\begin{align*}
	\E T & =\sum_{w=1}^{N}\E D_{w}\le\sum_{w=1}^{N}\frac{N}{(N-w+1)c_0} =O\left(\frac{N}{c_0}\log N\right).
\end{align*}
This prove that the expected time to sample a $(h,1)$-representative set is upper bounded by $O\left(\frac{N}{c_0}\log N\right)$.
Note that if the trajectory samples $K$ $(h,1)$-representative sets, then each ball must be visited at least $K$ times. Therefore, $K\cdot \E T$ gives an upper bound for the expected time to sample a $(h,K)$-representative set, hence
\begin{align*}
    B(\tau, \varepsilon_v, \varepsilon_p) = O\left(\frac{KN}{c_0}\log N\right).
\end{align*}
\section{Proof of Theorem~\ref{thm:instance_of_eis}}

We will reuse the notation introduced in the proof of Theorem~\ref{thm:eis_convergence}. We initialize the value model  $V_0(s)=0,$ $\forall s\in \mS$. Hence $\big\Vert V_{{0}}-V^{*}\big\Vert_{\infty}\leq V_{\max}.$ Consider the $l$-th iteration.
Let $\omega_{v}^{(l)}$ and $\omega_{p}^{(l)}$ denote the estimation errors for the model $f_{l-1}=(V_{l-1},\pi_{l-1})$, at the beginning of $l$-th iteration:
\begin{align*}
\omega_{v}^{(l)} & =\E\Big[\big\Vert V_{{l}}-V^{*}\big\Vert_{\infty}\Big],\\
\omega_{p}^{(l)} & =\sup_{s\in S}\E\Big[D_{\text{KL}}\big(\pi_{{l}}(\cdot|s)\Vert P_{\tau}^{*}(\cdot|s)\big)\Big].
\end{align*}

Let $S^{(l)}=\{s_{i}\}_{i=1}^{n_{l}}$
be the set of states visited by the random sampling policy during the $l$-th iteration. We require $S^{(l)}$ to be $(h^{(l)},K^{(l)})$-representative, where 
\begin{align}
    \xi^{(l)} &\triangleq \frac{V_{\max}\rho^l}{4} \label{eq:knn_parameter_1}\\
    h^{(l)}&=\min\Big\{\frac{\xi^{(l)}}{L_v},\frac{\sqrt{\xi^{(l)}/2}}{L_p}\Big\}\label{eq:knn_parameter_h} \\ 
    K^{(l)} &=\max\Big\{\frac{V_{\max}^{2}}{2\big(\xi^{(l)}\big)^{2}}\log\Big(\frac{4V_{\max}N(h^{(l)})}{\xi^{(l)}}\Big),  \frac{1}{\xi^{(l)}}\log\Big(\frac{4N(h^{(l)})}{\xi^{(l)}}\Big) \Big\}. \label{eq:knn_parameter_K}
\end{align}

We use $\mathcal{D}^{(l)}=\Big\{\big(s_{i},\hat{V}^{(l)}(s_{i}),\hat{\pi}^{(l}(\cdot|s_{i})\big)\Big\}_{i=1}^{n_l}$
to denote the set of training data generated by querying MCTS. Consider choosing the depth $H^{(l)}$ and simulation number $m^{(l)}$ parameters for MCTS at the $l$-th iteration as follows:
\begin{align}
    H^{(l)} & =\frac{\log \frac{\tau \rho}{16|\mA|}}{\log \gamma}, \label{eq:mcts_parameter_H}\\
    m^{(l)} & = c_1 \Big(\frac{\tau V_{\max}\rho^l}{16|\mA|}\Big)^{-2}, \label{eq:mcts_parameter_m}
\end{align}
where $c_1$ is a sufficiently large constant. By Theorem~\ref{thm:mcts_improvement}, for each query state $s_i$, the output $(\hat{V}(s_i),\hat{\pi}(\cdot|s_i))$ from MCTS satisfies
\begin{align*}
\mathbb{E}\Big[\big|\hat{V}(s_i)-V^*(s_i)|\big|\Big]\leq\frac{\rho}{4}\omega_v^{(l)},\: \textrm{  and  }\: \mathbb{E}\Big[D_{\mbox{KL}}\big(\hat{\pi}(\cdot|s_i)||P^*_\tau(\cdot|s)\big)\Big] \leq\frac{\rho}{4}\omega_v^{(l)}.
\end{align*}
That is, the improvement factors for the value function and policy, $\zeta_v$ and $\zeta_p$, of MCTS are follows:
\begin{align*}
    \zeta_v = \zeta_p = \frac{\rho}{4}.
\end{align*}

Note that the training set $\mathcal{D}^{(l)}$ have estimation error $\frac{\rho}{4}\omega_v^{(l)}$ for both value and policy, and  the sampled states $S^{(l)}$ of the training set are $(h^{(l)},K^{(l)})$-representative. By Proposition~\ref{prop:knn}, the output of nearest neighbor supervised learning at the end of $l$-th iteration satisfies the following generalization property:
\begin{align*}
\E\left[\left\Vert \vnn-V^{*}\right\Vert _{\infty}\right] & \le \rho \omega_v^{(l)},\\
\E\left[D_{\textup{KL}}\big(\pinn(\cdot|s)\,\Vert\,P^{*}(\cdot|s)\big)\right] & \le \frac{c \rho \omega_v^{(l)}}{4},\quad \forall s \in \mS.
\end{align*}

Therefore,
\begin{align*}
    \omega_v^{(l+1)} & \leq \rho \omega_v^{(l)}, \quad \mbox{and} \quad
    \omega_p^{(l+1)} \leq \frac{c}{4} \rho \omega_v^{(l)} 
\end{align*}
Since $\omega_v^{(1)}=V_{\max},$ we have 
\begin{align*}
    \omega_v^{(l+1)} & \leq V_{\max}\rho^l, \quad \mbox{and} \quad
    \omega_p^{(l+1)} \leq \frac{cV_{\max}}{4} \rho^l. 
\end{align*}
That is,
\begin{align*}
    &\E\Big[\big\Vert V_{{L}}-V^{*}\big\Vert_{\infty}\big]\leq V_{\max}\rho^l,\\
    & \E\Big[D_{KL}\big({\pi}_{L}(\cdot|s)\Vert P_\tau^{*}(\cdot|s)\big)\Big] \leq\frac{cV_{\max}}{4}\rho^l,\quad \forall s\in \mathcal{S}.
\end{align*}

During $l$-th iteration, the total sample complexity $M^{(l)}$ is given by $M^{(l)} =H^{(l)}m^{(l)}n_l.$ 
From Eqs.~(\ref{eq:knn_parameter_1})-(\ref{eq:mcts_parameter_m}), we have
\begin{align*}
    H^{(l)}&=c_2\log \frac{1}{\tau\rho},\\
    m^{(l)}&=\frac{c_3}{\tau^2\rho^{2l}},\\
    h^{(l)}&=c_4 \rho^l,\\
    K^{(l)}&= \frac{c_5}{\rho^{2l}}\log \frac{N(c_4 \rho^l)}{\rho^l},
\end{align*}
where $c_2,c_3,c_4,c_5$ are positive constants independent of $\rho$ and $l$. 
By Proposition~\ref{prop:random_policy}, we have 
\begin{align*}
    \E[n_l]&\leq O\Big(\frac{K^{(l)}N(h^{(l)})}{c_0}\log N(h^{(l)}) \Big)=c_6 K^{(l)}N(h^{(l)})\log N(h^{(l)}),
\end{align*}
where $c_6>0$ is a constant.

Following the argument in the proof of Proposition~\ref{prop:eis_sample_complexity}, we have: with probability at least $1-\delta,$
\begin{align*}
    \sum_{l=1}^{L}M^{(l)}&= \sum_{l=1}^{L}H^{(l)}m^{(l)}n_l\\
    &\leq \sum_{l=1}^{L}H^{(l)}m^{(l)}\E[n_l]e\log\frac{L}{\delta}\\
    &= \sum_{l=1}^{L} c_2 \log\frac{1}{\tau\rho} \cdot \frac{c_3}{\tau^2\rho^{2l}} \cdot c_6\frac{c_5}{\rho^{2l}}\log \frac{N(c_4 \rho^l)}{\rho^l}\cdot N(c_4\rho^l)\log N(c_4\rho^l)\cdot e\log\frac{L}{\delta} \\
    &=\sum_{l=1}^{L}c' \log \frac{1}{\tau\rho} \cdot \frac{1}{\tau^2\rho^{4l}}\cdot \log \frac{N(c_4 \rho^l)}{\rho^l}\cdot N(c_4\rho^l) \cdot \log N(c_4\rho^l)\cdot \log\frac{L}{\delta}.  
 \end{align*}
 Thus 
 \begin{align*}
  \sum_{l=1}^{L}M^{(l)}&= O\Big(L\cdot \log \frac{1}{\tau\rho}\cdot \frac{1}{\tau^2\rho^{4L}}\cdot \log \frac{N(\rho^L)}{\rho^L}\cdot N(\rho^L) \cdot \log N(\rho^L)\cdot \log\frac{L}{\delta}\Big).
\end{align*}

Given $0<\rho<1$, for $L=\Theta(\log \frac{1}{\varepsilon})$, i.e., $\rho^L\asymp\varepsilon$, we have 
\begin{align*}
  \sum_{l=1}^{L}M^{(l)} & = O\Big(\log \frac{1}{\varepsilon} \cdot \log \frac{1}{\tau\rho} \cdot \frac{1}{\tau^2\varepsilon^4}\cdot \log \frac{N(\varepsilon)}{\varepsilon}\cdot N(\varepsilon) \cdot \log N(\varepsilon)\cdot \log\frac{\log \frac{1}{\varepsilon}}{\delta}\Big).
\end{align*} 

In particular, if the state space $\mS$ is unit hypercube in $\mathbb{R}^d$, we have $N(\varepsilon)=O\big(\varepsilon^{-d}\big).$ Therefore,
\begin{align*}
  \sum_{l=1}^{L}M^{(l)} & = O\bigg( \log \frac{1}{\tau\rho}\cdot \frac{1}{\tau^2\varepsilon^{d+4}} \cdot \log \big(\frac{1}{\varepsilon}\big)^4 \cdot \log\frac{1}{\delta}\bigg). 
\end{align*} 

\section{Empirical Results: A Case-Study}
\label{app:empirical}
To understand how well the EIS method does for turn-based Markov games, we consider a simple game as a proof of concept. We shall design a simple non-deterministic game and apply our EIS framework. As mentioned in Section \ref{sec:game}, the sparse sampling oracle~\citep{kearns2002sparse} could be used for the improvement module in this case. This oracle is simple but suffices to convey the insights. In what follows, we shall demonstrate the effectiveness of EIS by comparing its final estimates of value function with the optimal one obtained via standard value iteration for games.

\medskip
\noindent{\bf Setup.} Consider a two-player turn-based Markov game $ (\mS_1, \mS_2,\mA_1, \mA_2, r, P, \gamma), $ where $\mS_1=[0.1,1.1]$ and $\mS_2=[-1.1,-0.1]$ are the set of states controlled by $\pone$ and $\ptwo,$ respectively; $\mA_1=\mA_2=\{0.1,0.2,0.3,0.4,0.5\}$ are the set of actions; $r(s,a)=3(|s|-0.5)^2-a$
is the reward received by $\pone$ when the corresponding player $I(s)$ takes action $a$ at state $s$. For each real number $u,$ define two clipping operators $\Pi_{\mS_{i}}(u)=\min\{\max\{\min{\mS_{i}},u\},\max{\mS_{i}}\},$ $i\in \{1,2\}.$ That is, $\Pi_{\mS_i}(u)$ projects $u$ to the state space of player $i.$ At state $s,$, upon taking an action $a$ by the corresponding player $I(s),$ the system transitions to next state $s'\sim \Pi_{\mS_{i}}\big(-|s|+\mathcal{N}(a,1)\big),$ where $\mathcal{N}(\cdot,\cdot)$ is the Guassian distribution, and $i=1$ if $I(s)=2,$ and $i=2$ if $I(s)=1.$ We consider the case $\gamma=0.8.$

Before proceeding, let us intuitively understand how the solution of this game should be. Recall that $\pone$ is the referenced reward maximizer. Since the reward $r(s,a)=3(|s|-0.5)^2-a$, by design, $\pone$ would prefer to stay at states that are far from $0.5$. That is, we expect the value function for $\pone$ in the range $[0.1,1.1]$ (recall that $\mS_1=[0.1,1.1]$) to be larger at states far from 0.5. On the contrary, as the reward minimizer, $\ptwo$ would like to stay at state $-0.5$ (recall that $\mS_2=[-1.1,-0.1]$) and potentially take large action $a$ to minimize the reward. As such, we expect the value function in $[-1.1, 0.1]$ to be small around state $-0.5$. We will confirm this intuition with both value iteration and our EIS method. 

\medskip
\noindent{\bf Algorithm.} We apply the EIS algorithm to learn the Nash equilibrium of the Markov game described above. Specifically, we use nearest neighbor regression as the supervised learning module and random sampling as the exploration module (cf. Section~\ref{sec:example})). As the Markov game is stochastic, we will use a variant of sparse sampling method~\cite{kearns2002sparse} as the improvement module. This algorithm is simple to describe and analyze, while suffices to convey essential insights.

\medskip
\noindent{\bf{Sparse Sampling Oracle.}} The sparse sampling algorithm can be viewed as a form of non-adaptive tree search for estimating the value of a given state. In particular, each node on the tree represents a state and each edge is labeled by an action and a reward. Starting from the root node, i.e., the queried state $s_0$ of the oracle, the tree is built in a simple manner: consider a node (state) $s$, for each action $a\in\mA$, call the generative model $C$ times on the state-action pair $(s,a)$ and  obtain $C$ children of the action $a$; the children nodes are the states returned by the generative model, and each edge is labeled by the action $a$ and the reward returned by the generative model. The process is repeated for all nodes of each level, and then moves on to the next level until reaching a depth of $H$. In essence, this process builds a $|\mA|C$-array tree of depth $H$. It represents a partial $H$-step look-ahead tree starting from the queried state $s_0$, and hence the term sparse sampling oracle. 

To obtain estimates for the value of the queried state, estimation from the supervised learning module are assigned to the leaf nodes at depth $H$. These values, together with the associated rewards on the edges, are then backed-up to find estimates of values for their parents, i.e., nodes at depth $H-1$.  
The backup is just a simple average over the children, followed by taking appropriate max or min operation depending on who is the acting player at this layer of the tree. The process is recursively applied from the leaves up to the root level to find estimates of $\hat{V}(s_0)$ for the root node $s_0$. 
In the experiments, we set $H=2,C=30.$

\begin{figure}[htp]
\centering
\includegraphics[width=3.2in]{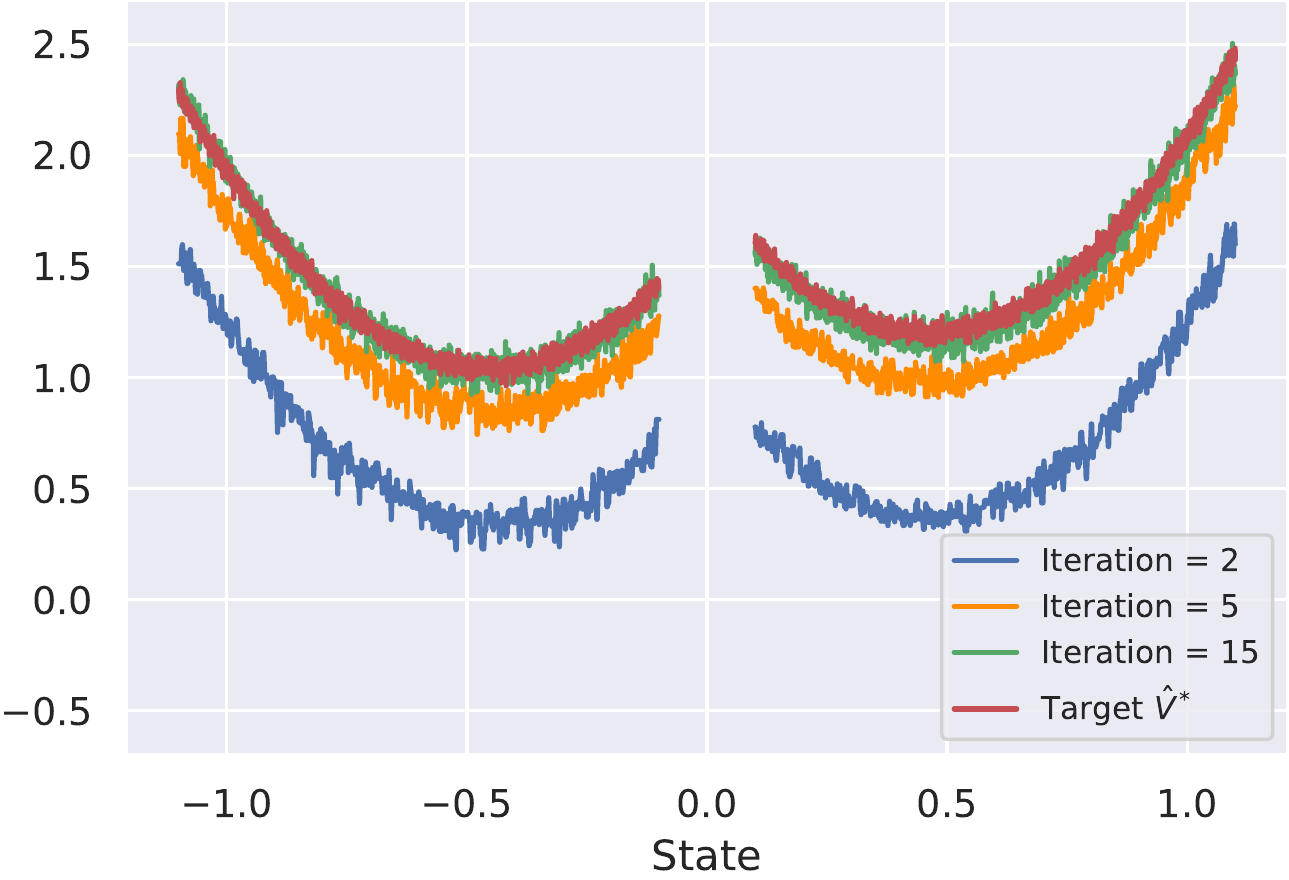}
\includegraphics[width=3.12in]{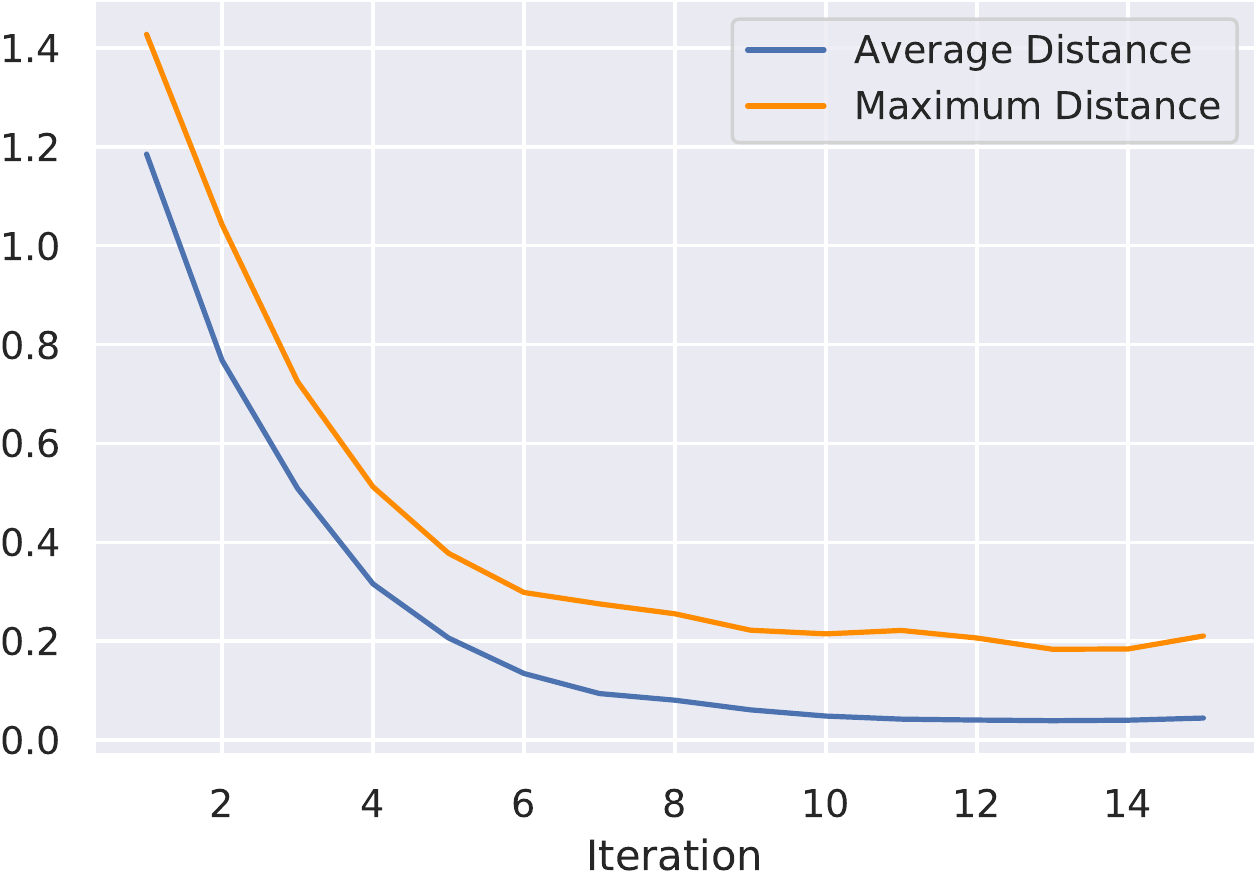}
\caption{Results of EIS for various iterations. {Left:} Approximate optimal $\hat{V}^*$ and the value function estimation $V_t$ of EIS obtained at various iterations. {Right:} Average distance and maximum distance between $\hat{V}^*$ and $V_t$ at each EIS iteration. }
\label{fig:eis_result}

\end{figure}

\medskip
\noindent{\bf Evaluation.} We first use approximate value iteration to compute the optimal value function $V^*$. The value iteration operator $\mathcal{T}$ is defined as follows:  
\[
(\mathcal{T}V)(s)=\begin{cases}
\max_{a\in\mA_{1}}\big\{ r(s,a)+\gamma\E[V(s')|s_0=s,a_0=a]\big\}, & \text{if }s\in\mS_{1}\\
\min_{a\in\mA_{2}}\big\{ r(s,a)+\gamma\E[V(s')|s_0=s,a_0=a]\big\}, & \text{if }s\in\mS_{2}
\end{cases}
\]
For the continuous game considered here, we discretize the state space of each player to be $1,500$ equally spaced states. The above value iteration operator is then applied to obtain an approximate optimal value function ${V}^*$. 
The approximate ${V}^*$ generated by $30$ iterations of value iteration is plotted in red in Figure~\ref{fig:eis_result}~(Left).
As expected, the result is consistent with our intuition that $\pone$ attempts to stay away from $0.5$ and $\ptwo$ tries to minimize the reward by staying around $-0.5$. 

Next, we evaluate our EIS method and compare the outputs against the approximate $\hat{V}^*$. In particular, let $V_t$ denote the value function obtained by EIS after $t$ iterations. Figure~\ref{fig:eis_result} (Left) shows the progress of EIS (i.e., $V_t$) at various iterations. It is clear that EIS gradually improves the estimation of value function. On the right of Figure~\ref{fig:eis_result}, we plot the average distance as well as the maximum distance between $V_t$ and $\hat{V}^*$ over $15$ iterations. We remark that there is an inevitable gap between $\hat{V}^*$ and $V^*$ due to discretization. As can be seen, the error of EIS output decays gradually.

\end{document}